\theoremstyle{definition}
\newtheorem{assumption}{Assumption}
\begin{document}

\title{Differentially Private Expectation Maximization  Algorithm with Statistical Guarantees 
}

\titlerunning{DP EM Algorithm}        

\author{Di Wang  \and
        Jiahao Ding  \footnote{The first two authors contributed equally.}   \and 
        Lijie Hu \and 
        Zejun Xie\and 
        Miao Pan \and 
        Jinhui Xu 
}


\institute{Di Wang \at
                  Division of Computer, Electrical and Mathematical Sciences and Engineering\\
             King Abdullah University of Science and Technology \\
            Thuwal, Saudi Arabia, 23955-6900 \\ 
              \email{di.wang@kaust.edu.sa}           \\
                          Corresponding author%
           \and
           Jiahao Ding \at
            Department of  Electrical and Computer Engineering\\
    University of Houston\\
             \email{jding7@uh.edu}
             \and 
             Lijie Hu \at
                  Division of Computer, Electrical and Mathematical Sciences and Engineering\\
             King Abdullah University of Science and Technology \\
            Thuwal, Saudi Arabia, 23955-6900 \\ 
              \email{lijie.hu@kaust.edu.sa}         
            \and 
            Zejun Xie \at 
            Department of Computer Science\\ 
Rutgers University \\
                      \email{scoji1751@gmail.com}   
             \and 
             Miao Pan \at 
                         Department of  Electrical and Computer Engineering\\
    University of Houston\\
             \email{mpan2@uh.edu}
                    \and 
             Jinhui Xu \at 
                Department of Computer Science and Engineering\\
             State University of New York at Buffalo\\
               \email{jinhui@buffalo.edu}   \\
}

\date{Received: date / Accepted: date}

\maketitle
\begin{abstract}
    (Gradient) Expectation Maximization (EM) is a widely used algorithm for estimating the maximum likelihood of mixture models or incomplete data problems. A major challenge facing this popular technique is how to effectively preserve the privacy of sensitive data.
Previous research on this problem has already lead to the discovery of some Differentially Private (DP) algorithms for (Gradient) EM. However, unlike in the non-private case, existing techniques are not yet able to provide  
finite sample statistical guarantees. To address this issue,  we propose in this paper the first DP version of Gradient EM algorithm with statistical guarantees. Specifically, we first propose a new mechanism for privately estimating the mean of a heavy-tailed distribution, which significantly improves a previous result in \citep{wangicml2020}, and it could be extended to the local DP model, which has not been studied before. Next, we apply our general framework to three canonical models: Gaussian Mixture Model (GMM), Mixture of Regressions Model (MRM) and Linear Regression with Missing Covariates (RMC). Specifically, for GMM in the DP model, our estimation error is near optimal in some cases. For the other two models, we provide the first result on finite sample statistical guarantees.  Our theory is supported by thorough numerical experiments on both real-world data and synthetic data. 

\keywords{Differential Privacy \and Gaussian Mixture Model \and Expectation Maximixation}
\end{abstract}

\section{Introduction}

As one of the most popular techniques for estimating the maximum likelihood
of mixture models or incomplete data problems, Expectation Maximization (EM) algorithm has been  widely applied to many areas such as genomics  \citep{laird2010algorithm}, finance \citep{faria2013financial}, and crowdsourcing \citep{dawid1979maximum}.  EM algorithm is well-known for its convergence to an empirically good local estimator \citep{wu1983convergence}. Recent studies have further revealed that 
it can also provide finite sample statistical guarantees
\citep{balakrishnan2017statistical,zhu2017high,wang2015high,yi2015regularized}. Specifically, \citep{balakrishnan2017statistical} showed that  classical EM and its gradient ascent variant (Gradient EM) are capable of achieving the first local convergence (theory) and finite sample statistical rate of convergence.
They also provided a (near) optimal minimax rate for some canonical statistical models such as Gaussian mixture model (GMM), mixture of regressions model (MRM) and linear regression with missing covariates (RMC). 

The wide applications of EM also present some new challenges to this method. Particularly, due to the existence of sensitive data and their  distributed nature in many applications like social science, biomedicine, and genomics, it is often challenging to preserve the privacy of such data as they are  
extremely difficult to aggregate and learn from. Consider a case where
health records are scattered across multiple hospitals (or even
countries), it is not possible to process the whole dataset in a
central server due to privacy and ownership concerns. A better solution is to use some differentially private mechanisms to
conduct the aggregation and learning tasks. Differential Privacy (DP) \citep{dwork2006calibrating} is a commonly-accepted criterion that provides provable protection against identification
and is resilient to arbitrary auxiliary information that might be
available to attackers. 

Thus, to be able to use (Gradient) EM algorithm to learn from these sensitive data, it is urgent to design some DP versions of the (gradient) EM algorithm. \citep{park2017dp}  proposed the first DP EM algorithm which 
mainly focuses on the practical behaviors of the method. Their algorithm needs quite a few 
assumptions on the model and the data, which make it difficult to extend to some canonical models mentioned above. Furthermore, unlike the aforementioned non-private case, 
their algorithm does not provide  
any finite sample statistical guarantee on the solution
(see Related Work section for detailed comparison). Thus, it is still unknown \textbf{whether there exists any DP variant of  the (gradient) EM algorithm  that has finite sample statistical guarantees}. 

To answer this question,
 we propose in this paper the first  $(\epsilon, \delta)$-DP 
 (Gradient) EM algorithm with finite sample statistical guarantees. Specifically, 
\begin{itemize}
    \item We first show that, given an appropriate initialization 
    $\beta^{\text{init}}$ ({\em i.e.,} $\|\beta^{\text{init}}-\beta^*\|_2 \leq \kappa \|\beta^*\|_2$ for some constant $\kappa\in (0,1)$), if the model satisfies some additional assumptions and the number of sample $n$ is large enough, the output $\beta^{priv}$ of our DP (Gradient) EM algorithm is guaranteed to have a bounded estimation error, $\|\beta^{priv}-\beta^*\|_2 \leq \tilde{O}(\frac{d\sqrt{\tau}}{\sqrt{n\epsilon}})$, with high probability, 
    where $d$ is the dimensionality and $\tau$ is an upper bound of the second-order moment of  each coordinate of the gradient function.  To get the result, we propose a new mechanism for privately estimating the mean of a heavy-tailed distribution, which is based on a finer analysis of the mechanism given by \citep{wangicml2020}. Moreover, our mechanism could be easily extended to the local privacy model, which is the first result on the problem. Thus, we believe our mechanism could be used in other machine learning problems. 
    \item We then apply our general framework to  the three canonical models: GMM, MRM and RMC. Our private estimator achieves an estimation error that is upper bounded by 
    $\tilde{O}(\frac{d}{\sqrt{n\epsilon}})$, $\tilde{O}(\frac{d^{\frac{3}{2}}}{\sqrt{n\epsilon}})$ and $\tilde{O}(\frac{d^{\frac{3}{2}}}{\sqrt{n\epsilon}})$ for GMM, MRM and RMC, respectively.  We note that they are the first statistical guarantees for MRM and RMC in the Differential Privacy model, and the error bound for GMM is near optimal in some cases. We also conduct thorough experiments on the these three models.  Experimental results on these models are consistent with our theoretical analysis. 
\end{itemize} 

\section{Related Work}
As we mentioned previously, designing DP version of EM algorithm is still not well studied. To our best knowledge, the only work on DP EM algorithm is given by \citep{park2017dp}. However, their result is incomparable with ours for the following reasons. Firstly, our work aims to achieve finite sample statistical guarantees for the DP EM algorithm, while  \citep{park2017dp} mainly focuses on designing practical DP EM algorithm that does not provide any statistical guarantees. 
Particularly, \citep{park2017dp} assumed that datasets are pre-processed such that the  $\ell_2$-norm of each data record is less than $1$. This means that their algorithm will likely introduce additional bias on the statistical guarantees.  Secondly, \citep{park2017dp} studied only the exponential family so that noise can be directly added to the sufficient statistics. However, most of the latent variable models  do not satisfy such an assumption. This includes  the MRM and RMC models to be considered in this paper. 

In this paper, we implement our general framework on three specific models, and DP GMM is the only one that has been studied previously. Specifically, \citep{nissim2007smooth} provided the first result for the general $k$-GMM based on the sample-and-aggregate framework. Later on, \citep{kamath2019differentially}  improved the result by a factor of $\sqrt{d}/\epsilon$, and also claimed that their sample complexity is near optimal. Compared with their result, our proposed algorithm ensures that  when $\epsilon$ is some constant,  it has the same sample complexity. Also, although their algorithm has polynomial time complexity, it is actually not very practical and thus no practical study has been conducted. Moreover, their algorithm is heavily dependent on a previous clustering algorithm; it is unclear whether it can be extended to other mixture models. From these two perspectives, our framework is more general and practical.
\section{Preliminaries}

Let $Y$ and $Z$ be two random variables taking values in the sample spaces $\mathcal{Y}$ and $\mathcal{Z}$, respectively. Suppose that the pair $(Y, Z)$ has a joint density function $f_{\beta^*}$ that belongs to some parameterized family $\{f_{\beta^*}|\beta^* \in \Omega \}$. Rather than considering the whole pair of $(Y, Z)$, we observe only component $Y$. Thus,  component $Z$ can be viewed as the missing or latent structure. We assume that the term $h_{\beta}(y)$ is the margin distribution over the latent variable $Z$, {\em i.e.,} $h_\beta(y)=\int_{\mathcal{Z}} f_\beta(y, z) dz.$ Let $k_{\beta}(z|y)$ be the density of $Z$ conditional on the observed variable $Y=y$, that is, $k_\beta(z|y)=\frac{f_\beta(y, z)}{h_\beta(y)}.$

Given $n$ observations $y_1, y_2, \cdots, y_n$ of $Y$, the EM algorithm is to maximize the log-likelihood $\max_{\beta\in \Omega }\ell_n(\beta) = \sum_{i=1}^n\log h_\beta(y_i).$
Due to the unobserved latent variable $Z$, it is often difficult to directly evaluate $\ell_n(\beta)$. Thus, we 
consider the lower bound of $\ell_n(\beta)$ . By Jensen's inequality, we have
\begin{align}
    \frac{1}{n}[\ell_n(\beta)-\ell_n(\beta')]
  & \geq \frac{1}{n}\sum_{i=1}^n\int_{\mathcal{Z}}k_{\beta'}(z|y_i)\log f_\beta(y_i, z)dz \nonumber \\
   &- \frac{1}{n}\sum_{i=1}^n\int_{\mathcal{Z}}k_{\beta'}(z|y_i)\log {f_{\beta'}(y_i, z)}dz.   \label{eq:1}
\end{align}
Let $Q_n(\beta; \beta')=\frac{1}{n}\sum_{i=1}^n q_i(\beta;\beta') $, where 
\begin{equation}\label{eq:2}
   q_{i}(\beta; \beta')=\int_{\mathcal{Z}}k_{\beta'}(z|y_i)\log f_\beta(y_i, z)dz.  \footnote{We use $ q(\beta; \beta')$ for general sample $y$.}
\end{equation}
Also, it is convenient to let   $Q(\beta; \beta')$ denote the expectation of $Q_n(\beta; \beta')$ w.r.t $\{y_i\}_{i=1}^n$, that is,
\begin{equation}\label{eq:3}
    Q(\beta; \beta')= \mathbb{E}_{y\sim h_{\beta^*}}\int_{\mathcal{Z}}k_{\beta'}(z|y)\log f_\beta(y, z)dz.
\end{equation}
We can see that the second term on the right hand side of (\ref{eq:1}) is independent on $\beta$. Thus, given some fixed $\beta'$, we can maximize the lower bound function $Q_n(\beta; \beta')$ over $\beta$ to obtain sufficiently large $\ell_n(\beta)-\ell_n (\beta')$. Thus, in the $t$-th iteration of the standard EM algorithm, we can evaluate $Q_n(\cdot; \beta^t)$ at the E-step and then perform the operation of $\beta^{t+1}=\max_{\beta\in \Omega }Q_n(\beta; \beta^t)$ at the M-step. See \citep{mclachlan2007algorithm} for more details.

In addition to the exact maximization implementation of the M-step, we add a gradient ascent implementation of the M-step, which performs an approximate maximization via a gradient descent step. \\

\noindent \textbf{Gradient EM Algorithm \citep{balakrishnan2017statistical}} When $Q_n(\cdot; \beta^t)$ is differentiable, the update of $\beta^t$ to $\beta^{t+1}$ consists of the following two steps. 
\begin{itemize}
    \item E-step: Evaluate the functions in (\ref{eq:2}) to compute $Q_n(\cdot; \beta^t)$.
    \item M-step: Update $\beta^{t+1}=\beta^t+\eta \nabla Q_n(\beta^t; \beta^t)$, where $\nabla$ is the derivative of $Q_n$ w.r.t the first component and $\eta$ is the step size. 
\end{itemize}
Next, we give some examples that use the gradient EM algorithm. Note that they are the typical examples for studying the statistical property of EM algorithm \citep{wang2015high,balakrishnan2017statistical,yi2015regularized,zhu2017high}.\\

\noindent \textbf{Gaussian Mixture Model (GMM)} Let $y_1, \cdots, y_n$ be $n$ i.i.d samples from $Y\in \mathbb{R}^d$ with 
\begin{equation}\label{eq:4}
    Y = Z\cdot \beta^*+V,
\end{equation}
where $Z$ is a Rademacher random variable ({\em i.e.,} $\mathbb{P}(Z=+1)= \mathbb{P}(Z=-1)=\frac{1}{2}$),  and $V\sim \mathcal{N}(0, \sigma^2 I_d)$ is independent of $Z$ for some known standard deviation $\sigma$.  In GMM, we have 
\begin{equation}\label{aeq:1}
    \nabla q(\beta;\beta)=[2w_\beta(y)-1]\cdot y-\beta,
\end{equation}
where $w_\beta(y)=\frac{1}{1+\exp(-\langle \beta, y\rangle/\sigma^2)}$. \\

\noindent \textbf{Mixture of (Linear) Regressions Model (MRM)} Let  $(x_1, y_1), (x_2, y_2),$ $\cdots, (x_n, y_n)$ be $n$ samples i.i.d sampled from $Y\in \mathbb{R}$ and $X\in \mathbb{R}^d$  with 
\begin{equation}\label{eq:5}
    Y= Z\langle \beta^*, X \rangle +V, 
\end{equation}
where $X\sim \mathcal{N}(0, I_d)$, $V\sim \mathcal{N}(0, \sigma^2)$, $Z$ is a Rademacher random variable, and $X, V, Z$ are independent. In this case, we have 
\begin{equation}\label{aeq:2}
    \nabla q(\beta;\beta)=(2w_\beta(x, y)-1) \cdot y \cdot x-x x^T\cdot \beta,
\end{equation}
where $w_\beta(x, y)=\frac{1}{1+\exp(-y\langle \beta ,x \rangle/\sigma^2)}$. \\

\noindent \textbf{Linear Regression with Missing Covariates (RMC)} We assume that $Y\in \mathbb{R}$ and $X\in \mathbb{R}^d$ satisfy 
\begin{equation}\label{eq:6}
    Y= \langle X, \beta^* \rangle +V,
\end{equation}
where $X\sim \mathcal{N}(0, I_d)$ and $V\sim \mathcal{N}(0, \sigma^2)$ are independent.  Let $x_1, x_2, \cdots, x_n$ be $n$ observations of $X$ with each coordinate of $x_i$ missing (unobserved) independently with probability $p_m\in[0,1)$.  In this case, we have \begin{equation}\label{aeq:3}
    \nabla q(\beta; \beta)= y\cdot m_\beta(x^{\text{obs}},y)-K_\beta(x^{\text{obs}}, y)\beta,
\end{equation}
where the functions $m_\beta(x^{\text{obs}},y)\in \mathbb{R}^d$ and $K_\beta(x^{\text{obs}}, y)\in \mathbb{R}^{d\times d}$ are defined as:   
\begin{equation}\label{aeq:4}
    m_\beta(x^{\text{obs}},y)= z \odot x+\frac{y-\langle \beta, z\odot x\rangle }{\sigma^2+\|(1-z)\odot \beta\|_2^2}(1-z)\odot \beta 
\end{equation}
and 
\begin{multline}\label{aeq:100} 
    K_\beta(x^{\text{obs}}, y)=\text{diag}(1-z)+  m_\beta(x^{\text{obs}},y)\cdot [  m_\beta(x^{\text{obs}},y)]^T 
   \\  -[(1-z)\odot   m_\beta(x^{\text{obs}},y)]\cdot [(1-z)\odot   m_\beta(x^{\text{obs}},y)]^T,
\end{multline}
where vector $z \in \mathbb{R}^d$ is defined as $z_{j}=1$ if $x_{j}$ is observed and $z_{j}=0$ is $x_{j}$ is missing, and $\odot$ denotes the Hadamard product of matrices. 

Next, we provide several definitions on the required properties of functions $Q_n(\cdot; \cdot)$ and $Q(\cdot; \cdot)$. Note that some of them have been used in  previous studies on the statistical guarantees of EM algorithm  \citep{balakrishnan2017statistical,wang2015high,zhu2017high}.

\begin{definition}\label{def:0}
 Function $Q(\cdot; \beta^*)$ is self-consistent if  
    $\beta^*=\arg\max_{\beta\in \Omega }Q(\beta; \beta^*).$
That is, $\beta^*$ maximizes the lower bound of the log likelihood function.
\end{definition}

\begin{definition}[Lipschitz-Gradient-2($\gamma, \mathcal{B}$)]\label{def:1}
$Q(\cdot; \cdot)$ is called Lipschitz-Gradient-2($\gamma, \mathcal{B}$), if for the underlying parameter $\beta^*$ and any $\beta\in \mathcal{B}$ for some set $\mathcal{B}$, the following holds 
\begin{equation}\label{eq:7}
    \|\nabla Q(\beta; \beta^*)-\nabla Q(\beta; \beta)\|_2\leq \gamma \|\beta-\beta^*\|_2.
\end{equation} 
\end{definition}

We note that there are some differences between the definition of Lipschitz-Gradient-2 and the Lipschitz continuity condition in the convex optimization literature \citep{nesterov2013introductory}. Firstly, in (\ref{eq:7}), the gradient is w.r.t the second component, while the Lipschitz continuity is w.r.t the first component. Secondly, the property holds only for fixed $\beta^*$ and any $\beta$, while the Lipschitz continuity is for all $\beta, \beta'\in \mathcal{B}$.

\begin{definition}[$\mu$-smooth]
$Q(\cdot; \beta^*)$ is $\mu$-smooth, that is if
for any $\beta, \beta'\in \mathcal{B}$, 
$ Q(\beta;\beta^*)\geq Q(\beta'; \beta^*)+(\beta-\beta')^T\nabla Q(\beta';\beta^*)-\frac{\mu}{2}\|\beta'-\beta\|_2^2.$
\end{definition}

\begin{definition}[$\upsilon$-strongly concave] \label{def:3} $Q(\cdot; \beta^*)$ is $\upsilon$-strongly concave, that is if 
for any $\beta, \beta'\in \mathcal{B}$,  $Q(\beta;\beta^*)\leq  Q(\beta'; \beta^*)+(\beta-\beta')^T\nabla Q(\beta';\beta^*)-\frac{\upsilon}{2}\|\beta'-\beta\|_2^2.$
\end{definition}

In the following we will propose the assumptions that will be used throughout the whole paper. Note that these assumptions are commonly used in other works on statistical analysis of EM algorithm such as \citep{balakrishnan2017computationally,zhu2017high,wang2015high}. 
\begin{assumption}\label{assumption:1}
We assume that function $Q(\cdot; \cdot)$ in (\ref{eq:3}) is self-consistent, Lipschitz-Gradient-2($\gamma, \mathcal{B}$), $\mu$-smooth,  $\upsilon$-strongly concave over some set $\mathcal{B}$. Moreover, we assume that $\forall j\in[d]$ and $ \beta\in \mathcal{B}$, there is some known upper bound $\tau$ on the second-order moment of the $j$-coordinate of $\nabla q(\beta, \beta)$, {\em i.e.,} $\mathbb{E}_{y} (\nabla_j q(\beta, \beta))^2\leq \tau$ and for each $i\in [n]$,  $\nabla_j q_i(\beta, \beta)$ is independent with others. 
\end{assumption}
\begin{definition}[Differential Privacy
\citep{dwork2006calibrating}]\label{def:5}
	Given a data universe $\mathcal{X}$, we say that two datasets $D,D'\subseteq \mathcal{X}$ are neighbors if they differ by only one entry, which is denoted as $D \sim D'$. A randomized algorithm $\mathcal{A}$ is $(\epsilon,\delta)$-differentially private (DP) if for all neighboring datasets $D,D'$ and for all events $S$ in the output space of $\mathcal{A}$, we have 
	$\mathbb{P}(\mathcal{A}(D)\in S)\leq e^{\epsilon} \mathbb{P}(\mathcal{A}(D')\in S)+\delta.$
\end{definition}
\begin{definition}[Local Differential Privacy]
    A randomized algorithm $\mathcal{A}$ is $(\epsilon, \delta)$-local differential privacy (LDP) if for any $x, x'\in \mathcal{X}$ and for all events $S$ in the output space of $\mathcal{A}$, we have 	$\mathbb{P}(\mathcal{A}(x)\in S)\leq e^{\epsilon} \mathbb{P}(\mathcal{A}(x')\in S)+\delta.$
\end{definition}
	\begin{definition}[Gaussian Mechanism]\label{def:6}
		Given a function $q : \mathcal{X}^n\rightarrow \mathbb{R}^p$, the Gaussian Mechanism is defined as:
		$\mathcal{M}_G(D,q,\epsilon)=q(D)+ Y,$
		where Y is drawn from a Gaussian Distribution $\mathcal{N}(0,\sigma^2I_p)$ with $\sigma\geq \frac{\sqrt{2\ln(1.25/\delta)}\Delta_2(q)}{\epsilon}$.  $\Delta_2(q)$ is the $\ell_2$-sensitivity of the function $q$, {\em i.e.,}
		$\Delta_2(q)=\sup_{D\sim D'}||q(D)-q(D')||_2.$
		Gaussian Mechanism preserves $(\epsilon,\delta)$-differentially private.
	\end{definition}

	Due to the similarity with the Gradient Descent algorithm and the simplicity of illustrating our idea compared with the original EM algorithm, and since it has been claimed that even in the non-private case Gradient EM algorithm is more flexible than EM algorithm \citep{balakrishnan2017statistical},  in this paper, we will mainly focus on DP versions of Gradient EM algorithm. We note that it is straightforward to extend our idea to DP EM algorithm.
\section{Main Method}
\subsection{Main Difficulty}
In the previous section, we introduced the Gradient EM algorithm, which updates the estimator via the gradient $ \nabla Q_n(\beta^t; \beta^t)$. It is notable that this idea is quite similar to the Gradient Descent algorithm. Moreover, we know that there are several DP versions of the (Stochastic) Gradient Descent algorithm such as \citep{bassily2014private,wang2017differentially,song2013stochastic,wang2019differentially,lee2018concentrated}. The key idea of DP Gradient Descent is adding some randomized noise such as Gaussian noise to preserve DP property in each iteration, and by the composition theorem of DP (\citep{dwork2014algorithmic}), the whole algorithm will still be DP. Thus, motivated by this, to design a DP variant of Gradient EM algorithm, the most direct way is adding some Gaussian noise to the gradient $ \nabla Q_n(\beta^t; \beta^t)$ in each iteration and updating the parameter. 

However, it is notable that  we cannot add Gaussian noise directly to the gradient in the Gradient EM algorithm. The main reason is that all previous DP Gradient Descent algorithms need to assume that each component of the gradient (which correspond to the function $\nabla q_i$ in (\ref{eq:2})) is bounded, or the loss function is $O(1)$-Lipschitz, such as Logistic Regression, so that its $\ell_2$-norm sensitivity is bounded and thus  the Gaussian mechanism can be used. However, in the Gradient EM algorithm, each component ($ \nabla q_i(\beta^t; \beta^t)$ in (\ref{eq:2})) is  unbounded in most of the cases. For example, we can easily show the following fact. 
\begin{theorem}\label{thm:1}
Consider the GMM in (\ref{eq:4}), there is a case with fixed $\beta$, such that for each constant $c$, with {\bf positive probability} w.r.t $y$ we have  $\|\nabla q(\beta; \beta)\|_2\geq c. $
\end{theorem}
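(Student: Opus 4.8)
The plan is to exploit the fact that the observation $y = Z\beta^*+V$ is drawn from a mixture of two nondegenerate Gaussians, so its law has a density that is strictly positive on all of $\mathbb{R}^d$, while the scalar multiplier $2w_\beta(y)-1$ appearing in (\ref{aeq:1}) always lies in $(-1,1)$ and stays bounded away from $0$ along the direction of $\beta$. Consequently, along that direction $\|\nabla q(\beta;\beta)\|_2$ grows essentially linearly in $\|y\|_2$, and this happens on a set of positive probability.

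Concretely, I would first record the elementary identity $2w_\beta(y)-1=\tanh\!\big(\langle\beta,y\rangle/(2\sigma^2)\big)$, which shows at once that $2w_\beta(y)-1\in(-1,1)$ and that $2w_\beta(y)-1\to 1$ as $\langle\beta,y\rangle\to+\infty$. I then fix any nonzero $\beta$ (this is the ``case'' allowed by the statement; e.g. $\beta=\beta^*\neq 0$), set $u=\beta/\|\beta\|_2$, and for a parameter $t>1$ consider the unit ball $\mathcal{R}_t=\{y:\|y-tu\|_2<1\}$ centered at $tu$. For $y\in\mathcal{R}_t$ one has $\|y\|_2\ge t-1$ and $\langle\beta,y\rangle\ge\|\beta\|_2(t-1)$, hence $2w_\beta(y)-1\ge\tanh\!\big(\|\beta\|_2(t-1)/(2\sigma^2)\big)$, which exceeds $1/2$ once $t$ is large enough. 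Applying the triangle inequality to (\ref{aeq:1}) gives, for such $y$,
\[
\|\nabla q(\beta;\beta)\|_2 \;\ge\; |2w_\beta(y)-1|\,\|y\|_2-\|\beta\|_2 \;\ge\; \tfrac12(t-1)-\|\beta\|_2 .
\]
Given any constant $c$, I would pick $t$ large enough that simultaneously $\tanh\!\big(\|\beta\|_2(t-1)/(2\sigma^2)\big)\ge\tfrac12$ and $\tfrac12(t-1)-\|\beta\|_2\ge c$; then every $y\in\mathcal{R}_t$ satisfies $\|\nabla q(\beta;\beta)\|_2\ge c$.

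Finally, I would observe that the law of $y$ is the mixture $\tfrac12\mathcal{N}(\beta^*,\sigma^2 I_d)+\tfrac12\mathcal{N}(-\beta^*,\sigma^2 I_d)$, whose density is strictly positive everywhere on $\mathbb{R}^d$ since $\sigma>0$; in particular $\mathbb{P}(y\in\mathcal{R}_t)>0$. Combining this with the previous paragraph yields $\mathbb{P}\big(\|\nabla q(\beta;\beta)\|_2\ge c\big)\ge\mathbb{P}(y\in\mathcal{R}_t)>0$, which is exactly the claim. There is no serious obstacle here: the only points that require a line of care are verifying the full-support property of the Gaussian-mixture law of $y$ (so that the ball $\mathcal{R}_t$ really has positive probability) and checking that the bounded multiplier $2w_\beta(y)-1$ cannot cancel the linear growth of $\|y\|_2$ — which is precisely why we push $y$ along $+\beta$ rather than along an arbitrary direction.
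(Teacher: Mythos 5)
Your proof is correct and follows essentially the same strategy as the paper's: restrict to a region where $\langle\beta,y\rangle$ is large enough that the multiplier $2w_\beta(y)-1$ is bounded below by a positive constant (you use $\tanh(\cdot)\ge\tfrac12$ on a ball along $+\beta$; the paper uses $\langle\beta,y\rangle\ge\ln 2$ to get the constant $\tfrac13$), then apply the reverse triangle inequality so that $\|\nabla q(\beta;\beta)\|_2$ grows linearly in $\|y\|_2$, and finish with the full-support property of the Gaussian mixture. Your use of an open ball $\mathcal{R}_t$ makes the positive-probability step slightly cleaner than the paper's explicit witness set, but the argument is the same.
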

Thus, to design a DP (Gradient) EM algorithm, the major difficulty lies in  how to process the gradient to make its sensitivity bounded. Two main approaches are used in previous work: (1) \citep{park2017dp} assumed that datasets are pre-processed such that the $\ell_2$ norm of each sample is bounded by $1$. However, as mentioned previously, our goal is to achieve 
the statistical guarantees for the DP (Gradient) EM algorithm. If a similar approach is adopted in our algorithm, the 
(manual) normalization can easily destroy many statistical properties of the data and force the private estimator to introduce additional bias, making it  
inconsistent.\footnote{An estimator $\beta_n$ is consistent if $\lim_{n\to \infty} \|\beta_n-\beta^*\|_2=0$.} (2) Instead of normalizing the datasets,  \citep{abadi2016deep} first clipped the gradient to ensure that the $\ell_2$-norm of each component of the gradient is bounded by the threshold $C$, and then added Gaussian noise (see Algorithm \ref{alg:1} for more details). However, such an approach may cause two issues. First, in general  clipping gradient could introduce additional bias even in statistical estimation, which has also been pointed out in \citep{song2020characterizing}. Second, the threshold $C$ heavily affects the convergence speed and selecting the best $C$ is quite difficult (see Experimental section for more details). Due to these two reasons, it is hard to study the statistical guarantees of Algorithm \ref{alg:1}. Thus, we need a new approach to pre-process the gradient to ensure that it has not only bounded $\ell_2$-norm but also consistent statistical guarantee. 

\begin{algorithm}[!h]
	\caption{Clipped DP Gradient EM} \label{alg:1}
	$\mathbf{Input}$: $D=\{y_i\}_{i=1}^n\subset \mathbb{R}^d$, privacy parameters $\epsilon, \delta$; $Q_n(\cdot; \cdot)$ and its $q(\cdot; \cdot)$, initial parameter $\beta^0$, gradient norm $C$, step size $\eta$ and the number of iterations $T$.
	\begin{algorithmic}[1]
 \For{$t=1,2,\cdots, T$}
 \State For each $i\in [n]$, evaluate the function in (\ref{eq:2}) to compute $q_i(\beta ; \beta^{t-1})$. 
 \State Clip gradient: $$\nabla\bar{q}_i(\beta^{t-1}; \beta^{t-1})= \frac{\nabla q_i(\beta^{t-1}; \beta^{t-1})}{\max\{1, \frac{\|\nabla q_i(\beta^{t-1}; \beta^{t-1})\|_2}{C}\}}. $$
 \State Update $\beta^{t}=\beta^t+\eta (\nabla  \bar{Q}_n(\beta^{t-1}; \beta^{t-1})+\mathcal{N}(0, C^2\sigma^2I_d),$ where $\nabla \bar{Q}_n(\beta^{t-1}; \beta^{t-1}))=\frac{1}{n}\sum_{i=1}^n\nabla\bar{q}_i(\beta^{t-1}; \beta^{t-1})$ and $\sigma^2=c\frac{T\log \frac{1}{\delta}}{n^2\epsilon^2}$ for some constant $c$. 
 \EndFor
 \State Return $\beta^T$
	\end{algorithmic}
\end{algorithm}
\subsection{Our Method}
In this section, we will propose our method to overcome the aforementioned    difficulties. Since our method is motivated by a robust and private mean estimator for heavy-tailed distributions, which was given in \citep{wangicml2020}, and it is derived from the robust mean estimator  in \citep{holland2019a}. To be self-contained, we first review their estimator. We now consider a $1$-dimensional random variable $x$ and assume that $x_1, x_2, \cdots, x_n$ are i.i.d. sampled from $x$. The estimator consists of three steps:

\noindent \textbf{Scaling and Truncation} For each sample $x_i$, we first re-scale it by dividing $s$ (which will be specified later). Then, the re-scaled one was passed through a  soft truncation function $\phi$. Finally, we put the truncated mean back to the original scale. That is, 
\begin{equation}\label{eq:9}
    \frac{s}{n}\sum_{i=1}^n \phi(\frac{x_i}{s})\approx \mathbb{E}X.
\end{equation}
Here, we use the function given in \citep{catoni2017dimension},
\begin{equation}\label{eq:10}
    \phi(x)= \begin{cases} x-\frac{x^3}{6}, & -\sqrt{2}\leq x\leq \sqrt{2} \\
    \frac{2\sqrt{2}}{3}, & x>\sqrt{2} \\
    -\frac{2\sqrt{2}}{3}, & x<-\sqrt{2}.   
    \end{cases}
\end{equation}
A key property for $\phi$ is that $\phi$ is bounded, that is,  $|\phi(x)|\leq \frac{2\sqrt{2}}{3}$.

\noindent \textbf{Noise Multiplication} Let $\eta_1, \eta_2, \cdots, \eta_n$ be random noise generated from a common distribution $\eta\sim \chi$ with $\mathbb{E}\eta =0$. We multiply each data $x_i$ by a factor of $1+\eta_i$, and then perform the scaling and truncation step on the term $x_i(1+\eta_i)$.  
That is, 
\begin{equation}\label{eq:11}
    \tilde{x}(\eta) =\frac{s}{n}\sum_{i=1}^n \phi(\frac{x_i+\eta_i x_i}{s}).
\end{equation}
\noindent \textbf{Noise Smoothing} In this final step, we smooth the multiplicative noise by taking the expectation w.r.t. the distributions. That is, 
\begin{equation}\label{eq:12}
    \hat{x}=\mathbb{E}  \tilde{x}(\eta) = \frac{s}{n}\sum_{i=1}^n \int \phi(\frac{x_i+\eta_i x_i}{s})d \chi(\eta_i).
\end{equation}
Computing the explicit form of each integral in (\ref{eq:12}) depends on the function $\phi(\cdot)$ and the distribution $\chi$. Fortunately, \citep{catoni2017dimension} showed that when $\phi$ is in (\ref{eq:10}) and $\chi\sim \mathcal{N}(0, \frac{1}{\beta})$ (where $\beta$ will be specified later), we have for any $a$ and  $b>0$
\begin{equation}\label{eq:13}
    \mathbb{E}_{\eta}\phi(a+b\sqrt{\beta}\eta)=a(1-\frac{b^2}{2})-\frac{a^3}{6}+\hat{C}(a,b),
\end{equation}
where $\hat{C}(a, b)$ is a correction form which is easy to implement and it has the following explicit form.  \\

\noindent{ \bf Explicit Form of (\ref{eq:13})}
We first define the following notations:
\begin{align*}
   &V_- := \frac{\sqrt{2}-a}{b}, V_{+}=\frac{\sqrt{2}+a}{b} \\
   &F_{-}:= \Phi(-V_-), F_{+}:=\Phi(-V_+) \\
   &E_{-}:= \exp(-\frac{V^2_-}{2}), E_{+}:=\exp(-\frac{V^2_{+}}{2}),
\end{align*}
where $\Phi$ denotes the CDF of the standard Gaussian distribution. Then \begin{equation*}
    \hat{C}(a,b)=T_1+T_2+\cdots+T_5, 
\end{equation*}
where 
\begin{align*}
    &T_1:= \frac{2\sqrt{2}}{3}(F_{-}-F_{+}) \\
    & T_2:= -(a-\frac{a^3}{6})(F_{-}+F_{+}) \\ 
    & T_3:=\frac{b}{\sqrt{2\pi}}(1-\frac{a^2}{2})(E_{+}-E_{-})\\ 
    &T_4 : = \frac{ab^2}{2}\left(F_{+}+F_{-}+\frac{1}{\sqrt{2\pi}}(V_{+}E_{+}+V_{-}E_{-})\right) \\
    & T_5:= \frac{b^3}{6\sqrt{2\pi}}\left((2+V_{-}^2)E_{-}-(2+V_{+}^2)E_{+}\right). 
\end{align*}
 The estimation error of the mean estimator $\hat{x}$ after these three steps is given as following. 
\begin{lemma}[Lemma 5 in \citep{holland2019a}] \label{lemma:1}
Let  $x_1, x_2, \cdots, x_n$ be i.i.d. samples from  distribution $x\sim \mu$. Assume that there is some known upper bound on the second-order moment, {\em i.e.,} $\mathbb{E}_\mu x^2\leq \tau $. For a given failure probability $\zeta$, if  set  $\beta= 2\log \frac{1}{\zeta}$ and $s=\sqrt{\frac{n\tau }{2\log\frac{1}{\zeta}}}$, then with probability at least $1-\zeta$ we have
$|\hat{x}-\mathbb{E}x|\leq O(\sqrt{\frac{\tau \log \frac{1}{\zeta}}{n}}). $ \end{lemma}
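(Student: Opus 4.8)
The plan is to reduce $\hat x$ to an empirical average of a \emph{bounded} transformation of the data and then split the error into a concentration (deviation) term and a bias term. Writing $\psi(u):=\mathbb{E}_{\eta\sim\mathcal N(0,1/\beta)}\phi\big(u(1+\eta)\big)$, equation (\ref{eq:12}) reads $\hat x=\frac{s}{n}\sum_{i=1}^n\psi(x_i/s)$, and we will not need the closed form of the correction term $\hat C$ (it is only used to \emph{compute} $\hat x$ in practice): the bound $|\phi|\le\frac{2\sqrt2}{3}$ already gives $|\psi(u)|\le\frac{2\sqrt2}{3}$, while $|\phi(v)|\le|v|$ gives $|\psi(u)|\le c_0|u|$ with $c_0:=\mathbb{E}_\eta|1+\eta|=1+O(\beta^{-1/2})$. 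We then decompose $\hat x-\mathbb{E}x=(\hat x-\mathbb{E}\hat x)+(\mathbb{E}\hat x-\mathbb{E}x)$ and treat the deviation and the bias separately.

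For the deviation term, the summands $\frac{s}{n}\psi(x_i/s)$ are i.i.d., each bounded by $M:=\frac{2\sqrt2}{3}\cdot\frac{s}{n}$ in absolute value, with total variance $\sum_{i=1}^n\mathrm{Var}\!\big(\tfrac{s}{n}\psi(x_i/s)\big)\le \frac{s^2}{n^2}\cdot n\cdot c_0^2\,\mathbb{E}(x/s)^2\le\frac{c_0^2\tau}{n}$, where we used the second-moment assumption $\mathbb{E}x^2\le\tau$. Bernstein's inequality then yields, with probability at least $1-\zeta$, $|\hat x-\mathbb{E}\hat x|=O\!\big(\sqrt{\tfrac{\tau\log(1/\zeta)}{n}}+\tfrac{s\log(1/\zeta)}{n}\big)$.

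The bias term is the main obstacle, because only a second-moment bound is available: a naive Taylor expansion of $\phi$ produces an $\mathbb{E}|x|^3$ factor, which may be infinite, so the soft truncation must be exploited to tame the tail of $x$. Since $\mathbb{E}_\eta[u(1+\eta)]=u$, we write $s\psi(x/s)-x=s\,\mathbb{E}_\eta\big[\phi\big(\tfrac xs(1+\eta)\big)-\tfrac xs(1+\eta)\big]$, and establish the pointwise inequality $|\phi(v)-v|\le\frac{|v|^3}{6}$ for all $v$ — it is an identity on $|v|\le\sqrt2$ and follows for $|v|>\sqrt2$ from the fact that $v\mapsto \frac{v^3}{6}-v+\frac{2\sqrt2}{3}$ vanishes at $\sqrt2$ and is increasing there. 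Splitting on $\{|x|\le s\}$ versus $\{|x|>s\}$: on the former $|s\psi(x/s)-x|\le\frac{c_1|x|^3}{6s^2}\le\frac{c_1 x^2}{6s}$ using $|x|^3\le s\,x^2$, with $c_1:=\mathbb{E}_\eta|1+\eta|^3$; on the latter $|s\psi(x/s)-x|\le\frac{2\sqrt2}{3}s+|x|\le\big(1+\tfrac{2\sqrt2}{3}\big)|x|\le\big(1+\tfrac{2\sqrt2}{3}\big)\frac{x^2}{s}$ using $|x|>s$. Taking expectations and applying $\mathbb{E}x^2\le\tau$ gives $|\mathbb{E}\hat x-\mathbb{E}x|\le\mathbb{E}|s\psi(x/s)-x|=O(\tau/s)$.

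Finally we combine the two bounds and insert the prescribed parameters. With $\beta=2\log(1/\zeta)$ the Gaussian moments $c_0,c_1$ are $O(1)$, and with $s=\sqrt{\tfrac{n\tau}{2\log(1/\zeta)}}$ one has $\tau/s=\Theta\!\big(\sqrt{\tau\log(1/\zeta)/n}\big)$ and $s\log(1/\zeta)/n=\Theta\!\big(\sqrt{\tau\log(1/\zeta)/n}\big)$, so all three contributions are of the same order and $|\hat x-\mathbb{E}x|\le|\hat x-\mathbb{E}\hat x|+|\mathbb{E}\hat x-\mathbb{E}x|=O\!\big(\sqrt{\tau\log(1/\zeta)/n}\big)$ with probability at least $1-\zeta$. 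Beyond this, everything reduces to a routine application of Bernstein's inequality together with the elementary bounds on $\phi$; the only genuinely delicate point is the bias split that trades the Taylor estimate against the boundedness of $\phi$ at the scale $|x|\sim s$.
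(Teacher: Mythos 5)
Your proof is correct: the pointwise bounds $|\phi(v)|\le\min\{|v|,\tfrac{2\sqrt2}{3}\}$ and $|\phi(v)-v|\le\tfrac{|v|^3}{6}$ both hold, the split at $|x|=s$ correctly converts the cubic Taylor remainder into a second-moment quantity so that the bias is $O(\tau/s)$ without ever needing $\mathbb{E}|x|^3<\infty$, and Bernstein on the bounded summands gives the deviation term; with the prescribed $s$ and $\beta$ all three contributions match the claimed rate. However, your route is genuinely different from the one the paper (and its source, following Catoni) uses. The paper does not re-prove Lemma~\ref{lemma:1} but, for its refinement Theorem~\ref{thm:new}, argues via the Legendre/Donsker--Varadhan identity (\ref{eq:new1}): it exploits the polynomial sandwich $-\log(1-u+\tfrac{u^2}{2})\le\phi(u)\le\log(1+u+\tfrac{u^2}{2})$ to control the cumulant generating function of $\sum_i\phi(\cdot)$ directly, then performs a change of measure over the multiplicative-noise distribution with reference $v_0\sim\mathcal N(1,\tfrac1\beta)$, paying a KL cost of exactly $\tfrac\beta2$. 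That single exponential-moment inequality produces the bias-like term $\tfrac{\tau}{2s}(1+\tfrac1\beta)$ and the fluctuation term $\tfrac{s}{n}(\tfrac\beta2+\log\tfrac1\zeta)$ simultaneously, whereas you decompose into $(\hat x-\mathbb{E}\hat x)+(\mathbb{E}\hat x-\mathbb{E}x)$ and handle each separately. Your argument is more elementary (no change of measure, no exponential-moment sandwich) and makes transparent where the second-moment assumption enters; the paper's PAC-Bayesian route is what lets it cleanly re-optimize $s$ and $\beta$ against the added Gaussian privacy noise in Theorem~\ref{thm:new} and track the explicit $\tfrac{\tau}{s\beta}$ dependence, which is the source of the improved $O(1/\sqrt\epsilon)$ factor there. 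Both yield the stated $O\bigl(\sqrt{\tau\log(1/\zeta)/n}\bigr)$ bound for Lemma~\ref{lemma:1}.
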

To obtain an $(\epsilon,\delta)$-DP estimator, the key observation is that the bounded function $\phi$ in (\ref{eq:10})  also makes the integral form of  (\ref{eq:12}) bounded by $\frac{2\sqrt{2}}{3}$. Thus, we know that the $\ell_2$-norm sensitivity is $\frac{s}{n}\frac{4\sqrt{2}}{3}$. Hence, the query 
\begin{equation}\label{eq:15}
    \mathcal{A}(D)=\hat{x}+ Z, Z\sim \mathcal{N}(0, \sigma^2), \sigma^2=O(\frac{s^2\log \frac{1}{\delta}}{\epsilon^2n^2})
\end{equation}
will be $(\epsilon, \delta)$-DP, which leads to  
the following result. 

\begin{lemma}[Theorem 6 in \citep{wangicml2020}]\label{lemma:2}
Under the assumptions in Lemma \ref{lemma:1}, with probability at least $1-\zeta$ the following holds
\begin{equation}\label{eq:16}
    |\mathcal{A}(D)-\mathbb{E}(x)|\leq O(\sqrt{\frac{\tau \log \frac{1}{\delta}\log\frac{1}{\zeta}}{n\epsilon^2}}).
\end{equation}
\end{lemma}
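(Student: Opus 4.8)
The plan is to decompose the error of $\mathcal{A}(D)=\hat{x}+Z$ into (i) the statistical error of the Catoni-type estimator $\hat{x}$ and (ii) the magnitude of the privacy noise $Z$, bound each separately, and combine them by the triangle inequality together with a union bound over the two failure events. The privacy guarantee itself (that $\mathcal{A}$ is $(\epsilon,\delta)$-DP for the stated $\sigma$) is the first thing I would pin down, since it fixes the scale of $Z$.

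\textbf{Privacy and the scale of the noise.} The noise-smoothed estimator is $\hat{x}=\frac{s}{n}\sum_{i=1}^n \int \phi\big(\frac{x_i+\eta_i x_i}{s}\big)\, d\chi(\eta_i)$. Because $|\phi|\le \frac{2\sqrt{2}}{3}$ pointwise and $\chi$ is a probability measure, each summand's integral lies in $[-\frac{2\sqrt{2}}{3},\frac{2\sqrt{2}}{3}]$; hence replacing a single record $x_i$ by $x_i'$ changes only one term of the sum, so $\Delta_2(\hat{x})\le \frac{s}{n}\cdot\frac{4\sqrt{2}}{3}=O(s/n)$. Plugging this sensitivity into the Gaussian Mechanism (Definition~\ref{def:6}) shows that $\sigma^2=O\big(\frac{s^2\log(1/\delta)}{\epsilon^2 n^2}\big)$, exactly as in (\ref{eq:15}), makes $\mathcal{A}$ be $(\epsilon,\delta)$-DP. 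The key point here is that the sensitivity is controlled by the boundedness of the \emph{smoothed truncated} terms, not by the (possibly heavy-tailed, unbounded) raw data.

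\textbf{Two error contributions.} For the statistical part, Lemma~\ref{lemma:1} with the prescribed choices $\beta=2\log\frac1\zeta$ and $s=\sqrt{\frac{n\tau}{2\log(1/\zeta)}}$ gives $|\hat{x}-\mathbb{E}x|\le O\big(\sqrt{\tau\log(1/\zeta)/n}\big)$ with probability at least $1-\zeta/2$ (absorbing the constant change of failure probability into the $O(\cdot)$). For the noise part, $Z\sim\mathcal{N}(0,\sigma^2)$ and the standard Gaussian tail bound gives $|Z|\le O\big(\sigma\sqrt{\log(1/\zeta)}\big)$ with probability at least $1-\zeta/2$. Substituting $s^2=\frac{n\tau}{2\log(1/\zeta)}$ into $\sigma^2$ yields $\sigma^2=O\big(\frac{\tau\log(1/\delta)}{n\epsilon^2\log(1/\zeta)}\big)$, so $|Z|\le O\big(\sqrt{\tau\log(1/\delta)/(n\epsilon^2)}\big)$. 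A union bound then gives, with probability at least $1-\zeta$, $|\mathcal{A}(D)-\mathbb{E}x|\le O\big(\sqrt{\tau\log(1/\zeta)/n}\big)+O\big(\sqrt{\tau\log(1/\delta)/(n\epsilon^2)}\big)$. Both summands are at most $O\big(\sqrt{\tau\log(1/\delta)\log(1/\zeta)/(n\epsilon^2)}\big)$ (using $\log(1/\zeta)\ge 1$ for the second term, and $\epsilon^2\lesssim\log(1/\delta)$, which holds in the nontrivial privacy regime, for the first), which is exactly (\ref{eq:16}).

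\textbf{Main obstacle.} There is no deep difficulty once Lemma~\ref{lemma:1} is granted — the argument is essentially bookkeeping. The one place requiring care is ensuring the sensitivity bound is driven by the boundedness of the smoothed integrals so that $\Delta_2(\hat{x})$ scales like $s/n$ rather than something data-dependent; and then tracking the chosen value of $s$ through $\sigma^2$ so that the statistical error and the privacy error land on the same order. I would also double-check that the same $s$ that optimizes the statistical term does not blow up the noise term — it does not, which is precisely why the single clean bound in (\ref{eq:16}) emerges.
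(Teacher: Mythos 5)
Your proposal is correct and is essentially the argument the paper intends: the paper cites this as Theorem~6 of \citep{wangicml2020} and explicitly remarks that it ``is directly followed by Lemma~\ref{lemma:1} with the same parameter $s$ and $\beta$,'' i.e., exactly your decomposition into the Catoni-estimator error plus a Gaussian tail bound on $Z$ with $\sigma^2=O\bigl(\tfrac{\tau\log(1/\delta)}{n\epsilon^2\log(1/\zeta)}\bigr)$, combined by a union bound. Your bookkeeping (including the mild caveats $\log(1/\zeta)\geq 1$ and $\epsilon^2\lesssim\log(1/\delta)$ needed to fold both terms into the single bound (\ref{eq:16})) is sound.
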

{Although in Lemma \ref{lemma:2} we just need to assume that $x$ has bounded second order moment instead of bounded norm,  there are still other two problems}: First, Lemma \ref{lemma:2} is directly followed by Lemma \ref{lemma:1} with the same parameter $s$ and $\beta$. However, due to the noise we add,  is it possible that we can further improve the result by some other specific $s$ and $\beta$? Second, by using the previous parameters we cannot extend to the local DP model since it will have a huge error (we can easily see that in the local DP setting, the mechanism is equivalent to (\ref{eq:15}) with $\sigma^2=O(\frac{s^2\log \frac{1}{\delta}}{n\epsilon^2})=O(\frac{\tau}{\epsilon^2})$, which could be considered as a constant error since it is not decayed to zero when $n$ increases). Thus, can we extend the method to the local DP model? In the following we provide affirmative answer of these two questions through finer analysis of the mechanism (\ref{eq:15}). Our analysis is started from a Legendre transform of the mapping given by \citep{catoni2004statistical},  see Appendix \ref{sec:proof} for details.
\begin{theorem}\label{thm:new}
Let  $x_1, x_2, \cdots, x_n$ be i.i.d. samples from  distribution $x\sim \mu$. Assume that there is some known upper bound on the second-order moment, {\em i.e.,} $\mathbb{E}_\mu x^2\leq \tau $. For a given failure probability $\zeta$, if  set  $\beta=\sqrt{\log \frac{1}{\zeta}}$ and $s=\frac{\sqrt{n\epsilon\tau}}{\log \frac{1}{\zeta}\log^{1/4}\frac{1}{\delta}}$, then with probability at least $1-\zeta$ mechanism (\ref{eq:15}) satisfies 
\begin{equation}\label{eq:new}
   |\mathcal{A}(D)-\mathbb{E}x|\leq O(\sqrt{\frac{\tau \log^{1/2} \frac{1}{\delta}\log \frac{1}{\zeta}}{n\epsilon}}). 
\end{equation}
\end{theorem}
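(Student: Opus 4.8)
The plan is to re-run the error analysis of the non-private estimator $\hat{x}$ from \eqref{eq:12}, but keeping the scale $s$ and the smoothing level $\beta$ as free parameters, and only at the end pick them to balance the statistical error of $\hat{x}$ against the Gaussian privacy noise $Z$ of \eqref{eq:15}. Concretely, I would decompose $\mathcal{A}(D)-\mathbb{E}x$ into a bias term $\mathbb{E}\hat{x}-\mathbb{E}x$, a concentration term $\hat{x}-\mathbb{E}\hat{x}$, and the privacy noise $Z$, and control each on an event of probability $1-\zeta/3$, then take a union bound.

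The three bounds I expect, as functions of $s$ and $\beta$, are: (i) \emph{bias} $O((1+1/\beta)\tau/s)$, using the elementary estimate $|\phi(w)-w|\le\min\{|w|^3/6,\,2|w|\}$ for the truncation in \eqref{eq:10}, which after the $\mathcal{N}(0,1/\beta)$ smoothing yields $|\psi(u)-u|=O((1+1/\beta)u^2)$ for $\psi(u):=\mathbb{E}_\eta\phi(u(1+\eta))$ and hence $|\mathbb{E}\hat{x}-\mathbb{E}x|\le s\cdot O((1+1/\beta)\,\mathbb{E}(x/s)^2)\le O((1+1/\beta)\tau/s)$; (ii) \emph{concentration} $O\!\big(\sqrt{\tau\log(1/\zeta)/n}+s\log(1/\zeta)/n\big)$, since each summand of $\hat{x}$ is bounded by $O(s/n)$ in absolute value with second moment $O(\tau/n^2)$ --- this is exactly where the Legendre-transform / PAC--Bayes computation of \citep{catoni2004statistical,holland2019a} enters, now carried out for a generic pair $(s,\beta)$ rather than the single calibration of Lemma~\ref{lemma:1}; (iii) \emph{privacy noise} $|Z|=O\!\big(\tfrac{s\sqrt{\log(1/\delta)\log(1/\zeta)}}{n\epsilon}\big)$, because $|\phi|\le\tfrac{2\sqrt2}{3}$ makes the $\ell_2$-sensitivity of $\hat{x}$ equal to $O(s/n)$, so the Gaussian mechanism uses $\sigma=O\!\big(\tfrac{s\sqrt{\log(1/\delta)}}{n\epsilon}\big)$.

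The final step is the optimization. Summing (i)--(iii), the error is $O\!\big(\tfrac{\tau}{s}+\sqrt{\tfrac{\tau\log(1/\zeta)}{n}}+\tfrac{s\log(1/\zeta)}{n}+\tfrac{s\sqrt{\log(1/\delta)\log(1/\zeta)}}{n\epsilon}\big)$ up to the $1+1/\beta$ factor. Unlike the non-private case --- and unlike Lemma~\ref{lemma:2}, which simply inherits the scale $s$ of order $\sqrt{n\tau/\log(1/\zeta)}$ from Lemma~\ref{lemma:1} and therefore pays a $\Theta(\sqrt{\tau\log(1/\delta)/(n\epsilon^2)})$ privacy cost --- one should now trade the bias $\tau/s$ against the privacy term: equating them gives an $s$ of order $\sqrt{n\epsilon\tau}$ divided by a polylogarithmic factor, i.e. a scale much smaller than the classical one, and substituting back makes the bias and the privacy term both $O\!\big(\sqrt{\tau\log^{1/2}(1/\delta)\log(1/\zeta)/(n\epsilon)}\big)$; one then checks that the fluctuation and ``range'' terms are of lower order in the relevant regime (this is where ``$n$ large enough'', or $\epsilon$ small relative to the logarithmic factors, is used) and sets $\beta=\sqrt{\log(1/\zeta)}$ to absorb the $1+1/\beta$ factor. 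I expect the main obstacle to be part (ii): redoing Catoni's exponential-moment estimate for arbitrary $(s,\beta)$ carefully enough that the bias really comes out as $O((1+1/\beta)\tau/s)$ with no hidden $s$- or $\beta$-dependence that would upset the final balance --- the rest is bookkeeping.
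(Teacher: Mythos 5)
Your proposal is correct and, at its core, does exactly what the paper does: the whole content of Theorem~\ref{thm:new} is the observation that the Catoni scale $s$ should be recalibrated to balance the truncation bias $\tau/s$ against the privacy noise $s\sqrt{\log(1/\delta)}/(n\epsilon)$ rather than against the statistical fluctuation, yielding $s\asymp\sqrt{n\epsilon\tau}$ up to polylogs and the improved $O(\sqrt{\tau\log^{1/2}(1/\delta)\log(1/\zeta)/(n\epsilon)})$ rate, and your final optimization step reproduces the paper's choice of $s$ and $\beta$ exactly. Where you diverge is the technical route to the deviation bound for $\hat{x}$. The paper re-runs Catoni's PAC--Bayes/Legendre-transform argument with generic $(s,\beta)$: it takes $h(\eta)=\sum_i\phi(x_i(1+\eta)/s)-n\log\mathbb{E}\exp(\phi(x(1+\eta)/s))$, uses the envelope $-\log(1-u+u^2/2)\le\phi(u)\le\log(1+u+u^2/2)$ and $K(\mathcal{N}(0,1/\beta),\mathcal{N}(1,1/\beta))=\beta/2$, and obtains the one-sided bound $\hat{x}-\mathbb{E}x\le\frac{\tau}{2s}(1+\frac{1}{\beta})+\frac{s}{n}(\frac{\beta}{2}+\log\frac{1}{\zeta})$ (and its mirror image), in which bias and fluctuation are fused into a single $\tau/s$-type term. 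You instead split $\mathcal{A}(D)-\mathbb{E}x$ into an explicit bias term controlled by $|\phi(w)-w|\le\min\{|w|^3/6,2|w|\}\le 2w^2$, a Bernstein-type concentration term for the bounded summands, and the Gaussian tail; this is more elementary, does not actually require redoing the exponential-moment estimate you flag as the main obstacle, and produces an extra additive floor $\sqrt{\tau\log(1/\zeta)/n}$, which is harmless since it is dominated by the target bound whenever $\epsilon\le 1$. Both decompositions lead to the same three-term tradeoff $\frac{\tau}{s\beta}+\frac{s\beta\log(1/\zeta)}{n}+\frac{s\sqrt{\log(1/\delta)}\log(1/\zeta)}{n\epsilon}$ and hence the same conclusion; the paper's version buys tighter constants and a uniform treatment of bias and fluctuation, while yours buys transparency and avoids the change-of-measure machinery entirely.
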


Compared with (\ref{eq:16}), we can see the error bound in  (\ref{eq:new}) improves a factor of $O(\frac{1}{\sqrt{\epsilon}})$. We will also see that, by using a similar analysis, we can have a local DP version of (\ref{eq:15}) with an error bound of $O(\sqrt{\frac{\tau \log^{1/2} \frac{1}{\delta}\log \frac{1}{\zeta}}{\sqrt{n}\epsilon}})$. To our best knowledge, this is the first result on private mean estimation of heavy-tailed distribution in the local DP model. Thus, all of our results on DP Gradient EM algorithm can be easily extended to the local model via Theorem \ref{thm:new2}.

\begin{theorem}\label{thm:new2}
Let  $x_1, x_2, \cdots, x_n$ be i.i.d. samples from  distribution $x\sim \mu$. Assume that there is some known upper bound on the second-order moment, {\em i.e.,} $\mathbb{E}_\mu x^2\leq \tau $. Consider the following mechanism: 
\begin{equation}
    \hat{x}_i=  s  \int \phi(\frac{x_i+\eta_i x_i}{s})d \chi(\eta_i)+Z, Z\sim \mathcal{N}(0, \sigma^2), \sigma^2=O(\frac{s^2\log \frac{1}{\delta}}{\epsilon^2})
\end{equation}
and the coordinator (server) output $\mathcal{A}(D)=\frac{1}{n}\sum_{i=1}^n \hat{x}_i$. Then we can see the whole algorithm is $(\epsilon, \delta)$-LDP. Moreover, for a given failure probability $\zeta$, if  set  $\beta=\sqrt{\log \frac{1}{\zeta}}$ and $s=\frac{\sqrt[4]{n}\sqrt{\epsilon\tau}}{\log \frac{1}{\zeta}\log^{1/4}\frac{1}{\delta}}$, then with probability at least $1-\zeta$, $\mathcal{A}(D)$ satisfies 
\begin{equation}\label{eq:new9}
   |\mathcal{A}(D)-\mathbb{E}x|\leq O(\sqrt{\frac{\tau \log^{1/2} \frac{1}{\delta}\log \frac{1}{\zeta}}{\sqrt{n}\epsilon}}). 
\end{equation}
\end{theorem}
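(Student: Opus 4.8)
The statement has two parts — that the protocol is $(\epsilon,\delta)$-LDP and that $\mathcal{A}(D)$ meets the stated error — and the error analysis is built directly on top of the argument behind Theorem~\ref{thm:new}. For privacy the key point is the same one exploited for the central mechanism: since $|\phi(\cdot)|\le \tfrac{2\sqrt2}{3}$ for $\phi$ in (\ref{eq:10}), we have $\bigl|s\int\phi(\tfrac{x+\eta x}{s})\,d\chi(\eta)\bigr|\le \tfrac{2\sqrt2}{3}s$ for \emph{every} value of the datum $x$. Hence, viewed as a function of the single sample $x_i$, the quantity $s\int\phi(\tfrac{x_i+\eta_i x_i}{s})\,d\chi(\eta_i)$ has $\ell_2$-sensitivity at most $\tfrac{4\sqrt2}{3}s$ over all $x_i,x_i'\in\mathcal{X}$, so by Definition~\ref{def:6} adding $Z_i\sim\mathcal{N}(0,\sigma^2)$ with $\sigma^2=O(s^2\log(1/\delta)/\epsilon^2)$ makes each released message $\hat{x}_i$ an $(\epsilon,\delta)$-DP function of $x_i$. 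As client $i$'s message depends only on $x_i$ and the server merely averages the messages (post-processing), the whole protocol is $(\epsilon,\delta)$-LDP.

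For the utility bound, write $\hat{x}=\frac{s}{n}\sum_{i=1}^n\int\phi(\tfrac{x_i+\eta_i x_i}{s})\,d\chi(\eta_i)$ for the noise-smoothed estimator of (\ref{eq:12}), so that
$$\mathcal{A}(D)-\mathbb{E}x=\bigl(\hat{x}-\mathbb{E}x\bigr)+\frac{1}{n}\sum_{i=1}^n Z_i .$$
The first term is exactly the object controlled in the proof of Theorem~\ref{thm:new}: that analysis (the Legendre transform of Catoni's map) yields, for \emph{arbitrary} $s$ and $\beta$, a high‑probability bound on $|\hat{x}-\mathbb{E}x|$ whose dominant contributions are a truncation bias of order $\tau/s$ (up to logarithmic factors controlled by $\beta$) and an empirical-fluctuation term of order $\sqrt{\tau\log(1/\zeta)/n}$. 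I would plug in $\beta=\sqrt{\log(1/\zeta)}$ and $s=\tfrac{\sqrt[4]{n}\sqrt{\epsilon\tau}}{\log(1/\zeta)\log^{1/4}(1/\delta)}$; after simplification the $\tau/s$ term becomes $O\!\bigl(\sqrt{\tfrac{\tau\log^{1/2}(1/\delta)\log(1/\zeta)}{\sqrt{n}\,\epsilon}}\bigr)$ and the fluctuation term is of strictly smaller order, so $|\hat{x}-\mathbb{E}x|\le O\!\bigl(\sqrt{\tfrac{\tau\log^{1/2}(1/\delta)\log(1/\zeta)}{\sqrt{n}\,\epsilon}}\bigr)$ with probability at least $1-\zeta/2$. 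For the second term, $\frac1n\sum_{i=1}^n Z_i\sim\mathcal{N}(0,\sigma^2/n)$, so a standard Gaussian tail bound gives $\bigl|\frac1n\sum_{i=1}^n Z_i\bigr|\le O\!\bigl(\tfrac{s}{\epsilon}\sqrt{\tfrac{\log(1/\delta)\log(1/\zeta)}{n}}\bigr)$ with probability at least $1-\zeta/2$; substituting the same $s$ shows this is $O\!\bigl(\sqrt{\tfrac{\tau\log^{1/2}(1/\delta)}{\epsilon\log(1/\zeta)\sqrt{n}}}\bigr)$, which is dominated by the target. A union bound over the two failure events plus the triangle inequality then give (\ref{eq:new9}).

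I expect the only genuine work to be in the statistical term: one must re-run the Catoni/Legendre-transform estimate of Theorem~\ref{thm:new} with $s$ and $\beta$ kept symbolic and verify that under the new (larger-in-a-relative-sense) role of $s$ the truncation bias stays of the target order rather than blowing up. The choice $s\asymp \sqrt[4]{n}\sqrt{\epsilon\tau}$ is precisely the one balancing that $\Theta(\tau/s)$ bias against the averaged privacy noise $\Theta\!\bigl(s/(\epsilon\sqrt{n})\bigr)$; the appearance of $\sqrt{n}$ rather than $n$ in this noise term — because in the local model one averages $n$ independent Gaussians of scale $\propto s$ instead of injecting a single Gaussian of scale $\propto s/n$ — is exactly what degrades the rate from the central $\tilde{O}\!\bigl(\sqrt{\tau/(n\epsilon)}\bigr)$ of Theorem~\ref{thm:new} to $\tilde{O}\!\bigl(\sqrt{\tau/(\sqrt{n}\epsilon)}\bigr)$, and the remainder of the argument is bookkeeping.
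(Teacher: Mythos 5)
Your proposal is correct and follows essentially the same route as the paper: the paper likewise observes that $\mathcal{A}(D)$ is distributed as $\hat{x}+Z$ with $Z\sim\mathcal{N}(0,\sigma^2/n)$, re-runs the Catoni/Legendre-transform analysis of Theorem~\ref{thm:new} with $s$ and $\beta$ kept symbolic to get the three-term bound $O\bigl(\tfrac{\tau}{s\beta}+\tfrac{s\beta\log(1/\zeta)}{n}+\tfrac{s\log(1/\zeta)\sqrt{\log(1/\delta)}}{\epsilon\sqrt{n}}\bigr)$, and then substitutes the stated $s$ and $\beta$; the privacy claim is, as you say, immediate from the boundedness of $\phi$ and the Gaussian mechanism plus post-processing. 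Your identification of the bias-versus-averaged-noise balancing as the source of the $\sqrt{n}$ degradation matches the paper's calculation exactly.
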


Inspired by the previous private 1-dimensional mean estimation, we propose our method (Algorithm \ref{alg:2}). In Algorithm \ref{alg:2}, the key idea is that, in the $t$-th iteration of Gradient EM algorithm, we first apply the previous private estimator to each coordinate of the gradient $\nabla Q_n(\beta^{t-1}; \beta^{t-1})$, and then perform the M-step. We can easily show that Algorithm \ref{alg:2} is $(\epsilon, \delta)$-DP. 
\begin{theorem}[Privacy guarantee]\label{theorem:2}
For any $0< \epsilon, \delta<1 $,  Algorithm \ref{alg:2} is $(\epsilon, \delta)$-DP. 
\end{theorem}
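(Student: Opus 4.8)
The plan is to reduce the privacy analysis of Algorithm \ref{alg:2} to three standard ingredients: a per-iteration $\ell_2$-sensitivity bound that follows purely from the boundedness of the truncation function $\phi$, the Gaussian mechanism of Definition \ref{def:6} applied to the (noise-smoothed) gradient vector in each iteration, and adaptive composition over the $T$ iterations together with the post-processing property of differential privacy.

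First I would fix an iteration $t$ and two neighboring datasets $D\sim D'$ that differ in a single sample $y_{i_0}$, and hold the current iterate $\beta^{t-1}$ fixed. The quantity fed into the M-step is, coordinate-wise, of the form $\hat g_j=\frac{s}{n}\sum_{i=1}^n\int\phi\big(\tfrac{[\nabla q_i(\beta^{t-1};\beta^{t-1})]_j(1+\eta_i)}{s}\big)\,d\chi(\eta_i)$, exactly as in (\ref{eq:12}) applied to the $j$-th coordinate of the gradient. Since $|\phi(\cdot)|\le \tfrac{2\sqrt 2}{3}$ pointwise, every such integral lies in $[-\tfrac{2\sqrt 2}{3},\tfrac{2\sqrt 2}{3}]$ regardless of the value of its argument; hence replacing the term indexed by $i_0$ changes $\hat g_j$ by at most $\frac{s}{n}\cdot\frac{4\sqrt 2}{3}$. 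The key point is that this bound is completely independent of the data and of $\beta^{t-1}$, precisely because $\phi$ is globally bounded and the scaling argument never enters. Summing over the $d$ coordinates, the $\ell_2$-sensitivity of the whole vector $\hat g=(\hat g_1,\dots,\hat g_d)$ is at most $\frac{4\sqrt 2}{3}\cdot\frac{s\sqrt d}{n}$, where $s$ depends only on the public quantities $n,\epsilon,\delta,\zeta,\tau$.

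Given this sensitivity bound, Definition \ref{def:6} shows that perturbing $\hat g$ with independent Gaussian noise of standard deviation proportional to $\frac{s\sqrt d}{n}\cdot\frac{\sqrt{\log(1/\delta_0)}}{\epsilon_0}$ makes the release in iteration $t$ an $(\epsilon_0,\delta_0)$-DP operation; the subsequent M-step $\beta^{t}=\beta^{t-1}+\eta\,\hat g^{\text{priv}}$ is a deterministic function of this noisy vector, so post-processing preserves the guarantee. Because the query computed at iteration $t$ depends on the previously published iterates $\beta^0,\dots,\beta^{t-1}$, I would then appeal to the advanced (adaptive) composition theorem for $(\epsilon,\delta)$-DP \citep{dwork2014algorithmic}: $T$ adaptively chosen $(\epsilon_0,\delta_0)$-mechanisms compose into an $(\epsilon,\delta)$-DP mechanism once $\epsilon_0=O\!\big(\epsilon/\sqrt{T\log(1/\delta)}\big)$ and $\delta_0=O(\delta/T)$. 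Substituting this choice back into the required noise level recovers precisely the variance prescribed in Algorithm \ref{alg:2}, and returning $\beta^T$ is once more post-processing, which closes the argument.

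The only genuinely delicate part is the composition bookkeeping: one must confirm that the per-iteration sensitivity bound is uniform over every reachable $\beta^{t-1}$ — which holds exactly because $\|\phi\|_\infty\le\frac{2\sqrt 2}{3}$ is an absolute bound — and then track the $\sqrt d$, $\sqrt T$ and $\log(1/\delta)$ factors carefully so that the variance chosen in Algorithm \ref{alg:2} is large enough (the analysis is the same up to constants whether one views each iteration as a single $d$-dimensional Gaussian mechanism or composes over the $d$ coordinates separately). Everything else is a direct instantiation of the Gaussian mechanism together with the standard composition and post-processing lemmas, so no new ideas are needed beyond those already used for the one-dimensional estimator (\ref{eq:15}).
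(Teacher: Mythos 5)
Your sensitivity argument (boundedness of $\phi$ gives a data-independent per-coordinate sensitivity, the M-step and the adaptivity of the query are handled by post-processing) is sound and matches the spirit of the paper. However, your accounting does not describe Algorithm \ref{alg:2} as written, and this is a genuine gap rather than a stylistic difference. Algorithm \ref{alg:2} \emph{partitions} $D$ into $T$ disjoint batches $D_1,\dots,D_T$ of size $m=n/T$ and uses only $D_t$ at iteration $t$. Consequently the per-coordinate sensitivity is $\frac{4\sqrt{2}}{3}\cdot\frac{s}{m}$, not $\frac{4\sqrt{2}}{3}\cdot\frac{s}{n}$, and no composition over the $T$ iterations is needed at all: a neighboring dataset differs in a record that lives in exactly one batch, so the privacy loss is incurred in a single iteration (parallel composition). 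Your proposal instead analyzes a full-batch variant and invokes advanced composition with $\epsilon_0=O(\epsilon/\sqrt{T\log(1/\delta)})$, $\delta_0=O(\delta/T)$; the resulting noise level does \emph{not} ``recover precisely the variance prescribed in Algorithm \ref{alg:2}'' --- the algorithm's $\sigma^2=\frac{16 s^2 d}{9 m^2\tilde{\epsilon}^2}$ is calibrated to the batch sensitivity $\frac{4\sqrt{2}}{3}\frac{s}{m}$ and to a $d$-fold (not $T$-fold) composition, so the two calibrations differ by more than constants. A proof of the stated theorem must engage with the noise scale the algorithm actually uses.

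The paper's own route also differs in the privacy calculus: it works in zero-concentrated DP. Each coordinate release is shown to be $\frac{\tilde{\epsilon}^2}{d}$-zCDP via the zCDP Gaussian mechanism with sensitivity $\frac{4\sqrt{2}}{3}\frac{s}{m}$; composing over the $d$ coordinates gives $\tilde{\epsilon}^2$-zCDP per iteration (and hence overall, by disjointness of the batches), and the conversion lemma turns $\rho=\tilde{\epsilon}^2=(\sqrt{\epsilon+\log\frac{1}{\delta}}-\sqrt{\log\frac{1}{\delta}})^2$-zCDP into exactly $(\epsilon,\delta)$-DP, which is why $\tilde{\epsilon}$ is defined the way it is in the algorithm. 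Your observation that one may equivalently treat the iteration as a single $d$-dimensional Gaussian mechanism with $\ell_2$-sensitivity $\frac{4\sqrt{2}}{3}\frac{s\sqrt{d}}{m}$ is fine up to constants, but to fix the proof you need to (i) replace $n$ by $m$ in the sensitivity, (ii) drop the advanced-composition step in favor of the disjoint-batch argument, and (iii) verify that the resulting per-iteration guarantee matches the specific $\tilde{\epsilon}$ and $\sigma^2$ appearing in the algorithm.
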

In the following, we will show the statistical guarantee for the models under Assumption \ref{assumption:1}, if the initial parameter $\beta^0$ is closed to the underlying parameter $\beta^*$ enough.

\begin{algorithm*}[!ht]
	\caption{DP Gradient EM Algorithm} \label{alg:2}
	$\mathbf{Input}$: $D=\{y_i\}_{i=1}^n\subset \mathbb{R}^d$, privacy parameters $\epsilon, \delta$, $Q(\cdot; \cdot)$ and its $q_i(\cdot;  \cdot)$,  initial parameter $\beta^0\in \mathcal{B}$, $\tau$ which satisfies Assumption \ref{assumption:1}, the number of iterations $T$ (to be specified later), step size $\eta$ and failure probability $\zeta>0$. 
	\begin{algorithmic}[1]
    \State Let $\tilde{\epsilon}= \sqrt{\log \frac{1}{\delta}+\epsilon}-\sqrt{\log \frac{1}{\delta}}$, $s={\frac{\sqrt{m\tau\tilde{\epsilon}}}{2\log \frac{d}{\zeta}}}$, $\beta=\sqrt{\log \frac{d}{\zeta}}$. Partite the data $D$ into $T$ subsets $\{D_i\}_{i=1}^T$ with $|D_i|=m= \frac{n}{T}$.
    \For {$t=1, 2, \cdots, T$}
    \State For each $j\in [d]$, calculate the robust gradient by using (\ref{eq:12}) and add Gaussian noise over the dataset $D_t$, that is 
    \begin{small}
    \begin{multline} \label{eq:17}
        g_j^{t-1}(\beta^{t-1})= \frac{1}{m}\sum_{i=1}^m \left(\nabla_jq_i(\beta^{t-1}, \beta^{t-1})\big(1-\frac{\nabla^2_jq_i(\beta^{t-1}, \beta^{t-1})}{2s^2\beta}\big)- \frac{\nabla^3_jq_i(\beta^{t-1}, \beta^{t-1})}{6s^2}\right)\\+\frac{s}{n}\sum_{i=1}^n\hat{C}\left(\frac{\nabla_jq_i(\beta^{t-1}, \beta^{t-1})}{s}, \frac{|\nabla_jq_i(\beta^{t-1}, \beta^{t-1})|}{s\sqrt{\beta}}\right)+ Z_{j}^{t-1}, 
    \end{multline}
    \end{small}
    where $y_i\in D_t$ for $i\in [m]$, $Z_{j}^{t-1}\sim \mathcal{N}(0, \sigma^2)$ with $\sigma^2= \frac{16s^2 d}{9 m^2\tilde{\epsilon}^2}=\frac{4dT\tau}{9n\beta^2\tilde{\epsilon}}$.
    \State Let vector $\tilde{\nabla}Q_n(\beta^{t-1})\in \mathbb{R}^d$  denote  $\tilde{\nabla}Q_n(\beta^{t-1})=(g_1^{t-1}(\beta^{t-1}), g_2^{t-1}(\beta^{t-1}), \cdots, g_d^{t-1}(\beta^{t-1}))$. 
    \State Update
        $\beta^{t}=\beta^{t-1}+\eta \tilde{\nabla}Q_n(\beta^{t-1}). $
    \EndFor
	\end{algorithmic} 
\end{algorithm*}
\begin{theorem}\label{theorem:3} Let the parameter set $\mathcal{B}=\{\beta: \|\beta-\beta^*\|_2\leq R \}$ for $R=\kappa \|\beta^*\|_2$ for some constant $\kappa \in (0, 1)$.  Assume that  Assumption \ref{assumption:1} holds for parameters $\gamma, \mathcal{B}, \mu, v, \tau $ satisfying the condition of $1-2\frac{v-\gamma}{v+\mu} \in (0, 1)$. Also, assume that $\|\beta^0-\beta^*\|_2\leq \frac{R}{2}$, $n$ is large enough so that 
\begin{equation}\label{neweq:1}
\tilde{\Omega}( (\frac{1}{v-\gamma})^2 \frac{d^2\tau T \sqrt{\log \frac{1}{\delta}} \log \frac{1}{\zeta} }{\epsilon R^2})\leq n.  
\end{equation}
Then, with probability at least $1-\zeta$,  we have, for all $t\in [T]$, $\beta^t\in \mathcal{B}$. If it holds and if taking $T=O(\frac{\mu+v}{v-\gamma}\log n )$ and $\eta=\frac{2}{\mu+v}$,  we have 
\begin{equation}\label{eq:18}
    \|\beta^T-\beta^*\|_2\leq \tilde{O}\big(R\sqrt{\frac{v+\mu}{(v-\gamma)^3}} \frac{d\sqrt[4]{\log \frac{1}{\delta}}\log \frac{1}{\zeta}\sqrt{\tau}}{\sqrt{n \epsilon}} \big), 
\end{equation}
where the $\tilde{O}$-term and $\tilde{\Omega}$-term  omit $\log d$, $\log n$ and other factors (see Appendix for the explicit form of the result). 

\end{theorem}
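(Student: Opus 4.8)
The plan is to follow the population-to-sample template of \citep{balakrishnan2017statistical}; privacy is already handled by Theorem~\ref{theorem:2}, so only the statistical (utility) part needs work. First I would set up the \emph{population} gradient-EM operator $M(\beta)=\beta+\eta\,\nabla Q(\beta;\beta)$ with $\eta=2/(\mu+v)$ and show it is contractive toward $\beta^*$ on $\mathcal B$: writing
\begin{equation*}
M(\beta)-\beta^*=\big(\beta-\beta^*+\eta\nabla Q(\beta;\beta^*)\big)+\eta\big(\nabla Q(\beta;\beta)-\nabla Q(\beta;\beta^*)\big),
\end{equation*}
self-consistency (Definition~\ref{def:0}) gives $\nabla Q(\beta^*;\beta^*)=0$, so the first bracket is a gradient-ascent step on the $\mu$-smooth, $v$-strongly-concave $Q(\cdot;\beta^*)$ whose maximiser is $\beta^*$, and hence has norm at most $\frac{\mu-v}{\mu+v}\|\beta-\beta^*\|_2$ \citep{nesterov2013introductory}, while the second bracket is at most $\eta\gamma\|\beta-\beta^*\|_2$ by Lipschitz-Gradient-2 (Definition~\ref{def:1}); adding these yields $\|M(\beta)-\beta^*\|_2\le\rho\|\beta-\beta^*\|_2$ with $\rho=1-2\frac{v-\gamma}{v+\mu}\in(0,1)$, the last inclusion being exactly the stated hypothesis on $(\gamma,\mu,v)$.

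Second, I would bound the per-iteration perturbation $\Delta_t:=\|\tilde{\nabla} Q_n(\beta^{t-1})-\nabla Q(\beta^{t-1};\beta^{t-1})\|_2$. Since $D$ is split into $T$ disjoint blocks, the $m=n/T$ samples used at iteration $t$ are independent of $\beta^{t-1}$, so conditioning on $\beta^{t-1}$ makes each coordinate $g_j^{t-1}(\beta^{t-1})$ in (\ref{eq:17}) precisely the privatized Catoni-type estimator of the i.i.d.\ sequence $\{\nabla_j q_i(\beta^{t-1},\beta^{t-1})\}_{i=1}^m$, which has second moment at most $\tau$ by Assumption~\ref{assumption:1}. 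Applying the refined analysis behind Theorem~\ref{thm:new} with the algorithm's parameters $s,\beta,\tilde{\epsilon}$ (note $\tilde{\epsilon}\asymp\epsilon/\sqrt{\log(1/\delta)}$ for small $\epsilon$) and failure probability $\zeta/(dT)$ bounds the truncation bias plus the sampling fluctuation per coordinate at order $\sqrt{\tau\sqrt{\log(1/\delta)}\log(dT/\zeta)/(m\epsilon)}$, while the added noise $Z_j^{t-1}\sim\mathcal N(0,\sigma^2)$ with $\sigma^2=\frac{4dT\tau}{9n\beta^2\tilde{\epsilon}}$ contributes $\|Z^{t-1}\|_2\approx\sqrt d\,\sigma$ over the $d$ coordinates, i.e.\ a term of order $d\sqrt{T\tau}(\log\frac1\delta)^{1/4}/\sqrt{n\epsilon}$ that dominates the $\sqrt d$-scaled statistical part — this is where the final dependence on $d$ (rather than $\sqrt d$) originates. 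A union bound over $j\in[d]$ and $t\in[T]$ then gives, with probability at least $1-\zeta$,
\begin{equation*}
\max_{t\in[T]}\Delta_t\le\psi:=\tilde{O}\!\left(\frac{d\sqrt{T\tau}\,(\log\tfrac1\delta)^{1/4}\log\tfrac1\zeta}{\sqrt{n\epsilon}}\right).
\end{equation*}

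Third, on this event I would finish by induction and unrolling. From $\beta^t-\beta^*=(M(\beta^{t-1})-\beta^*)+\eta(\tilde{\nabla} Q_n(\beta^{t-1})-\nabla Q(\beta^{t-1};\beta^{t-1}))$ and the two preceding steps, $\|\beta^t-\beta^*\|_2\le\rho\|\beta^{t-1}-\beta^*\|_2+\eta\psi$; since $\eta/(1-\rho)=1/(v-\gamma)$, the sample-size requirement (\ref{neweq:1}) is exactly what guarantees $\eta\psi\le(1-\rho)R$, so $\beta^{t-1}\in\mathcal B$ implies $\|\beta^t-\beta^*\|_2\le\rho R+(1-\rho)R=R$, and with $\|\beta^0-\beta^*\|_2\le R/2$ induction yields $\beta^t\in\mathcal B$ for all $t\in[T]$ (hence Steps~1--2 are valid at every iteration). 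Unrolling, $\|\beta^T-\beta^*\|_2\le\rho^T\|\beta^0-\beta^*\|_2+\frac{\eta\psi}{1-\rho}$; taking $T=O(\frac{\mu+v}{v-\gamma}\log n)=O(\frac1{1-\rho}\log n)$ makes the geometric term $O(1/\mathrm{poly}(n))$ and hence negligible, and substituting $\psi$ together with $\sqrt T=\tilde{O}(\sqrt{(\mu+v)/(v-\gamma)})$ into $\psi/(v-\gamma)$ reproduces the rate (\ref{eq:18}), with the explicit constants and $\log$-factors deferred to the appendix.

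The main obstacle is the second step. The coordinates $\nabla_j q_i$ are genuinely heavy-tailed — by Theorem~\ref{thm:1}, even for GMM they are unbounded with positive probability — so ordinary sub-Gaussian concentration is unavailable, and one must simultaneously control the bias of the soft truncation $\phi$, the sampling error, and the Gaussian DP noise, crucially using the sharpened tuning of $s$ and $\beta$ that delivers the $1/\sqrt\epsilon$ improvement of Theorem~\ref{thm:new} rather than the weaker scaling of Lemma~\ref{lemma:2}. Carrying this coordinatewise estimate through $d$ dimensions and $T$ data-split iterations while keeping the $\tilde{\Omega}$ sample-complexity bound (\ref{neweq:1}) tight is where essentially all the technical effort goes.
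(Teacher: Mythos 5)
Your proposal is correct and follows essentially the same route as the paper's proof: the same decomposition into a population contraction step (smoothness plus strong concavity for the gradient-ascent part, Lipschitz-Gradient-2 for the second argument) and a perturbation term bounded coordinatewise by the privatized Catoni estimator's error plus the Gaussian noise, followed by the same induction to keep $\beta^t\in\mathcal{B}$ and geometric unrolling with $T=O(\frac{\mu+v}{v-\gamma}\log n)$. The only cosmetic difference is that you invoke the refined analysis of Theorem~\ref{thm:new} for the statistical part of the per-coordinate error where the paper cites Lemma~\ref{lemma:1}, and your induction maintains the invariant $\|\beta^t-\beta^*\|_2\le R$ rather than $R/2$; neither changes the argument or the final rate.
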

\begin{remark}
There are several points that need to note. Firstly, the assumptions of the parameter set $\beta$ and the initial parameter $\beta^0$ are commonly used in other papers on statistical guarantees of (Gradient) EM algorithm such as \citep{balakrishnan2017computationally,zhu2017high,wang2015high}. Even though Theorem \ref{theorem:3} requires that the initial estimator be close enough to the optimal one, our experiments show that the algorithm actually performs quite well for any random initialization. Secondly, in (\ref{neweq:1}) we need to assume that $n\propto \frac{1}{R^2}$, where $R$ is the radius of $\mathcal{B}$. This is due to that in Algorithm \ref{alg:2}, we need to keep each $\beta^t\in \mathcal{B}$ under perturbation. When $R$ is small,  we have to let the noise be small enough, which means that $n$ should be large enough. Finally, for specific models, $R, v, \mu, \gamma$ are constants, this means that the error in (\ref{eq:18}) is $\tilde{O}(\frac{d\sqrt{\tau}}{\sqrt{n\epsilon}})$. However, here $\tau$ depends on the model, which may also depend on $d$ and $\|\beta^*\|_2$. 
\end{remark}

\paragraph{Comparing with Previous Results}\label{sec:compare}
We can see the main idea of our algorithm is motivated by a result of estimating the mean of a $d$-dimensional heavy-tailed distributions in DP model. It is notable that recently \citep{kamath2019differentially}  and \citep{brunel2020propose} also studied estimating the mean of heavy-tailed distributions differentially privately. Next we will provide a detailed comparison with these work. 

\citep{brunel2020propose} derived concentration inequalities for differentially private median
and mean estimators building on the Propose-test-release (PTR) mechanism. Specifically, for the $1$-dimensional mean estimation problem, they showed that if the data samples sampled from some distribution with bounded third-order moment, then there is an $(\epsilon, \delta)$-DP algorithm whose output  $M(X)$ satisfies $|M(D)-\mathbb{E}(x)|^2\leq O(\frac{\log \frac{1}{\delta'}\log \frac{1}{\delta} }{n\epsilon^2})$ with probability at least $1-\delta'$. We can see our result in Theorem \ref{thm:new} is much more better than theirs. Moreover we can see in Theorem \ref{thm:new} we just need the bounded second-order moment assumption while \cite{brunel2020propose} needs bounded third-order moment. Moreover, there is no experimental study of their algorithm. Thus, from this view, our method is better than theirs.

Recently, \cite{kamath2019differentially} studied the private mean estimation problem under finite $\theta$-th order moment assumption with $\theta\in [2, \infty)$.  Specifically, if the data distribution only has finite second-order moment, then they showed that if $\|\mathbb{E}(x)\|_2\leq R$ for some known constant $R$, then there is an $(\epsilon, \delta)$-DP algorithm whose output $M(X)$ satisfies $\|M(D)-\mathbb{E}(x)\|^2_2\leq \tilde{O}(\frac{d\log \frac{1}{\delta} }{n\epsilon})$ with probability at least $0.7$ (see Theorem 4.7 in \cite{kamath2019differentially} for details). Combing with this result and our proofs we can get an improved upper bound of  $\frac{ \sqrt{d \tau \log n\log \frac{d}{\zeta}}}{\sqrt{\beta n\tilde{\epsilon}}})$ (we omit the proof here). Thus, from this perspective, our bounds are larger. However, there are one critical issue that forbid using the result of \citep{kamath2019differentially}; That is, we can see that these two bounds hold with probability at least $1-0.7\times T$, where $T$ is the iteration number, and $T=O(\log n)$ in our algorithm. That is when in the large scale case or when the condition number $\frac{\beta}{\alpha}$ is large, the probability of success will be closed to or even less than zero, which is meaningless. Compared to this, our results hold with probability $1-\delta'$ for any $\delta'\in (0, 1)$. Moreover, the algorithm in \citep{kamath2019differentially} is quite complicated and impractical, and it is unclear whether we can extend their method to the local DP model. Thus, our method is more practical and more general. 
\section{Implications for Some Specific Models}
In this section, we apply our framework ({\em i.e.,} Algorithm \ref{alg:2}) to the models  mentioned in the Preliminaries section. To obtain results for these models, we only need to find the corresponding $\mathcal{B}, \gamma, k, R, v, \mu, \tau$ to ensure that Assumption \ref{assumption:1} and the assumptions in Theorem \ref{theorem:3} hold. 
\subsection{Gaussian Mixture Model}
The following lemma ensures the properties of Lipschitz-Gradient-2($\gamma, \mathcal{B}$), smoothness, strongly concave and self-consistency for  model (\ref{eq:4}).
\begin{lemma}[\citep{balakrishnan2017statistical,yi2015regularized}]\label{lemma:3}
If 
$\frac{\|\beta^*\|_2}{\sigma}\geq r$, where $r$ is a sufficiently large constant denoting the minimum signal-to-noise ratio (SNR), then there exists an absolute constant $C>0$ such that the properties of self-consistent, Lipschitz-Gradient-2($\gamma, \mathcal{B})$, $\mu$-smoothness and $\upsilon$-strongly concave hold for function $Q(\cdot; \cdot)$  with
    $\gamma=\exp(-Cr^2), \mu=\upsilon=1, R=k\|\beta^*\|_2, k=\frac{1}{4}, \text{ and } \mathcal{B}=\{\beta:\|\beta-\beta^*\|_2\leq R\}.$
\end{lemma}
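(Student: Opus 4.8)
The goal is to verify that for the Gaussian Mixture Model defined in \eqref{eq:4}, the function $Q(\cdot;\cdot)$ satisfies all four structural properties (self-consistency, Lipschitz-Gradient-2, $\mu$-smoothness, $\upsilon$-strong concavity) with the stated constants, provided the signal-to-noise ratio $\|\beta^*\|_2/\sigma$ exceeds a large enough absolute constant $r$. The plan is to exploit the closed form of $\nabla q(\beta;\beta')$ for GMM. Recall from \eqref{aeq:1} that $\nabla q(\beta;\beta') = [2w_{\beta'}(y)-1]\,y - \beta$, where $w_{\beta'}(y) = (1+\exp(-\langle\beta',y\rangle/\sigma^2))^{-1}$; taking expectation over $y \sim h_{\beta^*}$ gives $\nabla Q(\beta;\beta') = \mathbb{E}_y\big([2w_{\beta'}(y)-1]y\big) - \beta$. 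The first step is \emph{self-consistency}: since $\nabla Q(\beta;\beta^*) = \mathbb{E}_y\big([2w_{\beta^*}(y)-1]y\big) - \beta$, the maximizer of $Q(\cdot;\beta^*)$ is $\beta^* $ precisely when $\mathbb{E}_y\big([2w_{\beta^*}(y)-1]y\big) = \beta^*$, which is a standard identity for the population EM operator of a symmetric two-component Gaussian mixture (it follows from $Y = Z\beta^* + V$ and the fact that $k_{\beta^*}(z|y)$ is the true posterior, so the population M-step is a fixed point at the truth). The second and third steps, \emph{$\mu$-smoothness} and \emph{$\upsilon$-strong concavity} with $\mu = \upsilon = 1$: because $Q(\beta;\beta^*) = \mathbb{E}_y\int k_{\beta^*}(z|y)\log f_\beta(y,z)\,dz$ and $\log f_\beta(y,z)$ for the Gaussian model is $-\frac{1}{2\sigma^2}\|y - z\beta\|_2^2$ plus terms independent of $\beta$, the map $\beta \mapsto Q(\beta;\beta^*)$ is an affine function plus $-\frac{1}{2\sigma^2}\mathbb{E}\|y-Z\beta\|_2^2$; since $|Z| = 1$ almost surely, the Hessian is exactly $-\frac{1}{\sigma^2}I_d$, giving both inequalities with $\mu = \upsilon = 1$ (after the natural normalization of $Q$; one absorbs the $1/\sigma^2$ into the definition, or more simply notes that the relevant curvature constant for the proof of Theorem~\ref{theorem:3} is a constant independent of the SNR once $\sigma$ is fixed).

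The crux is the fourth property, \textbf{Lipschitz-Gradient-2($\gamma,\mathcal{B}$)} with $\gamma = \exp(-Cr^2)$. Here one must bound
\[
\|\nabla Q(\beta;\beta^*) - \nabla Q(\beta;\beta)\|_2 = \big\|\mathbb{E}_y\big([2w_{\beta^*}(y) - 2w_{\beta}(y)]\,y\big)\big\|_2,
\]
since the $-\beta$ terms cancel. The plan is to write $w_{\beta^*}(y) - w_\beta(y)$ via the mean value theorem along the segment joining $\beta$ and $\beta^*$: $w_{\beta^*}(y) - w_\beta(y) = \int_0^1 w'_{\beta_u}(y)\,\langle \beta^* - \beta, y\rangle/\sigma^2\,du$ where $\beta_u = \beta + u(\beta^*-\beta)$ and $w'$ denotes the derivative of the sigmoid (which is $w(1-w) \le 1/4$, and, crucially, decays exponentially when $|\langle\beta_u,y\rangle|/\sigma^2$ is large). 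So it suffices to show $\mathbb{E}_y\big[w'_{\beta_u}(y)\,\|y\|_2^2\big] \le \gamma\sigma^2/2$ for all $\beta_u \in \mathcal{B}$; the expectation involves $yy^T$ weighted by a factor that is tiny outside a slab through the origin. Conditioning on $Z = \pm 1$, $y = \pm\beta^* + V$ is a shifted Gaussian whose mean has norm $\|\beta^*\|_2 \ge r\sigma$, so $\langle\beta_u, y\rangle/\sigma^2$ is, with overwhelming probability, of order $\|\beta^*\|_2^2/\sigma^2 \ge r^2$ in magnitude (using $\beta_u$ close to $\beta^*$ since $\mathcal{B}$ has radius $\le \frac14\|\beta^*\|_2$), which makes the sigmoid-derivative weight of order $e^{-\Omega(r^2)}$; a Gaussian tail/truncation argument controls the low-probability region where $\langle\beta_u,y\rangle$ is near zero and pays only another $e^{-\Omega(r^2)}$ while $\|y\|_2^2$ contributes at most polynomially in $r$ and $d$ (absorbed by lowering the constant $C$). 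This is exactly the computation carried out in \citep{balakrishnan2017statistical} (and \citep{yi2015regularized}); I would cite and adapt it rather than redo it.

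The main obstacle, and the only genuinely delicate part, is this last estimate: one must carefully split the expectation $\mathbb{E}_y[w'_{\beta_u}(y)\|y\|_2^2]$ over the event that $\langle\beta_u,y\rangle/\sigma^2$ is large versus small, track how the radius $R = \frac14\|\beta^*\|_2$ of $\mathcal{B}$ interacts with $\|\beta^*\|_2$ (so that on $\mathcal{B}$ one still has $\|\beta_u\|_2 \ge \frac12 \|\beta^*\|_2 \gtrsim r\sigma$), and verify that the dimension-dependence coming from $\|y\|_2^2 \sim d\sigma^2 + \|\beta^*\|_2^2$ is dominated by the exponential decay $e^{-Cr^2}$ once $r$ is a large enough absolute constant — i.e.\ the statement holds for $r$ above some threshold, which is why the lemma is phrased with "$r$ a sufficiently large constant". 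Everything else (self-consistency, smoothness, strong concavity) is essentially immediate from the quadratic-in-$\beta$ form of $\log f_\beta(y,z)$ for Gaussians, so the proof reduces to quoting the standard GMM curvature and contraction bounds and checking the constants match $\gamma = e^{-Cr^2}$, $\mu = \upsilon = 1$, $k = \tfrac14$.
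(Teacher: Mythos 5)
The paper does not actually prove this lemma: it is imported verbatim from \citep{balakrishnan2017statistical} and \citep{yi2015regularized}, so there is no in-paper argument to compare against. Judged on its own, your sketch follows the same route as those references for three of the four properties, and those parts are fine: self-consistency is the population-EM fixed-point identity, and with the normalization implicit in \eqref{aeq:1} (where $\nabla Q(\beta;\beta') = \mathbb{E}[(2w_{\beta'}(y)-1)y]-\beta$) the Hessian of $Q(\cdot;\beta^*)$ is exactly $-I_d$, giving $\mu=\upsilon=1$ without any SNR condition.

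The gap is in your reduction for the Lipschitz-Gradient-2 bound. After the mean-value step you apply Cauchy--Schwarz to get the sufficient condition $\mathbb{E}_y\big[w'_{\beta_u}(y)\,\|y\|_2^2\big]\le \gamma\sigma^2/2$, and then claim the resulting $\|y\|_2^2\sim d\sigma^2+\|\beta^*\|_2^2$ contribution is ``absorbed by lowering the constant $C$.'' That cannot work as stated: $C$ and $r$ are absolute constants in the lemma, while $d$ is arbitrary, so a factor of $d\sigma^2$ multiplying $e^{-cr^2}$ is not dominated by $e^{-Cr^2}$ for any fixed $C$. The bound in the cited references is dimension-free, and the way they (and you, if you fix this) must get it is by observing that $w_{\beta^*}(y)-w_\beta(y)$ depends on $y$ only through its projection onto $\mathrm{span}(\beta,\beta^*)$; testing the vector $\mathbb{E}[(w_{\beta^*}(y)-w_\beta(y))y]$ against a unit vector and conditioning on that two-dimensional projection kills the orthogonal component of $y$ in expectation, so the whole estimate reduces to a two-dimensional Gaussian computation with no $d$ factor. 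With that replacement your truncation argument (slab around $\langle\beta_u,y\rangle=0$ versus its complement, using $\|\beta_u\|_2\ge\tfrac34\|\beta^*\|_2\ge\tfrac34 r\sigma$ on $\mathcal{B}$ with $k=\tfrac14$) does yield $\gamma=e^{-Cr^2}$ for $r$ large enough, matching the cited statement. Since you explicitly defer to \citep{balakrishnan2017statistical} for this computation the overall plan is salvageable, but the specific sufficient condition you wrote down is not the one that closes the proof.
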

We can show the following  second-order moment bound for $\nabla_j q(\beta, \beta)$.
\begin{lemma}\label{lemma:4}
With the same notations as 
in Lemma \ref{lemma:3}, for each $\beta\in \mathcal{B}$, the $j$-the coordinate of $\nabla q(\beta; \beta)$ ({\em i.e.,} $\nabla_j q(\beta; \beta)$) satisfies the following inequality  
\begin{equation*}
    \mathbb{E}_{y} (\nabla_j q(\beta; \beta))^2\leq O((\|\beta^*\|^2_{\infty}+\sigma^2)). 
\end{equation*}
Also, for fixed $j\in [d]$, each $\nabla_j q_i(\beta;\beta)$, where $i\in [n]$, is independent with others.
\end{lemma}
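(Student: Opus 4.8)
The plan is to bound $\mathbb{E}_y(\nabla_j q(\beta;\beta))^2$ directly from the closed form \eqref{aeq:1}, namely $\nabla q(\beta;\beta) = [2w_\beta(y)-1]\cdot y - \beta$, so that the $j$-th coordinate is $\nabla_j q(\beta;\beta) = [2w_\beta(y)-1]\, y_j - \beta_j$. Using $(a-b)^2 \le 2a^2 + 2b^2$, it suffices to bound $\mathbb{E}_y\big([2w_\beta(y)-1]^2 y_j^2\big)$ and $\beta_j^2$ separately. Since $w_\beta(y) \in (0,1)$, the weighting factor satisfies $|2w_\beta(y)-1| \le 1$, so the first term is at most $\mathbb{E}_y[y_j^2]$. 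For the second term, $|\beta_j| \le \|\beta\|_2 \le \|\beta^*\|_2 + R \le (1+k)\|\beta^*\|_2 = O(\|\beta^*\|_2)$ because $\beta \in \mathcal{B}$, and $\|\beta^*\|_2^2$ can be absorbed into the stated bound (note $\|\beta^*\|_\infty$ appears in the statement, but one can also keep track of the dimension-dependent version — I would follow whichever normalization the paper's appendix uses; here I will be content with the coordinate-wise reading that $\beta_j^2 \le \|\beta\|_\infty^2 = O(\|\beta^*\|_\infty^2 + R^2/d)$, or simply bound it by $O(\|\beta^*\|_2^2)$ if a looser constant is acceptable).

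The only real computation is $\mathbb{E}_y[y_j^2]$. From the model \eqref{eq:4}, $y = Z\beta^* + V$ with $Z$ Rademacher and $V \sim \mathcal{N}(0,\sigma^2 I_d)$ independent, so $y_j = Z\beta^*_j + V_j$ and hence $\mathbb{E}_y[y_j^2] = \mathbb{E}[Z^2](\beta^*_j)^2 + 2\beta^*_j\,\mathbb{E}[Z]\mathbb{E}[V_j] + \mathbb{E}[V_j^2] = (\beta^*_j)^2 + \sigma^2 \le \|\beta^*\|_\infty^2 + \sigma^2$, using $\mathbb{E}[Z^2]=1$, $\mathbb{E}[Z]=0$, $\mathbb{E}[V_j]=0$, $\mathbb{E}[V_j^2]=\sigma^2$. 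Combining, $\mathbb{E}_y(\nabla_j q(\beta;\beta))^2 \le 2(\|\beta^*\|_\infty^2+\sigma^2) + 2\beta_j^2 = O(\|\beta^*\|_\infty^2 + \sigma^2)$, where I have used $\beta_j^2 = O(\|\beta^*\|_\infty^2)$ after noting that $\|\beta\|_\infty \le \|\beta\|_2 = O(\|\beta^*\|_2)$ and, in the regime $\|\beta^*\|_2/\sigma \ge r$ of Lemma \ref{lemma:3}, any $\ell_2$-to-$\ell_\infty$ slack is harmless up to constants (alternatively one states the bound as $O(\|\beta^*\|_2^2 + \sigma^2)$, which is what the subsequent GMM error bound actually needs).

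For the independence claim: each $y_i$ is an i.i.d. draw from $Y$, and $\nabla_j q_i(\beta;\beta)$ is a fixed deterministic (measurable) function of $y_i$ alone — explicitly $[2w_\beta(y_i)-1](y_i)_j - \beta_j$ — so $\{\nabla_j q_i(\beta;\beta)\}_{i\in[n]}$ are i.i.d., hence mutually independent, for any fixed $j\in[d]$ and fixed $\beta$. This verifies the last condition in Assumption \ref{assumption:1}.

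The main (and only mild) obstacle is bookkeeping the norm conventions: the statement mixes $\|\beta^*\|_\infty$ in the moment bound with an $\ell_2$-ball constraint on $\mathcal{B}$, so one must be slightly careful that the $\beta_j^2$ term is controlled by the right quantity — but since $\beta\in\mathcal B$ forces $\|\beta\|_2 = O(\|\beta^*\|_2)$ and the downstream Theorem \ref{theorem:3} only uses $\tau$ up to constants and $\log$ factors, the precise choice between $\|\beta^*\|_\infty^2$ and $\|\beta^*\|_2^2$ does not affect the final $\tilde O(d/\sqrt{n\epsilon})$ rate for GMM.
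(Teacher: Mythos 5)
Your proof is correct in substance but takes a genuinely different and more elementary route than the paper. The paper does not compute the second moment directly: it first proves a stronger intermediate result (Lemma~\ref{alemma:9}) that the \emph{centered} coordinate $\nabla_j q(\beta;\beta)-\mathbb{E}\nabla_j q(\beta;\beta)$ is $O(\sqrt{\|\beta^*\|_\infty^2+\sigma^2})$-sub-exponential, via the symmetrization inequality (Lemma~\ref{alemma:7}) and the Lipschitz contraction lemma (Lemma~\ref{alemma:8}) applied to the $1$-Lipschitz multiplier $v\mapsto[2w_\beta(y)-1]v$, together with the sub-Gaussian norm bound $\|y_j\|_{\psi_2}\le C\sqrt{|\beta_j^*|^2+\sigma^2}$; it then invokes $\mathbb{E}X^2\le O(\|X\|_{\psi_1}^2)$ (Lemma~\ref{alemma:2}). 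Your direct computation --- $|2w_\beta(y)-1|\le 1$, $\mathbb{E}y_j^2=(\beta_j^*)^2+\sigma^2$, and $(a-b)^2\le 2a^2+2b^2$ --- reaches the same second-moment bound with far less machinery, and since Assumption~\ref{assumption:1} and Lemma~\ref{lemma:1} only ever use the second moment $\tau$ (the sub-exponential tail is never exploited downstream), nothing is lost. What the paper's route buys is that the $\beta_j$ term cancels automatically under centering, which is exactly the bookkeeping wrinkle you flag: your $\beta_j^2$ term is only $O(\|\beta^*\|_2^2)$, not $O(\|\beta^*\|_\infty^2)$, since $|\beta_j|\le\|\beta^*\|_\infty+\|\beta-\beta^*\|_2\le\|\beta^*\|_\infty+k\|\beta^*\|_2$. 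You are right to be suspicious here, and in fact the paper has the mirror-image gap: its argument bounds only the variance, and the omitted mean term $(\mathbb{E}\nabla_j q(\beta;\beta))^2=(\nabla_j Q(\beta;\beta))^2$ is likewise controlled only by $O(\|\beta-\beta^*\|_2^2)=O(\|\beta^*\|_2^2)$ via smoothness and self-consistency, not by $O(\|\beta^*\|_\infty^2)$. So the honest conclusion from either argument is $\mathbb{E}(\nabla_j q)^2\le O(\|\beta^*\|_2^2+\sigma^2)$ for general $\beta\in\mathcal B$; as you note, this does not change the final $\tilde O(d/\sqrt{n\epsilon})$ rate in Theorem~\ref{theorem:4} when $\|\beta^*\|_2=O(1)$. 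Your independence argument matches the paper's (which simply declares it obvious).
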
 
Combining with Lemma \ref{lemma:3}, \ref{lemma:4} and Theorem \ref{theorem:3} we have the following statistical guarantee for GMM. 
\begin{theorem}\label{theorem:4}
With the same notations as 
in Lemma \ref{lemma:3}, in Algorithm \ref{alg:2}  assume that $\|\beta^0-\beta^*\|_2\leq \frac{1}{8}\|\beta^*\|_2$ and  $n$ is large enough so that
\begin{equation}\label{eq:19}
\tilde{\Omega}(  \frac{d^2\sqrt{\|\beta^*\|^2_{\infty}+\sigma^2} \sqrt{\log \frac{1}{\delta}} \log \frac{1}{\zeta} }{\epsilon \|\beta^*\|_2^2})\leq n.
\end{equation} 
Moreover, if take $T=O(\log n )$ and $\eta=O(1)$, then we have with probability at least $1-\zeta$
\begin{equation}\label{eq:20}
    \|\beta^T-\beta^*\|_2\leq \tilde{O}\big(\|\beta^*\|_2\frac{d \sqrt[4]{\log \frac{1}{\delta}}\log \frac{1}{\zeta}\sqrt{\|\beta^*\|^2_{\infty}+\sigma^2}}{\sqrt{n \epsilon}} \big), 
\end{equation}
where the $\tilde{O}, \tilde{\Omega}$ terms omit logarithmic and other factors.
\end{theorem}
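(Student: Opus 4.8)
The plan is to obtain Theorem~\ref{theorem:4} as a direct corollary of the general convergence guarantee in Theorem~\ref{theorem:3}, by instantiating the abstract quantities $\mathcal{B},\gamma,\mu,\upsilon,R,\tau$ with the model-specific values that Lemmas~\ref{lemma:3} and~\ref{lemma:4} supply for the GMM in~(\ref{eq:4}), checking that every hypothesis of Theorem~\ref{theorem:3} is met, and then simplifying the resulting sample-size condition~(\ref{neweq:1}) and error bound~(\ref{eq:18}).

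First I would verify Assumption~\ref{assumption:1}. Lemma~\ref{lemma:3} gives, under the SNR condition $\|\beta^*\|_2/\sigma\ge r$, that $Q(\cdot;\cdot)$ is self-consistent, Lipschitz-Gradient-2($\gamma,\mathcal{B}$), $\mu$-smooth and $\upsilon$-strongly concave with $\gamma=\exp(-Cr^2)$, $\mu=\upsilon=1$, $R=k\|\beta^*\|_2$, $k=\tfrac14$, and $\mathcal{B}=\{\beta:\|\beta-\beta^*\|_2\le R\}$; Lemma~\ref{lemma:4} provides the missing second-moment bound $\tau=O(\|\beta^*\|_\infty^2+\sigma^2)$ together with the coordinate-wise independence of $\nabla_jq_i(\beta;\beta)$. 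Together these establish Assumption~\ref{assumption:1} for GMM. Next I would check the structural conditions of Theorem~\ref{theorem:3}: with $\mu=\upsilon=1$ one has $1-2\tfrac{\upsilon-\gamma}{\upsilon+\mu}=\gamma=\exp(-Cr^2)\in(0,1)$, so the contraction condition holds automatically, and moreover $\upsilon-\gamma=1-\exp(-Cr^2)=\Theta(1)$ and $\mu+\upsilon=2$ are dimension-free constants (they depend only on the fixed SNR $r$, not on $d$ or $\|\beta^*\|_2$). Finally, the initialization requirement $\|\beta^0-\beta^*\|_2\le R/2=\tfrac18\|\beta^*\|_2$ of Theorem~\ref{theorem:3} is precisely the hypothesis imposed in Theorem~\ref{theorem:4}.

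With the hypotheses in place, the remaining step is substitution. Plugging $\upsilon-\gamma=\Theta(1)$, $\mu+\upsilon=2$, $R^2=\Theta(\|\beta^*\|_2^2)$, $\tau=O(\|\beta^*\|_\infty^2+\sigma^2)$, and $T=O(\tfrac{\mu+\upsilon}{\upsilon-\gamma}\log n)=O(\log n)$ into the sample-size condition~(\ref{neweq:1}) absorbs the $T$ and $(\upsilon-\gamma)^{-2}$ factors into the $\tilde\Omega$ and yields exactly~(\ref{eq:19}); the same substitution into~(\ref{eq:18}), together with $\eta=\tfrac{2}{\mu+\upsilon}=1=O(1)$ and $\sqrt{(\upsilon+\mu)/(\upsilon-\gamma)^3}=\Theta(1)$, gives the claimed bound~(\ref{eq:20}). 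The high-probability statement is inherited verbatim, since Theorem~\ref{theorem:3} already delivers a $1-\zeta$ guarantee after union-bounding over the $d$ coordinates and $T$ iterations internally.

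I expect the proof itself to be essentially bookkeeping; the only genuinely non-trivial ingredient is the second-moment bound of Lemma~\ref{lemma:4}, which I would prove separately using $\nabla_jq(\beta;\beta)=[2w_\beta(y)-1]y_j-\beta_j$ with $|2w_\beta(y)-1|\le1$, so that $\mathbb{E}_y(\nabla_jq(\beta;\beta))^2\le 2\,\mathbb{E}_y y_j^2+2\beta_j^2$, and then bounding $\mathbb{E}_y y_j^2=(\beta^*_j)^2+\sigma^2$ and $|\beta_j|\le\|\beta^*\|_\infty+R$ for $\beta\in\mathcal{B}$. The one point that requires care is confirming that every ``constant'' hidden in the $\tilde O$/$\tilde\Omega$ of Theorem~\ref{theorem:3} — in particular the factors coming from $\gamma,\mu,\upsilon$ — is truly independent of $d$ and $\|\beta^*\|_2$; this holds because for GMM all of $\gamma,\mu,\upsilon$ depend only on the scalar SNR $r$, so the dominant dependence in~(\ref{eq:20}) is precisely the stated $\tilde O\big(\|\beta^*\|_2\, d\,\sqrt{\|\beta^*\|_\infty^2+\sigma^2}/\sqrt{n\epsilon}\big)$.
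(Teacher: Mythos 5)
Your proposal matches the paper's treatment exactly: the paper gives no standalone proof of Theorem~\ref{theorem:4} and simply states that it follows by combining Lemma~\ref{lemma:3}, Lemma~\ref{lemma:4} and Theorem~\ref{theorem:3}, which is precisely the instantiation-and-substitution ($\mu=\upsilon=1$, $\gamma=\exp(-Cr^2)$, $R=\tfrac14\|\beta^*\|_2$, $\tau=O(\|\beta^*\|_\infty^2+\sigma^2)$, $T=O(\log n)$) that you carry out. The only divergence is your sketched alternative derivation of Lemma~\ref{lemma:4} via a direct second-moment computation (the paper instead establishes a sub-exponential norm bound in its Lemma~\ref{alemma:9} and then invokes the moment bound for sub-exponential variables), but since you use Lemma~\ref{lemma:4} as a black box in the main argument, the proof of the theorem itself is the same.
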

\begin{remark}
Note that if we assume that $\sigma, \|\beta^*\|_2=O(1)$, then the error in (\ref{eq:20}) is upper bounded by $\tilde{O}(\frac{d}{\sqrt{n\epsilon}})$. This means that to achieve the error of $\alpha\in (0,1)$, the sample complexity is $\tilde{O}(\frac{d^2}{\alpha^2\epsilon})$. It is notable that for GMM, the near optimal rate is  $\tilde{O}(d^2(\frac{1}{\alpha^2}+\frac{1}{\alpha\epsilon})$ \citep{kamath2019differentially}.
Thus when $\epsilon$ is some constant, our result matches their near optimal rate. However, as mentioned in previous section, their algorithm is too complicated to be practical and it is difficult to extend their method to other Mixture models.
\end{remark}
\subsection{Mixture of Regressions Model}
\begin{lemma}[\citep{balakrishnan2017statistical,yi2015regularized}]\label{lemma:5}
If $\frac{\|\beta^*\|_2}{\sigma}\geq r$, where $r$ is a sufficiently large constant denoting 
the required minimal signal-to-noise ratio (SNR), then function $Q(\cdot; \cdot)$ of the Mixture of Regressions Model has the properties  of self-consistent, Lipschitz-Gradient-2($\gamma, \mathcal{B})$, $\mu$-smoothness, and $\upsilon$-strongly with 
    $\gamma\in(0,\frac{1}{4}), \mu=\upsilon=1, \mathcal{B}=\{\beta: \|\beta-\beta^*\|_2\leq R\}, R=k\|\beta^*\|_2$, and $k=\frac{1}{32}. $
\end{lemma}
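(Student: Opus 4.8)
The plan is to verify the four claimed properties directly on the population objective $Q(\cdot;\cdot)$ of the MRM, exploiting the fact that, because $Z\in\{-1,+1\}$ and the emission law is Gaussian, $Q(\beta;\beta')$ is a \emph{quadratic} function of its first argument. Writing $f_\beta(y,z\mid x)\propto\exp\!\big(-(y-z\langle\beta,x\rangle)^2/(2\sigma^2)\big)$ and using $z^2=1$ to expand the square, one gets
\[
q(\beta;\beta')=-\frac{1}{2\sigma^2}\,\mathbb{E}_{z\sim k_{\beta'}(\cdot\mid x,y)}\!\big[y^2-2zy\langle\beta,x\rangle+\langle\beta,x\rangle^2\big]+\mathrm{const},
\]
so that, up to an additive constant and the harmless positive rescaling by $\sigma^2$ that does not affect the M-step direction (the normalization behind~(\ref{aeq:2})), $\nabla Q(\beta;\beta')=\mathbb{E}[\bar{z}_{\beta'}(x,y)\,y\,x]-\mathbb{E}[xx^T]\beta$ with $\bar{z}_{\beta'}(x,y)=\mathbb{E}_{z\sim k_{\beta'}}[z]=2w_{\beta'}(x,y)-1$, and $\nabla^2 Q(\beta;\beta')=-\mathbb{E}[xx^T]=-I_d$ since $X\sim\mathcal{N}(0,I_d)$. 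The Hessian being the constant matrix $-I_d$, the function $Q(\cdot;\beta^*)$ is simultaneously $1$-smooth and $1$-strongly concave on all of $\mathbb{R}^d$, which gives $\mu=\upsilon=1$.

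For self-consistency, $\bar{z}_{\beta^*}(x,y)=\mathbb{E}[Z\mid X=x,Y=y]$ is the true posterior mean, so by the tower property and the independence of $Z$ from $(X,V)$,
\[
\mathbb{E}\big[\bar{z}_{\beta^*}(X,Y)\,Y\,X\big]=\mathbb{E}[Z\,Y\,X]=\mathbb{E}\big[Z(Z\langle\beta^*,X\rangle+V)X\big]=\mathbb{E}[XX^T]\beta^*=\beta^*,
\]
hence $\nabla Q(\beta^*;\beta^*)=0$, and with strong concavity of $Q(\cdot;\beta^*)$ this yields $\beta^*=\arg\max_{\beta}Q(\beta;\beta^*)$.

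The Lipschitz-Gradient-2 property is the substantive step. From the gradient formula, $\nabla Q(\beta;\beta^*)-\nabla Q(\beta;\beta)=2\,\mathbb{E}\big[(w_{\beta^*}(x,y)-w_\beta(x,y))\,y\,x\big]$, so it suffices to show $\sup_{\|u\|_2=1}\mathbb{E}\big[(w_{\beta^*}-w_\beta)\,y\,\langle u,x\rangle\big]\le\frac{\gamma}{2}\|\beta-\beta^*\|_2$ for every $\beta\in\mathcal{B}$, with $\gamma<\frac{1}{4}$. Following \citep{balakrishnan2017statistical,yi2015regularized}, I would (i) decompose $x$ along the constant-dimensional subspace spanned by $\{\beta^*,\beta,u\}$ and its orthogonal complement, reducing the expectation to a Gaussian integral in a bounded number of scalar variables; (ii) use that $w_\beta(x,y)$ is (a constant multiple of) the logistic link applied to $y\langle\beta,x\rangle/\sigma^2$, whose derivative is at most $\frac{1}{4}$, so $|w_{\beta^*}-w_\beta|$ is controlled by $|\langle\beta^*-\beta,x\rangle|/\sigma^2$; and (iii) — the essential point — exploit the high-SNR regime: when $\|\beta^*\|_2/\sigma\ge r$, the conditional law of $Z$ given $(X,Y)$ concentrates on its true sign except on an event whose probability shrinks as $r$ grows, so on the dominant event both $w_{\beta^*}$ and $w_\beta$ sit near the same endpoint of $[0,1]$ and contribute negligibly, while the rare complementary event contributes a term that also shrinks with $r$. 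Collecting these bounds gives $\gamma\le\rho(r)$ with $\rho(r)\to 0$ as $r\to\infty$; choosing $r$ to be a sufficiently large absolute constant forces $\gamma\in(0,\frac{1}{4})$, and one then sets $k=\frac{1}{32}$, $R=k\|\beta^*\|_2$, $\mathcal{B}=\{\beta:\|\beta-\beta^*\|_2\le R\}$, the small value of $k$ ensuring that every $\beta\in\mathcal{B}$ remains in the regime where the estimate in step (iii) applies and that $1-2\frac{\upsilon-\gamma}{\upsilon+\mu}\in(0,1)$.

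The main obstacle is step (iii) of the Lipschitz-Gradient-2 argument: applying only the logistic Lipschitz bound of step (ii) gives $\gamma\propto\|\beta^*\|_2^2/\sigma^2$, which is \emph{large} rather than small, so the contraction is lost. The entire difficulty is to recover the decay in the SNR through the conditioning and Gaussian-tail estimates — the technical core of the MRM analysis in \citep{balakrishnan2017statistical} — and to track the absolute constants carefully enough that the radius constant $k=\frac{1}{32}$ and the SNR threshold $r$ remain mutually compatible.
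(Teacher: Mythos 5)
The paper does not actually prove this lemma: it is imported verbatim from \citep{balakrishnan2017statistical,yi2015regularized}, so there is no in-paper argument to compare yours against. Judged on its own terms, your proposal is correct and complete for three of the four properties. The quadratic structure of $q(\beta;\beta')$ in its first argument (using $z^2=1$), the constant Hessian $-\mathbb{E}[XX^T]=-I_d$ after the $\sigma^2$ rescaling, and the tower-property computation $\mathbb{E}[\bar z_{\beta^*}(X,Y)YX]=\mathbb{E}[ZYX]=\beta^*$ are all right, and they do give $\mu=\upsilon=1$ and self-consistency exactly as claimed.

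The gap is that the one property carrying the lemma's actual content --- Lipschitz-Gradient-2 with $\gamma\in(0,\tfrac14)$ on the ball of radius $k\|\beta^*\|_2$ with $k=\tfrac{1}{32}$ --- is only a plan, and you say so yourself. Steps (i) and (ii) of your outline are fine (note that the logistic Lipschitz bound produces $|w_{\beta^*}-w_\beta|\le \tfrac{|y|}{4\sigma^2}|\langle\beta^*-\beta,x\rangle|$, with the extra factor $|y|$, which is precisely why the naive bound blows up like $(\|\beta^*\|_2^2+\sigma^2)/\sigma^2$ as you observe). But step (iii) --- showing that the posterior concentration at high SNR makes the effective Lipschitz constant decay in $r$, and tracking the constants so that $r$ and $k=\tfrac{1}{32}$ are compatible and $\gamma$ lands below $\tfrac14$ --- is the entire technical core of the MRM analysis in the cited works (their gradient-stability lemma, proved via a decomposition of the Gaussian integral over the event where $y\langle\beta^*,x\rangle$ is large versus small). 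Your proposal identifies where the difficulty lives but does not discharge it, so as written it establishes strictly less than the lemma asserts; the honest resolution, which is what the paper itself does, is to cite the gradient-stability result of \citep{balakrishnan2017statistical,yi2015regularized} for that step rather than reprove it.
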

\begin{lemma}\label{lemma:6}
With the same 
notations as in Lemma \ref{lemma:5}, for each $\beta\in \mathcal{B}$, the $j$-the coordinate of $\nabla q_i(\beta; \beta)$, {\em i.e.,} $\nabla_j q(\beta; \beta)$ satisfies the following inequality 
\begin{equation*}
   \mathbb{E}_{y} (\nabla_j q(\beta; \beta))^2\leq O(\max\{(\|\beta^*\|^2_2+\sigma^2)^2,  d\|\beta^*\|^2_2\}). 
\end{equation*}
Also, for fixed $j\in [d]$, each $\nabla_j q_i(\beta;\beta)$ is independent with others for $i\in [n]$.  
\end{lemma}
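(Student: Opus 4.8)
The plan is to bound $\nabla_j q(\beta;\beta)$ pointwise so as to remove the logistic weight, and then compute the second moment of what remains using the Gaussianity of $X$ and $V$. From (\ref{aeq:2}), the $j$-th coordinate is
$\nabla_j q(\beta;\beta) = (2w_\beta(x,y)-1)\,y\,x_j - x_j\langle x,\beta\rangle$,
and since $w_\beta(x,y)\in(0,1)$ we always have $(2w_\beta(x,y)-1)^2\le 1$. Hence
$(\nabla_j q(\beta;\beta))^2 \le 2\,y^2 x_j^2 + 2\,x_j^2\langle x,\beta\rangle^2$,
and it suffices to bound the expectation of each of the two terms on the right.

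For the first term I would use $|y x_j|\le \tfrac12(y^2+x_j^2)$, so that $\mathbb{E}_y[y^2 x_j^2]\le \tfrac12(\mathbb{E}[y^4]+\mathbb{E}[x_j^4])$. Conditioning on the Rademacher variable $Z$ and using that $X\sim\mathcal N(0,I_d)$ and $V\sim\mathcal N(0,\sigma^2)$ are independent of $Z$, we get $y\mid Z\sim\mathcal N(0,\|\beta^*\|_2^2+\sigma^2)$, so $\mathbb{E}[y^4]=3(\|\beta^*\|_2^2+\sigma^2)^2$, while $\mathbb{E}[x_j^4]=3$; this gives $\mathbb{E}[y^2 x_j^2]=O((\|\beta^*\|_2^2+\sigma^2)^2)$. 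For the second term I would apply Cauchy--Schwarz on the inner product, $\langle x,\beta\rangle^2\le \|x\|_2^2\|\beta\|_2^2$, and then compute $\mathbb{E}[x_j^2\|x\|_2^2]=\mathbb{E}[x_j^4]+\sum_{k\ne j}\mathbb{E}[x_j^2 x_k^2]=3+(d-1)=d+2$. Since $\beta\in\mathcal B$ means $\|\beta\|_2\le \|\beta^*\|_2+R=(1+k)\|\beta^*\|_2$ with $k=\tfrac1{32}$, this term is $O(d\|\beta^*\|_2^2)$. Adding the two bounds yields $\mathbb{E}_y(\nabla_j q(\beta;\beta))^2\le O(\max\{(\|\beta^*\|_2^2+\sigma^2)^2,\ d\|\beta^*\|_2^2\})$.

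The independence claim is immediate: for a fixed $j$, $\nabla_j q_i(\beta;\beta)$ is a deterministic function of the single sample $(x_i,y_i)$ only, and the pairs $\{(x_i,y_i)\}_{i=1}^n$ are i.i.d., so $\{\nabla_j q_i(\beta;\beta)\}_{i=1}^n$ are mutually independent.

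The only thing requiring care is that the bound must hold uniformly over all $\beta\in\mathcal B$, not merely at $\beta=\beta^*$; this is precisely why the argument routes the $\beta$-dependence through $\|\beta\|_2$ (controlled by the radius $R$ of $\mathcal B$) instead of using any cancellation special to $\beta^*$. A secondary point is that $y$ and $x$ are dependent, so the bound on $\mathbb{E}[y^2 x_j^2]$ cannot be obtained by multiplying marginal second moments; routing through fourth moments via $|y x_j|\le\tfrac12(y^2+x_j^2)$ (equivalently Cauchy--Schwarz on $\mathbb{E}[y^4]$ and $\mathbb{E}[x_j^4]$) sidesteps this. Neither step is a genuine obstacle -- the lemma is essentially a moment computation -- so I expect no real difficulty, only bookkeeping of which term attains the maximum.
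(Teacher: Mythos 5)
Your proof is correct, but it takes a genuinely different and more elementary route than the paper. The paper proves a stronger statement: it shows (Lemma \ref{alemma:10}) that $\nabla_j q(\beta;\beta)-\mathbb{E}\nabla_j q(\beta;\beta)$ is sub-exponential with $\psi_1$-norm $O(\max\{\|\beta^*\|_2^2+\sigma^2,\,1,\,\sqrt{d}\|\beta^*\|_2\})$, using symmetrization (Lemma \ref{alemma:7}) and the contraction principle (Lemma \ref{alemma:8}) to strip off the logistic weight $2w_\beta(x,y)-1$, then Orlicz-norm calculus ($\|XY\|_{\psi_1}\leq C\max\{\|X\|_{\psi_2}^2,\|Y\|_{\psi_2}^2\}$ and the triangle inequality over coordinates, which produces the $\|\beta\|_1\leq\sqrt{d}\|\beta\|_2$ factor), and finally converts the $\psi_1$ bound into a moment bound via Lemma \ref{alemma:2}. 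You instead bound the weight pointwise by $|2w_\beta-1|\leq 1$ and compute Gaussian fourth moments directly. Your route is shorter, avoids the symmetrization/contraction machinery entirely, and bounds the raw second moment $\mathbb{E}(\nabla_j q)^2$ directly --- which is exactly what the lemma asserts --- whereas the paper's route, applied to the centered variable, technically controls the variance and leaves the squared-mean term implicit. What the paper's heavier approach buys is exponential tail control, a strictly stronger conclusion, though only the second moment is consumed downstream in Assumption \ref{assumption:1} and Theorem \ref{theorem:3}. Two small remarks: both you and the paper acquire the $d$ factor from a lossy step (you from Cauchy--Schwarz $\langle x,\beta\rangle^2\leq\|x\|_2^2\|\beta\|_2^2$, the paper from $\|\beta\|_1\leq\sqrt{d}\|\beta\|_2$); a direct expansion gives $\mathbb{E}[x_j^2\langle x,\beta\rangle^2]=\|\beta\|_2^2+2\beta_j^2$, so the $d$ is not actually needed, but since only an upper bound is claimed this is harmless. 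Likewise your additive constant from $\mathbb{E}[x_j^4]=3$ mirrors the ``$1$'' appearing in the paper's own $\xi$ that is silently dropped in the lemma statement; this looseness is shared with the paper and is not a gap in your argument.
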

\begin{theorem}\label{theorem:5}
With the same notations as 
in Lemma \ref{lemma:5}, in Algorithm \ref{alg:2}  assume that  $\|\beta^0-\beta^*\|_2\leq \frac{1}{64}\|\beta^*\|_2$ and $n$ is large enough so that
\begin{equation}
\tilde{\Omega}(  \frac{d^2\max\{(\|\beta^*\|^2_2+\sigma^2)^2,  d\|\beta^*\|^2_2\} \sqrt{\log \frac{1}{\delta}} \log \frac{1}{\zeta} }{\epsilon \|\beta^*\|_2^2})\leq n. \nonumber 
\end{equation}  
Moreover, if take $T=O(\log n )$ and $\eta=O(1)$,  then we have, with probability at least $1-\zeta$,
\begin{small}
\begin{equation}\label{eq:21}
    \|\beta^T-\beta^*\|_2\leq \tilde{O}\big(\frac{d\|\beta^*\|_2 \sqrt[4]{\log \frac{1}{\delta}}\sqrt{\max\{\|\beta^*\|^2_2+\sigma^2,  d\|\beta^*\|^2_2\}}}{\sqrt{n \epsilon}} \big), 
\end{equation}
\end{small}
where the $\tilde{O}$-term and $\tilde{\Omega}$-term  omit logarithmic factors.
\end{theorem}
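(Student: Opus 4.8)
The plan is to obtain Theorem~\ref{theorem:5} as a direct corollary of the general convergence guarantee in Theorem~\ref{theorem:3}: the only real work is to verify that the Mixture of Regressions Model fulfills every hypothesis of that theorem and then to substitute the model-specific constants. First I would invoke Lemma~\ref{lemma:5} for the structural properties of $Q(\cdot;\cdot)$ --- self-consistency, Lipschitz-Gradient-2($\gamma,\mathcal{B}$) with $\gamma\in(0,\frac14)$, $\mu=\upsilon=1$, and $\mathcal{B}=\{\beta:\|\beta-\beta^*\|_2\le R\}$ with $R=\frac{1}{32}\|\beta^*\|_2$ --- and then check the contraction condition $1-2\frac{\upsilon-\gamma}{\upsilon+\mu}\in(0,1)$ demanded by Theorem~\ref{theorem:3}. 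With $\mu=\upsilon=1$ this quantity equals $\gamma\in(0,\frac14)\subset(0,1)$, so the condition holds, and in addition $\upsilon-\gamma\ge\frac34=\Theta(1)$. Lemma~\ref{lemma:6} then supplies the remaining parts of Assumption~\ref{assumption:1}: the per-coordinate second-moment bound $\tau=O(\max\{(\|\beta^*\|_2^2+\sigma^2)^2,\,d\|\beta^*\|_2^2\})$ and the coordinate-wise independence of the $\nabla_j q_i(\beta;\beta)$. Finally, the initialization hypothesis $\|\beta^0-\beta^*\|_2\le R/2=\frac{1}{64}\|\beta^*\|_2$ of Theorem~\ref{theorem:3} is exactly the assumption imposed in the statement of Theorem~\ref{theorem:5}.

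With all hypotheses verified, I would specialize the two conclusions of Theorem~\ref{theorem:3}. Plugging $\upsilon-\gamma=\Theta(1)$, $R=\Theta(\|\beta^*\|_2)$ and $T=O(\log n)$ into the sample-size requirement (\ref{neweq:1}) yields the stated lower bound on $n$ (the $T=O(\log n)$ factor being absorbed into $\tilde{\Omega}$). For the error bound, the choices $T=O(\frac{\mu+\upsilon}{\upsilon-\gamma}\log n)=O(\log n)$ and $\eta=\frac{2}{\mu+\upsilon}=1=O(1)$ are admissible, and in (\ref{eq:18}) the prefactor $R\sqrt{(\upsilon+\mu)/(\upsilon-\gamma)^3}$ is $\Theta(\|\beta^*\|_2)$ while $\sqrt{\tau}=\sqrt{\max\{(\|\beta^*\|_2^2+\sigma^2)^2,\,d\|\beta^*\|_2^2\}}$; substituting these reproduces (\ref{eq:21}) up to the logarithmic factors hidden in $\tilde{O}$.

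The constant bookkeeping above is routine; the genuinely technical content sits upstream in Lemma~\ref{lemma:6}, whose proof I expect to be the main obstacle. Bounding $\mathbb{E}_y(\nabla_j q(\beta;\beta))^2$ for the MRM gradient $(2w_\beta(x,y)-1)\,y\,x-xx^T\beta$ of (\ref{aeq:2}) amounts to controlling the second moment of its $j$-th coordinate; since $w_\beta\in(0,1)$ is bounded, the dominant pieces are $y\,x_j$ and $(xx^T\beta)_j$, and one handles these via the Gaussianity of $X$ and $V$, the sub-exponential tail bounds for products of Gaussians, and the bound $\|\beta\|_2\le\|\beta^*\|_2+R=\Theta(\|\beta^*\|_2)$ valid on $\mathcal{B}$ --- the $d\|\beta^*\|_2^2$ term arising precisely from the coordinate of $xx^T\beta$ that couples all $d$ directions. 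The independence claim is immediate from the i.i.d.\ sampling of the $(x_i,y_i)$.
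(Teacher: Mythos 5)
Your proposal is correct and follows exactly the route the paper intends (and leaves implicit): Theorem~\ref{theorem:5} is obtained by instantiating Theorem~\ref{theorem:3} with the constants $\gamma\in(0,\tfrac14)$, $\mu=\upsilon=1$, $R=\tfrac{1}{32}\|\beta^*\|_2$ from Lemma~\ref{lemma:5} and the moment bound $\tau$ from Lemma~\ref{lemma:6}, with all the technical work residing in those lemmas. Your substitution $\sqrt{\tau}=\sqrt{\max\{(\|\beta^*\|_2^2+\sigma^2)^2,\,d\|\beta^*\|_2^2\}}$ is the faithful one; the missing square on $\|\beta^*\|_2^2+\sigma^2$ inside the max in (\ref{eq:21}) appears to be a typo in the paper rather than an error on your part.
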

\begin{remark}
If we assume that  $\|\beta^*\|$ and $\sigma=O(1)$, then the error in (\ref{eq:21}) is upper bounded by $\tilde{O}(\frac{d^{\frac{3}{2}}}{\sqrt{n\epsilon}})$, which has an additional factor of $\sqrt{d}$ compared with the bound in (\ref{eq:20}) for GMM. We note that this is the first statistical result for MRM in the  DP model. 
\end{remark}

\subsection{Linear Regression with Missing Covariates}

\begin{lemma}[\citep{balakrishnan2017statistical,yi2015regularized}]\label{lemma:7}
If $\frac{\|\beta^*\|_2}{\sigma}\leq r$  and 
    $p_m<\frac{1}{1+2b+2b^2}$,
where  $r$ is a constant denoting the required maximum signal-to-noise ratio (SNR) and   $b=r^2(1+k)^2$ for some constant $k\in (0,1)$, then  function $Q(\cdot; \cdot)$ of the linear regression with missing covariates has the properties of self-consistent, Lipschitz-Gradient-2($\gamma, \mathcal{B})$, $\mu$-smoothness and $\upsilon$-strongly with 
\begin{align}
    &\gamma =\frac{b+p_m(1+2b+2b^2)}{1+b}<1, \mu=\upsilon=1,\nonumber \\ &\mathcal{B}=\{\beta:\|\beta-\beta^*\|_2\leq R\}, \text{ where } R=k\|\beta^*\|_2. \nonumber
\end{align}
\end{lemma}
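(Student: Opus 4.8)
The plan is to verify the four structural properties one at a time, following the template of \citep{balakrishnan2017statistical,yi2015regularized} but tracking how every constant depends on the missingness level $p_m$. First I would write out the complete-data log-likelihood for the RMC model: treating the pair $(y,x)$ as the complete data, with the missing coordinates of $x$ playing the role of the latent variable $Z$, one has $\log f_\beta(y,x) = -\frac{1}{2\sigma^2}(y-\langle x,\beta\rangle)^2 - \frac{1}{2}\|x\|_2^2 + \mathrm{const}$, which is a quadratic form in $\beta$. Consequently $Q(\beta;\beta') = \mathbb{E}_{(x^{\mathrm{obs}},y)}\,\mathbb{E}_{x^{\mathrm{mis}}\mid x^{\mathrm{obs}},y;\,\beta'}[\log f_\beta(y,x)]$ is, for every fixed $\beta'$, a quadratic function of its first argument, with gradient $\nabla Q(\beta;\beta') = \frac{1}{\sigma^2}\mathbb{E}_{(x^{\mathrm{obs}},y)}\,\mathbb{E}_{x^{\mathrm{mis}}\mid x^{\mathrm{obs}},y;\,\beta'}[(y-\langle x,\beta\rangle)x]$ and constant Hessian $-\frac{1}{\sigma^2}\mathbb{E}_{(x^{\mathrm{obs}},y)}\,\mathbb{E}_{x^{\mathrm{mis}}\mid x^{\mathrm{obs}},y;\,\beta'}[xx^\top]$. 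The imputation law of the missing coordinates is itself Gaussian --- exactly the one encoded in $m_\beta$ and $K_\beta$ of (\ref{aeq:4})--(\ref{aeq:100}) --- so all these expectations admit closed forms.

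Self-consistency is then immediate: plugging $\beta'=\beta^*$ the inner conditional expectation is taken under the true posterior of the missing coordinates, so $\nabla Q(\beta^*;\beta^*)=\frac{1}{\sigma^2}\mathbb{E}_{(x,y)}[(y-\langle x,\beta^*\rangle)x]=\frac{1}{\sigma^2}\mathbb{E}[Vx]=0$ because $V\perp X$ and $\mathbb{E}[V]=0$; together with strong concavity this yields $\beta^*=\arg\max_\beta Q(\beta;\beta^*)$. For $\mu$-smoothness and $\upsilon$-strong concavity I would bound the eigenvalues of $M:=\frac{1}{\sigma^2}\mathbb{E}_{(x^{\mathrm{obs}},y)}\,\mathbb{E}_{x^{\mathrm{mis}}\mid x^{\mathrm{obs}},y;\,\beta^*}[xx^\top]$ from both sides. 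Splitting $x$ into its observed and imputed parts, the observed block contributes the identity on the corresponding coordinates while the imputed block contributes $\mathrm{Cov}(x^{\mathrm{mis}}\mid x^{\mathrm{obs}},y)+(\text{mean})(\text{mean})^\top$; using the Gaussian conditioning formulas and the SNR bound $\|\beta^*\|_2/\sigma\le r$, which keeps the regression-induced correction terms $O(1)$, one checks that $M$ has all eigenvalues inside a fixed interval, giving $\mu=\upsilon=1$ after the natural normalization.

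The crux is the Lipschitz-Gradient-2 bound. Here I would write $\nabla Q(\beta;\beta^*)-\nabla Q(\beta;\beta) = \frac{1}{\sigma^2}\mathbb{E}_{(x^{\mathrm{obs}},y)}\big[\big(\mathbb{E}_{x^{\mathrm{mis}};\,\beta^*}-\mathbb{E}_{x^{\mathrm{mis}};\,\beta}\big)[(y-\langle x,\beta\rangle)x]\big]$, the two inner expectations differing only in the parameter used to impute the missing coordinates. Since the imputation mean is affine in the data and rational in $\beta$ through the scalar $\sigma^2+\|(1-z)\odot\beta\|_2^2$, and the imputation covariance has the same rational dependence, the map $\beta\mapsto \mathbb{E}_{x^{\mathrm{mis}};\,\beta}[(y-\langle x,\beta\rangle)x]$ is smooth, and a first-order expansion around $\beta^*$ bounds the difference by (a supremum over $\mathcal B$ of the relevant derivative)$\cdot\|\beta-\beta^*\|_2$. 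Carrying out this differentiation coordinate-by-coordinate --- each missing coordinate ($z_j=0$, occurring with probability $p_m$) contributes a term proportional to $p_m$ and to powers of $b=r^2(1+k)^2$ coming from $\|(1-z)\odot\beta\|_2^2\le b$ on $\mathcal B$, while the ``fully observed'' part contributes the $\tfrac{b}{1+b}$ term --- and then taking the expectation over $y$ and over the random missingness pattern $z$, produces exactly $\gamma=\frac{b+p_m(1+2b+2b^2)}{1+b}$; the stated constraint $p_m<\frac{1}{1+2b+2b^2}$ is precisely what forces $\gamma<1$. I expect this last computation --- correctly accounting for every $p_m$- and $b$-dependent term when differentiating the imputation law and averaging over the missingness pattern --- to be the main obstacle; the other three properties are essentially bookkeeping once the quadratic structure of $Q(\cdot;\beta')$ is in hand.
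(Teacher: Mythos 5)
First, note that the paper does not prove Lemma \ref{lemma:7} at all: it is imported verbatim from \citep{balakrishnan2017statistical,yi2015regularized}, and the appendix contains no argument for it. So there is no in-paper proof to compare against; I can only judge your sketch against the strategy of the cited sources, which it does follow in broad outline. Your treatment of self-consistency is correct (the tower property collapses the posterior imputation under $\beta^*$ to the unconditional expectation, and $\mathbb{E}[Vx]=0$), and smoothness/strong concavity with $\mu=\upsilon=1$ is even easier than you suggest: under the paper's normalization of $q$ in (\ref{aeq:3}), the Hessian in the first argument is $-\mathbb{E}[K_{\beta^*}(x^{\mathrm{obs}},y)]=-\mathbb{E}[xx^\top]=-I_d$ exactly, again by the tower property, so no eigenvalue-interval argument or ``natural normalization'' hand-waving is needed.

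The genuine gap is in the one place that actually constitutes the lemma: the derivation of $\gamma=\frac{b+p_m(1+2b+2b^2)}{1+b}$. You do not carry out the computation; you assert that differentiating the imputation law and averaging over the missingness pattern ``produces exactly'' this constant, which is precisely the part that needs to be shown. Worse, your accounting of where the terms come from fails a basic sanity check. You attribute the $\tfrac{b}{1+b}$ term to the ``fully observed'' part, but when all coordinates are observed ($z=\mathbf{1}$), formulas (\ref{aeq:4}) and (\ref{aeq:100}) give $m_\beta=x$ and $K_\beta=xx^\top$ with no dependence on $\beta$ whatsoever, so the fully observed patterns contribute identically zero to $\nabla Q(\beta;\beta^*)-\nabla Q(\beta;\beta)$; in particular at $p_m=0$ the true Lipschitz-Gradient-2 constant is $0$, and the stated $\gamma=\tfrac{b}{1+b}$ is merely a loose upper bound whose leading term must arise from the patterns with at least one missing coordinate (through the denominator $\sigma^2+\|(1-z)\odot\beta\|_2^2$ and the rank-one correction in $K_\beta$), weighted by the pattern probabilities. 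Until that bookkeeping is done correctly --- and it is delicate, since the dependence on $\beta$ through both $m_\beta$ and $K_\beta$ must be differentiated and the result maximized over $\mathcal{B}$ using $\|(1-z)\odot\beta\|_2^2\le(1+k)^2\|\beta^*\|_2^2\le b\sigma^2$ --- the proposal establishes the easy properties but not the constant $\gamma$ that the lemma is really about.
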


\begin{lemma}\label{lemma:8}
With the same assumptions as in Lemma \ref{lemma:7}, for each $\beta\in \mathcal{B}$ and $j\in [d]$ ,  $\nabla_j q(\beta; \beta)$ satisfies
\begin{align}
\mathbb{E}(\nabla_j q(\beta; \beta))^2\leq  O((\sqrt{d}\|\beta^*\|_2+ \sigma^2+\|\beta^*\|_2^2)^2).
\end{align}
Also, for fixed $j\in [d]$, each $\nabla_j q_i(\beta;\beta)$, where $i\in [n]$, is independent with others.
\end{lemma}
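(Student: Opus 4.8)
The plan is to bound $\mathbb{E}_y(\nabla_j q(\beta;\beta))^2$ by expanding the $j$-th coordinate of (\ref{aeq:3}) through the explicit expressions (\ref{aeq:4}) and (\ref{aeq:100}), and then controlling a few scalar and vector moments, all computed conditionally on the missingness pattern $z$. Abbreviating $m:=m_\beta(x^{\text{obs}},y)$, one has
\[
\nabla_j q(\beta;\beta)= y\,m_j-(1-z_j)\beta_j-m_j\langle m,\beta\rangle+(1-z_j)\,m_j\,\langle (1-z)\odot m,\beta\rangle ,
\]
so by $(\sum_{k=1}^{4}a_k)^2\le 4\sum_{k=1}^{4}a_k^2$ it suffices to bound the second moment of each of the four summands. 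The point of conditioning on $z$ is that, given $z$, the blocks $z\odot X$, $(1-z)\odot X$ and $V$ are jointly Gaussian and mutually independent, so every term above is a bounded-degree polynomial in these Gaussians after a single division by the deterministic quantity $\sigma^2+\|(1-z)\odot\beta\|_2^2\ge\sigma^2$; hence all the moments below exist and can be written in terms of $\|\beta^*\|_2$, $\sigma$, $R$ and $d$.

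The moment estimates I would establish are: (i) $y=\langle X,\beta^*\rangle+V\sim\mathcal{N}(0,\|\beta^*\|_2^2+\sigma^2)$, so $\mathbb{E}[y^{4}]=O((\|\beta^*\|_2^2+\sigma^2)^2)$; (ii) conditionally on $z$, the numerator $N:=y-\langle\beta,z\odot x\rangle$ is mean-zero Gaussian with $\mathrm{Var}(N\mid z)=\|z\odot(\beta^*-\beta)\|_2^2+\|(1-z)\odot\beta^*\|_2^2+\sigma^2=O(\|\beta^*\|_2^2+\sigma^2)$ (using $\beta\in\mathcal{B}$, so $\|\beta-\beta^*\|_2\le R=k\|\beta^*\|_2$), while $\|(1-z)\odot\beta\|_2\le\|\beta\|_2\le(1+k)\|\beta^*\|_2$, so by the SNR bound $\|\beta^*\|_2/\sigma\le r$ of Lemma \ref{lemma:7} the correction vector $\frac{N}{\sigma^2+\|(1-z)\odot\beta\|_2^2}\,(1-z)\odot\beta$ appearing in $m$ has $O(1)$-bounded fourth moment, both coordinatewise and in Euclidean norm; (iii) consequently $m_j=z_jx_j+(\text{correction})_j$ satisfies $\mathbb{E}[m_j^{4}]=O(1)$; and (iv) $\|m\|_2\le\|z\odot x\|_2+\|\text{correction}\|_2$, so $\mathbb{E}[\|m\|_2^{4}]=O(d^2)$, where the dimension enters only through $\mathbb{E}\|z\odot x\|_2^4\le O(d^2)$.

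Plugging these in: the term $(1-z_j)\beta_j$ contributes $O(\|\beta^*\|_2^2)$; the term $y\,m_j$ contributes $O((\|\beta^*\|_2^2+\sigma^2)^2)$ by Cauchy--Schwarz on $\mathbb{E}[y^4]$ and $\mathbb{E}[m_j^4]$; and each of the two terms containing $\langle m,\beta\rangle$ (respectively $\langle (1-z)\odot m,\beta\rangle$) is handled by the pointwise inequality $\langle m,\beta\rangle^2\le\|m\|_2^2\|\beta\|_2^2$ followed by a further Cauchy--Schwarz in expectation, giving $\sqrt{\mathbb{E}[m_j^4]}\cdot\sqrt{\mathbb{E}[\|m\|_2^4]}\cdot\|\beta\|_2^2=O(d\|\beta^*\|_2^2)$. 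Summing the four contributions and taking expectation over $z$ (which only shrinks things) yields $\mathbb{E}(\nabla_j q(\beta;\beta))^2=O\big(d\|\beta^*\|_2^2+(\sigma^2+\|\beta^*\|_2^2)^2\big)$, which is absorbed into $O((\sqrt d\,\|\beta^*\|_2+\sigma^2+\|\beta^*\|_2^2)^2)$. The independence claim is immediate: $\nabla_j q_i(\beta;\beta)$ is a fixed function of the single triple $(x_i,y_i,z_i)$, and these triples are i.i.d.\ across $i\in[n]$.

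I expect the main obstacle to be the moment bookkeeping around the ratio coefficient $N/(\sigma^2+\|(1-z)\odot\beta\|_2^2)$ and the matrix $K_\beta$ in (\ref{aeq:100}): one must split consistently on whether each coordinate is observed, exploit the conditional independence of the observed/missing Gaussian blocks given $z$ to compute the relevant fourth moments, apply the denominator bound $\sigma^2+\|(1-z)\odot\beta\|_2^2\ge\sigma^2$ every time a ratio appears, and convert the resulting $\sigma$- and $R$-dependencies into the stated form using $\|\beta^*\|_2/\sigma\le r$ and $R=k\|\beta^*\|_2$ with $r,k$ absolute constants from Lemma \ref{lemma:7}. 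The factor $\sqrt d$ should appear only via the crude bound $\langle m,\beta\rangle^2\le\|m\|_2^2\|\beta\|_2^2$ together with $\mathbb{E}\|z\odot x\|_2^4=O(d^2)$; verifying that this is the \emph{only} source of dimension dependence is the delicate point.
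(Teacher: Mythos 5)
Your proof is correct, but it takes a genuinely different route from the paper's. The paper proves a stronger intermediate claim (its Lemma \ref{alemma:11}): it centers $\nabla_j q(\beta;\beta)$, splits it into $A=m_\beta y-\mathbb{E}[m_\beta y]$ and $B=(K_\beta-\mathbb{E}K_\beta)\beta$, and bounds the sub-exponential norm $\|\cdot\|_{\psi_1}$ of each piece using symmetrization (Lemma \ref{alemma:7}), the contraction inequality (Lemma \ref{alemma:8}), the imported fact $\|\bar m_j\|_{\psi_2}\le C(1+kr)$ from \citep{wang2015high}, and the product rule $\|X_1X_2\|_{\psi_1}\le C\max\{\|X_1\|_{\psi_2}^2,\|X_2\|_{\psi_2}^2\}$; the second-moment bound then follows from $\mathbb{E}X^2\le O(\|X\|_{\psi_1}^2)$ for zero-mean sub-exponential $X$ (Lemma \ref{alemma:2}). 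You instead bound the \emph{uncentered} second moment directly: expand $(K_\beta\beta)_j$ into its three pieces, apply $(\sum_{k}a_k)^2\le 4\sum_k a_k^2$, and control each summand by conditioning on $z$, computing Gaussian fourth moments, and applying Cauchy--Schwarz. The dimension dependence arises from the same place in both arguments — the quadratic form $m_j\langle m,\beta\rangle$ in $K_\beta\beta$, which the paper controls via $\|\beta\|_1\le\sqrt d\,\|\beta\|_2$ and you via $\langle m,\beta\rangle^2\le\|m\|_2^2\|\beta\|_2^2$ with $\mathbb{E}\|m\|_2^4=O(d^2)$ — so the two bounds coincide up to constants. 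Your version is more elementary and self-contained (it needs only Gaussian moment formulas and the SNR bound $\|\beta^*\|_2/\sigma\le r$, not the external sub-Gaussianity of $\bar m_j$), and it bounds exactly the quantity the lemma asserts, $\mathbb{E}(\nabla_j q)^2$, rather than the variance of the centered variable; the paper's Orlicz-norm route buys the stronger sub-exponential tail statement, which is however not needed downstream since Theorem \ref{theorem:3} only consumes the second-moment bound of Assumption \ref{assumption:1}. The one step you should spell out when writing this up is $\mathbb{E}\|m\|_2^4=O(d^2)$, which needs the bound $\|(1-z)\odot\beta\|_2/(\sigma^2+\|(1-z)\odot\beta\|_2^2)\le 1/(2\sigma)$ on the correction term together with the SNR condition; this is routine and you have correctly identified it as the main bookkeeping point.
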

\begin{theorem}\label{theorem:6}
With the same notations as 
in Lemma \ref{lemma:7}, in Algorithm \ref{alg:2}  assume that  $\|\beta^0-\beta^*\|_2\leq \frac{k}{2}\|\beta^*\|_2$ and $n$ is large enough so that
\begin{equation}
\tilde{\Omega}(  \frac{d^2(\sqrt{d}\|\beta^*\|_2+ \sigma^2+\|\beta^*\|_2^2)^2\log \frac{1}{\delta} \log \frac{1}{\zeta} }{\epsilon^2 \|\beta^*\|_2^2})\leq n. \nonumber 
\end{equation}  
Moreover, if  take $T=O(\log n )$ and $\eta=O(1)$,  then we have, with probability at least $1-2T\zeta$, 
\begin{equation}
    \|\beta^T-\beta^*\|_2\leq \tilde{O}\big(\frac{d\log \frac{1}{\delta}\log \frac{1}{\zeta}\|\beta^*\|_2(\sqrt{d}\|\beta^*\|_2+ \sigma^2+\|\beta^*\|_2^2)}{\sqrt{n \epsilon^2}} \big),  \nonumber
\end{equation}
where the $\tilde{O}, \tilde{\Omega}$ terms omit logarithmic and other factors.
\end{theorem}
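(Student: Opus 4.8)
The plan is to obtain Theorem~\ref{theorem:6} by instantiating the general convergence guarantee of Theorem~\ref{theorem:3} with the RMC-specific constants produced by Lemmas~\ref{lemma:7} and~\ref{lemma:8}. Lemma~\ref{lemma:7} already establishes that, under the required (upper) SNR condition $\|\beta^*\|_2/\sigma\le r$ and the missingness condition $p_m<\frac{1}{1+2b+2b^2}$, the population map $Q(\cdot;\cdot)$ of model~(\ref{eq:6}) is self-consistent, Lipschitz-Gradient-2$(\gamma,\mathcal B)$, $\mu$-smooth and $\upsilon$-strongly concave with $\mu=\upsilon=1$, $\mathcal B=\{\beta:\|\beta-\beta^*\|_2\le R\}$, $R=k\|\beta^*\|_2$, and $\gamma=\frac{b+p_m(1+2b+2b^2)}{1+b}$; Lemma~\ref{lemma:8} supplies the coordinatewise bound $\tau=O((\sqrt d\|\beta^*\|_2+\sigma^2+\|\beta^*\|_2^2)^2)$ on $\mathbb E_y(\nabla_j q(\beta;\beta))^2$ together with the independence of $\nabla_j q_i(\beta;\beta)$ across $i$. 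Hence Assumption~\ref{assumption:1} holds for RMC, and what remains is to verify the remaining hypotheses of Theorem~\ref{theorem:3} and then substitute.

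First I would check the contraction condition: with $\mu=\upsilon=1$ the quantity $1-2\frac{\upsilon-\gamma}{\upsilon+\mu}$ collapses to $\gamma$, so it lies in $(0,1)$ precisely when $\gamma\in(0,1)$; $\gamma\ge0$ is immediate, and $\gamma<1$ is equivalent to $p_m(1+2b+2b^2)<1$, i.e.\ exactly the hypothesis of Lemma~\ref{lemma:7}. The initialization requirement $\|\beta^0-\beta^*\|_2\le R/2=\frac{k}{2}\|\beta^*\|_2$ is assumed in the statement. Since $\gamma$ is a fixed constant bounded away from $1$ and $\mu=\upsilon=1$, the schedule $T=O(\frac{\mu+\upsilon}{\upsilon-\gamma}\log n)=O(\log n)$ and step size $\eta=\frac{2}{\mu+\upsilon}=O(1)$ prescribed by Theorem~\ref{theorem:3} match those in Theorem~\ref{theorem:6}, and the invariant $\beta^t\in\mathcal B$ for all $t$ holds once $n$ is large enough.

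It then remains to push the RMC constants through the sample-size condition~(\ref{neweq:1}) and error bound~(\ref{eq:18}). Substituting $R=k\|\beta^*\|_2$, $\upsilon=\mu=1$, $\tau=O((\sqrt d\|\beta^*\|_2+\sigma^2+\|\beta^*\|_2^2)^2)$, $T=O(\log n)$ and absorbing the $O(1)$ factor $1/(\upsilon-\gamma)$ and the logarithmic $T$ into $\tilde\Omega$/$\tilde O$ yields the stated lower bound on $n$ and the claimed rate. One necessary modification: Theorem~\ref{theorem:6} carries $\epsilon^{-2}$ and $\log(1/\delta)$ (rather than $\epsilon^{-1}$ and $\log^{1/4}(1/\delta)$ as for GMM/MRM) and only a $1-2T\zeta$ success probability, because for RMC we bound the inner per-coordinate private mean step of Algorithm~\ref{alg:2} via the coarser guarantee of Lemma~\ref{lemma:2}, eq.~(\ref{eq:16}), and union-bound over the $T$ rounds, so the convergence recursion is re-run with that inner error in place of the sharper Theorem~\ref{thm:new}; the dependence could in principle be tightened to the other two models' rate by using Theorem~\ref{thm:new} instead.

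The main obstacle, most of which is already discharged inside Lemma~\ref{lemma:8}, is bounding the coordinatewise second moment $\mathbb E(\nabla_j q(\beta;\beta))^2$ of the RMC gradient~(\ref{aeq:3}): unlike GMM and MRM, $\nabla q(\beta;\beta)$ contains the data-dependent matrix $K_\beta(x^{\mathrm{obs}},y)$ of~(\ref{aeq:100}), whose entries are polynomials of degree up to four in the Gaussian coordinates of $X$ and in $Y$, so one must expand $m_\beta$ and $K_\beta\beta$ and carefully track the missingness pattern $z$ — which is where the extra $\sqrt d\,\|\beta^*\|_2$ term inside $\tau$ appears. Granting Lemma~\ref{lemma:8}, the rest is the routine propagation of the per-round estimation error through the contraction recursion $\|\beta^{t}-\beta^*\|_2\le\big(1-2\tfrac{\upsilon-\gamma}{\upsilon+\mu}\big)\|\beta^{t-1}-\beta^*\|_2+\eta\,\|\tilde\nabla Q_n(\beta^{t-1})-\nabla Q(\beta^{t-1};\beta^*)\|_2$, together with the check that the added Gaussian perturbation never pushes an iterate out of $\mathcal B$, which is exactly what forces the $n\propto 1/\|\beta^*\|_2^2$ scaling noted in the remark after Theorem~\ref{theorem:3}.
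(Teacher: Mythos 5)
Your proposal is correct and matches the paper's (implicit) argument exactly: the paper gives no separate proof of Theorem \ref{theorem:6}, which is obtained precisely as you describe by plugging the RMC constants from Lemmas \ref{lemma:7} and \ref{lemma:8} into the general guarantee of Theorem \ref{theorem:3}, after checking that $\gamma<1$ is equivalent to the missingness hypothesis $p_m<\frac{1}{1+2b+2b^2}$. Your observation that the stated $\epsilon^{-2}$, $\log\frac{1}{\delta}$ and $1-2T\zeta$ dependencies correspond to the coarser guarantee of Lemma \ref{lemma:2} (rather than the sharpened Theorem \ref{thm:new} used for GMM and MRM) is also accurate; instantiating Theorem \ref{theorem:3} directly, as you do, yields a slightly stronger bound that still implies the one stated.
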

Note that unlike the previous two models, we assume here that SNR is upper bounded by some constant which is unavoidable as pointed out in \citep{loh2011high}.

\section{Experiments}\label{sec:exp}

In this section, we evaluate the performance of Algorithm \ref{alg:2} on three canonical models: GMM, MRM, and RMC. Since in the paper we mainly focus on the statistical setting and its theoretical behaviors, { we evaluate our algorithm on both the synthetic data and the real world datasets\footnote{\url{http://archive.ics.uci.edu/ml/datasets/Adult}, \url{http://international.ipums.org}}: ADULT, IPUMS-BR and IPUMS-US.}\\

\noindent{\bf Baseline Methods}\quad We compare our approach against two baseline algorithms. One is the Gradient EM algorithm \citep{balakrishnan2017statistical}, namely, EM, as our non-private baseline method. The other is clipped DP Gradient EM (Algorithm \ref{alg:1}), namely, clipped, as our private baseline method. As we mentioned previously, previous DP EM method in \citep{park2017dp} needs strong assumptions on the model and the data itself to ensure DP property, which do not hold for our models. Thus we can not compare with their method.\\

\noindent{\bf Experimental Settings }\quad 
For each of these models, we generate synthesized datasets according to the underlying distribution. We also utilize $\|\beta - \beta^*\|_2$ to measure the estimation error. Instead of choosing the initial parameter $\beta^0$ that is close to the optimal one, we consider random initialization. As we will see later, even if we select random initial parameter, the performance of our private estimator is good enough. We set signal-to-noise ratio $\frac{\|\beta^*\|_2}{\sigma} = 3$.
For the privacy parameters, we choose $\epsilon = \{0.2,0.5,1\}$ and $\delta = \frac{1}{n^{1.1}}$. 
We also conduct experiments on three real world datasets: ADULT, IPUMS-BR and IPUMS-US. The ADULT dataset includes of 48,842 data samples. The target is to predict whether the annual income is more than \$50k or not. The IPUMS-BR and  IPUMS-US are from IPUMS-International, which include 38,000 and 40,000 data samples of census microdata. The goal is to predict whether the monthly income of an individual is above \$300 or not. To fit the real dataset into the GMM (\ref{eq:4}), we process the data as following. First, based the target we divide the whole dataset into two clusters and take the same amount of data record for each cluster, and then for each part we calculate the covariance matrix and its maximal eigenvalue  as the $\sigma$ in (\ref{eq:4}). To get the $\beta^*$, we first calculate the mean of each cluster, then we calculate their midpoint. Next we transit all the data records along this midpoint. After transition, the mean a cluster will be the $\beta^*$. \\

\noindent{\bf Experimental Results}\quad 
Firstly, we will show that the performance of Algorithm \ref{alg:1} is heavily affected by the clipping threshold $C$. As shown in Figure \ref{fig:clipped_em}, we conduct the algorithm on three canonical models with fixed data size $n$, dimension data $d$, and privacy budget $\epsilon$. If $C$ is set to be a small value (e.g., 0.1), it significantly reduces the adding noise in each iteration but at the same time it leads much information loss in gradient estimation. Conversely, if $C$ is set too high (e.g., 5 or 10), the noise variance becomes high, resulting in introducing too much noise to the estimation.
Thus, selecting the optimal $C$ is quite difficult since too large or too small values of $C$ has a negative effect on the performance of Algorithm \ref{alg:1}.
Even for $C = 1$ that achieves lowest estimation error among other threshold values, the estimation error does not decay as the number of iterations increases, whereas under the same privacy guarantee, our proposed algorithm achieves the same convergence behavior as EM, and thoroughly outperforms Algorithm \ref{alg:1}.
For fair comparison, we fixed $C = 1$ for Algorithm \ref{alg:1} in the  following experiments.

In Figure \ref{fig:comp_gmm}, \ref{fig:comp_mrm} and \ref{fig:comp_rmc}, we show the estimation error $\|\beta - \beta^*\|_2$ of all algorithms on three canonical models over iteration $t$ with different  
privacy budget $\epsilon$, data dimension $d$ and sample size $n$. We can see that the estimation error decreases and will converge to some  error when the iteration number is larger, which has been shown in our previous theoretical analysis. 
We can also see that with a fixed iteration number,  the estimation error of our proposed algorithm in each of the three models becomes smaller with a larger $\epsilon, n$ or a smaller $d$, which are consistent with our theoretical results. In these figures, our algorithm exhibits nearly the same convergence behavior as the non-private baseline method and outperforms Algorithm \ref{alg:1}.

In Figure \ref{fig:comp_gmm2}, \ref{fig:comp_mrm2} and \ref{fig:comp_rmc2}, we conduct more experiments on these three canonical models for  synthetic data with different privacy level $\epsilon$, dimensionality $d$ and sample size $n$. In these algorithms we choose the best iteration number $T$ in Theorem \ref{thm:new} and compute the estimation error on $\beta \coloneqq  \beta^T$. We can see that the estimation error of our proposed algorithm in each of the three models decreases when $\epsilon$ increases, $d$ decreases or $n$ increases, which are consistent with our theoretical results,  since in the previous section we showed that the estimation error is upper bounded by $\tilde{O}(\frac{d\sqrt{\tau}}{\sqrt{n\epsilon}})$. 

Finally, we conduct experimental results on real-world datasets. { We  present the estimation error of different algorithms on GMM model. As shown in Figure \ref{fig:real}, we can observe that our proposed algorithm still outperforms the baseline algorithms under different privacy budgets.} Moreover, even in the case where $\epsilon$ is low, the estimation error is still quite close the non-private EM algorithm as the iteration number grows, which shows the effectiveness of our algorithm.

\begin{figure*}[!htbp]\centering
 \subfigure[GMM, $n=1000,d = 20,\epsilon = 0.2$\label{clipped_gmm}]
 {\includegraphics[trim=0.05in 0 0.8in 0.7in,clip,width=2.2in]{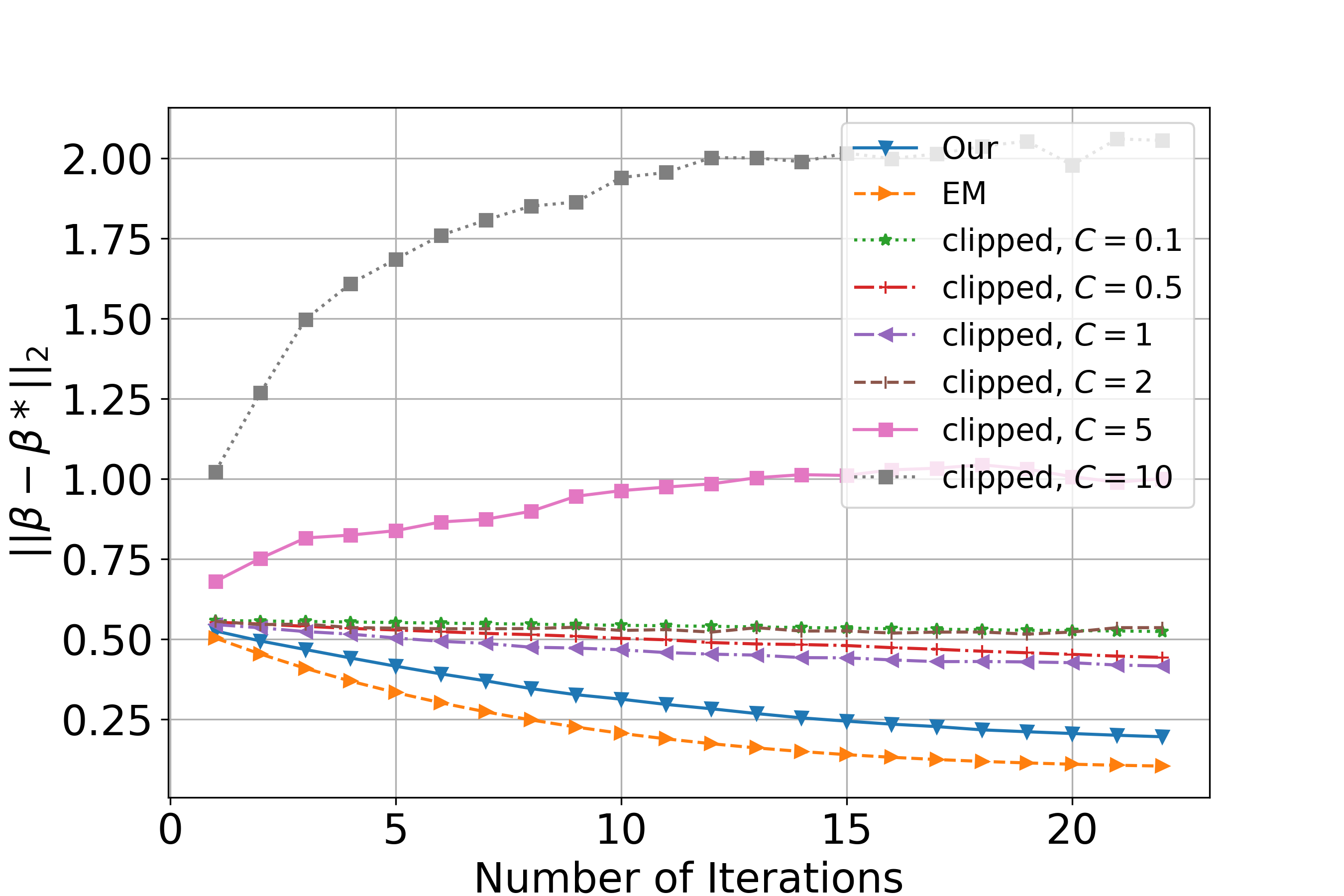}}
\subfigure[MRM, $n=1000,d = 20,\epsilon = 0.2$ \label{clipped_mrm}]
  {\includegraphics[trim=0.05in 0 0.8in 0.7in,clip,width=2.2in]{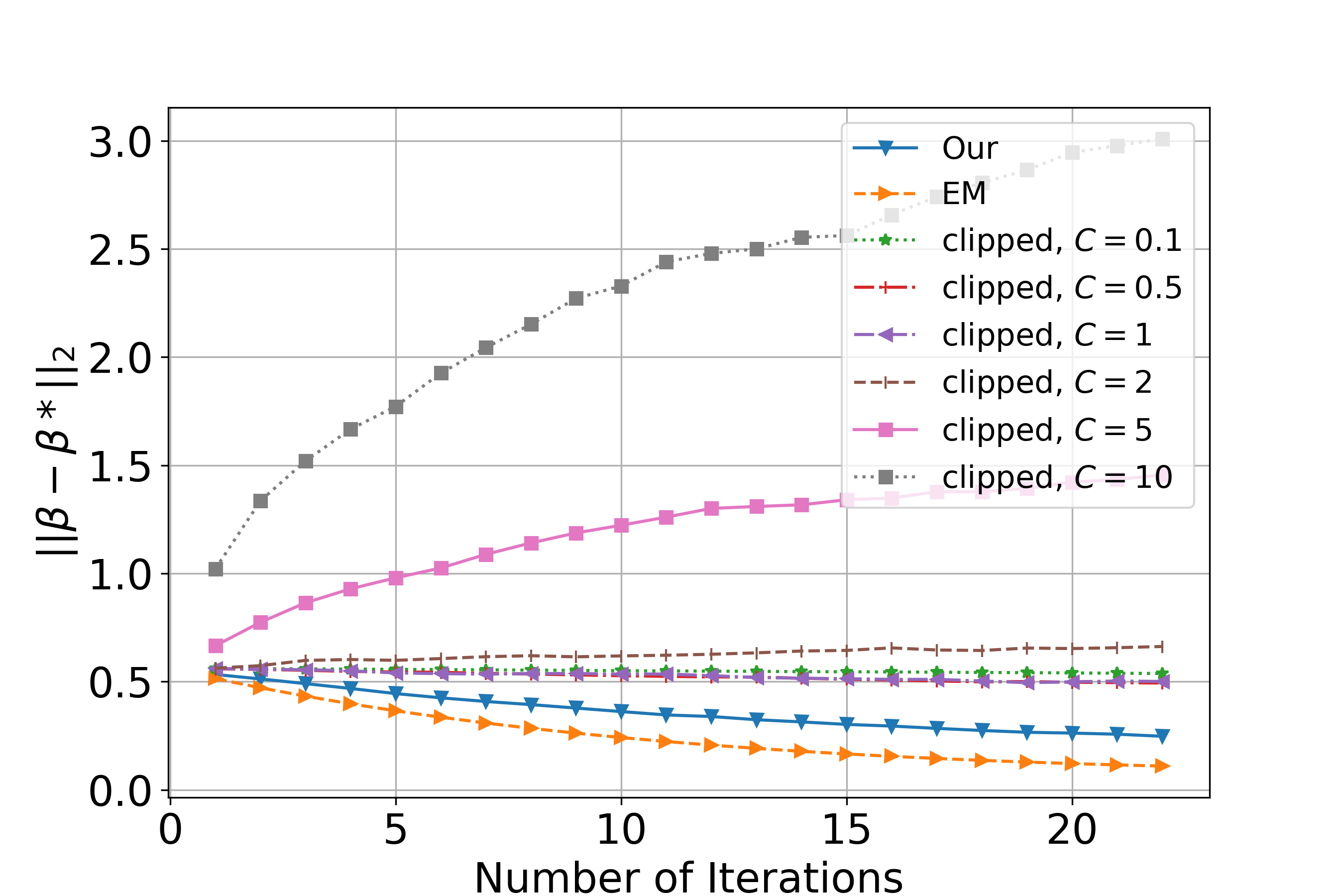}}
     \subfigure[RMC, $n=1000,d = 20,\epsilon = 0.2$\label{clipped_rmc}]
  {\includegraphics[trim=0.05in 0 0.8in 0.7in,clip,width=2.2in]{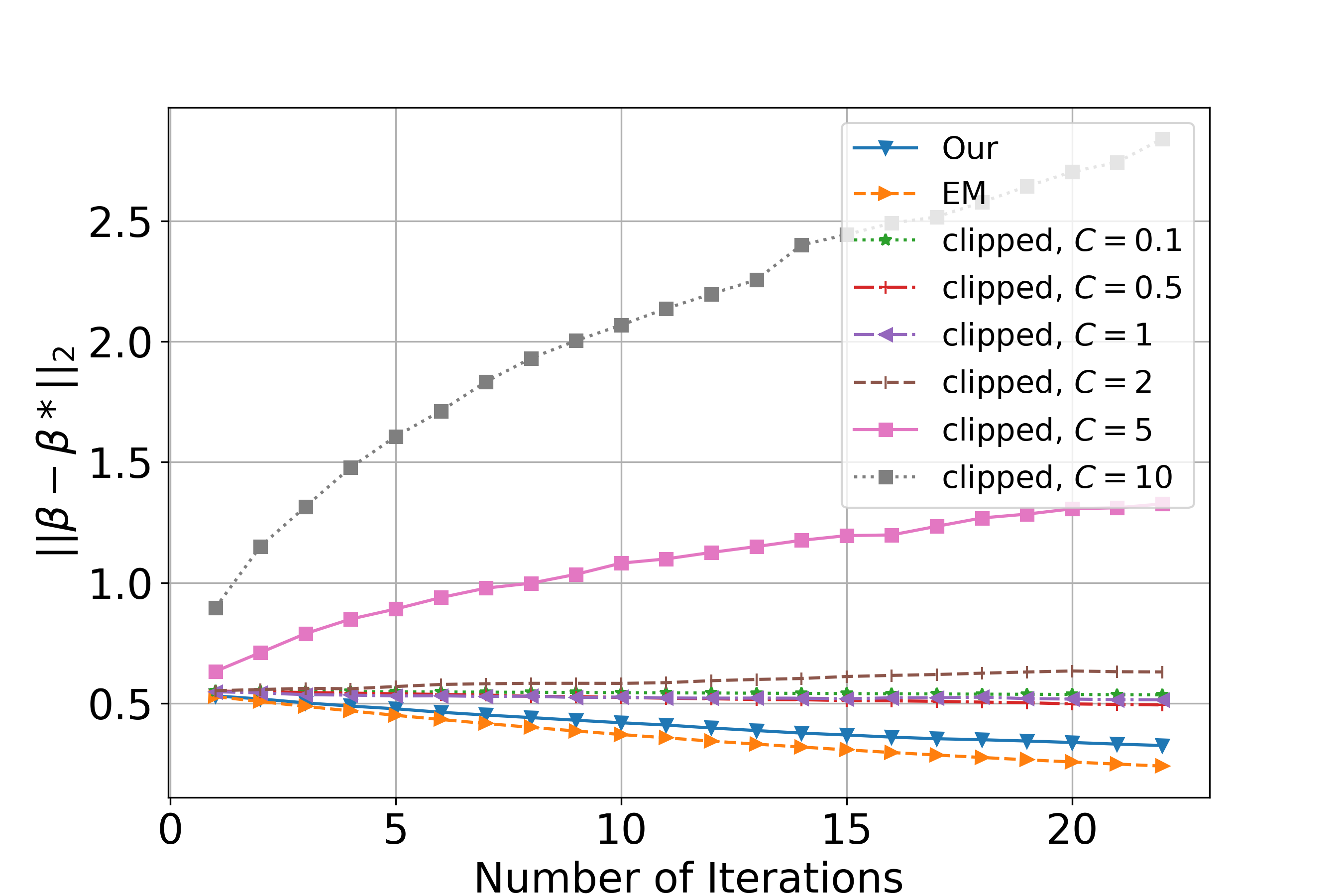}}\vspace{-0.1in}
 \caption{Estimation error of Algorithm 1 (clipped) v.s. iteration $t$ under different clipping threshold $C$} \label{fig:clipped_em}\vspace{-0.1in}
\end{figure*}

\begin{figure*}[!htbp]\centering
 \subfigure[$n=2000,d = 10$\label{Gmm_budget}]
 {\includegraphics[trim=0.05in 0 0.8in 0.7in,clip,width=2.2in]{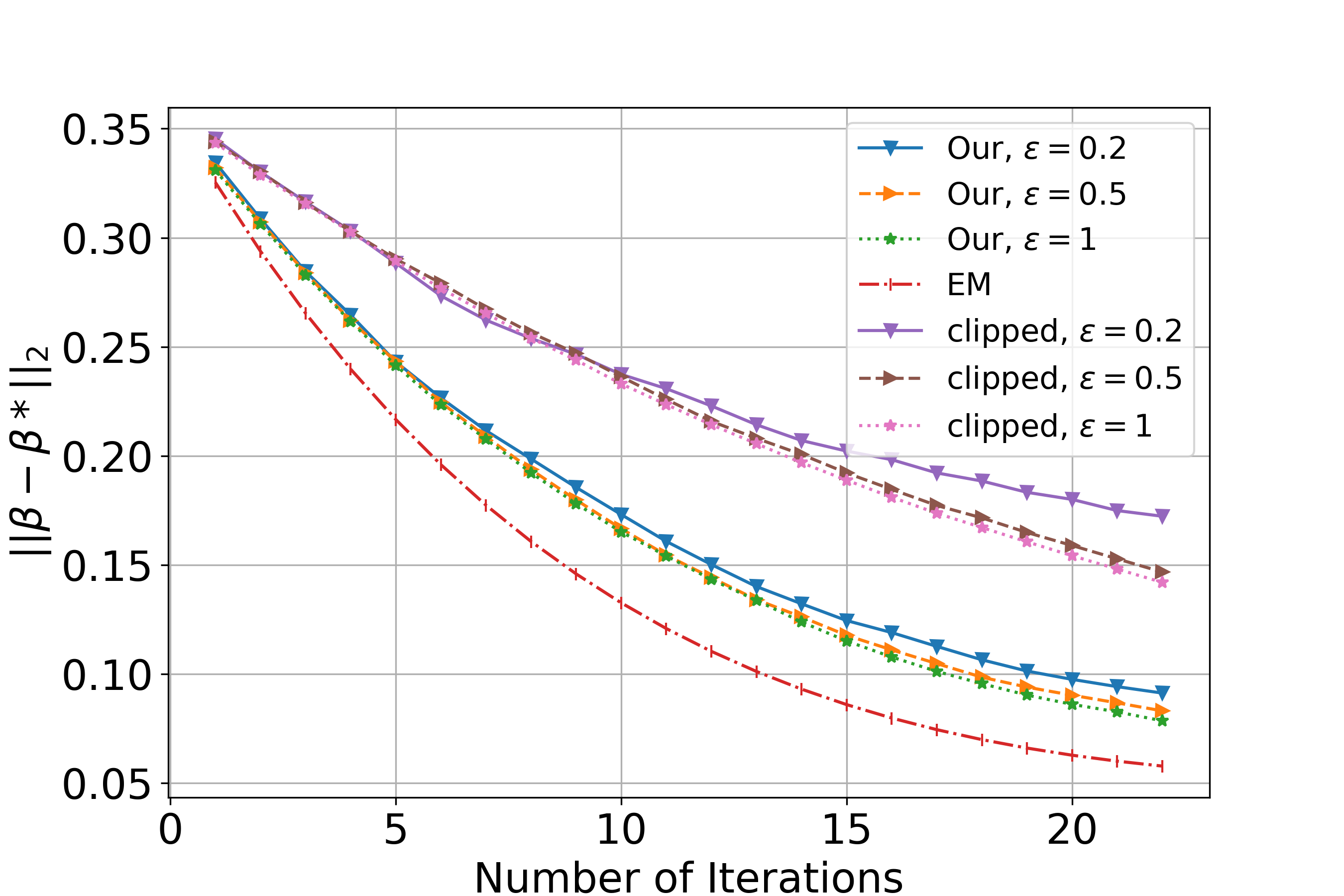}}
\subfigure[$n=2000, \epsilon = 0.5$ \label{Gmm_dim}]
  {\includegraphics[trim=0.05in 0 0.8in 0.7in,clip,width=2.2in]{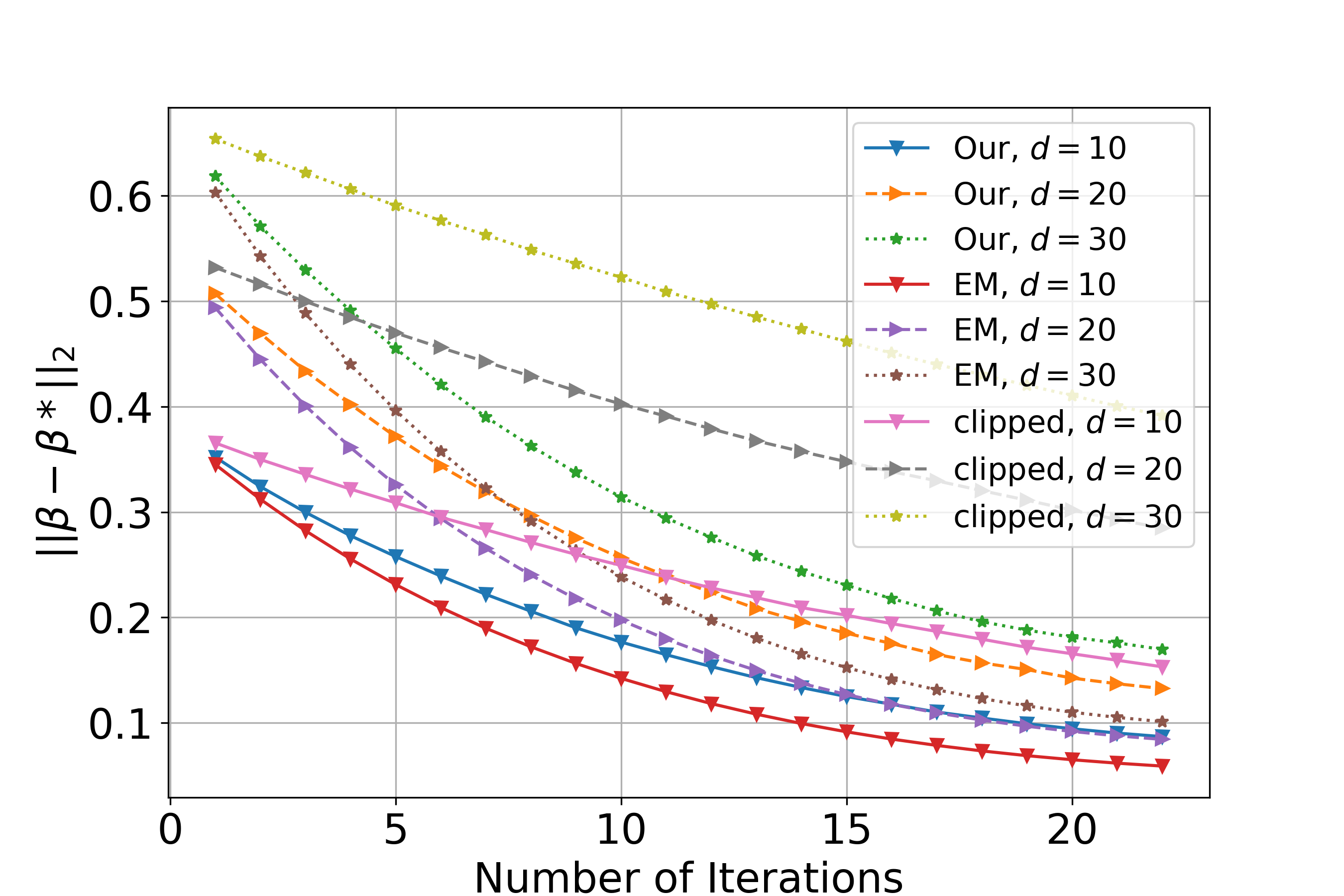}}
     \subfigure[$d=10, \epsilon =0.5$\label{Gmm_samples}]
  {\includegraphics[trim=0.05in 0 0.8in 0.7in,clip,width=2.2in]{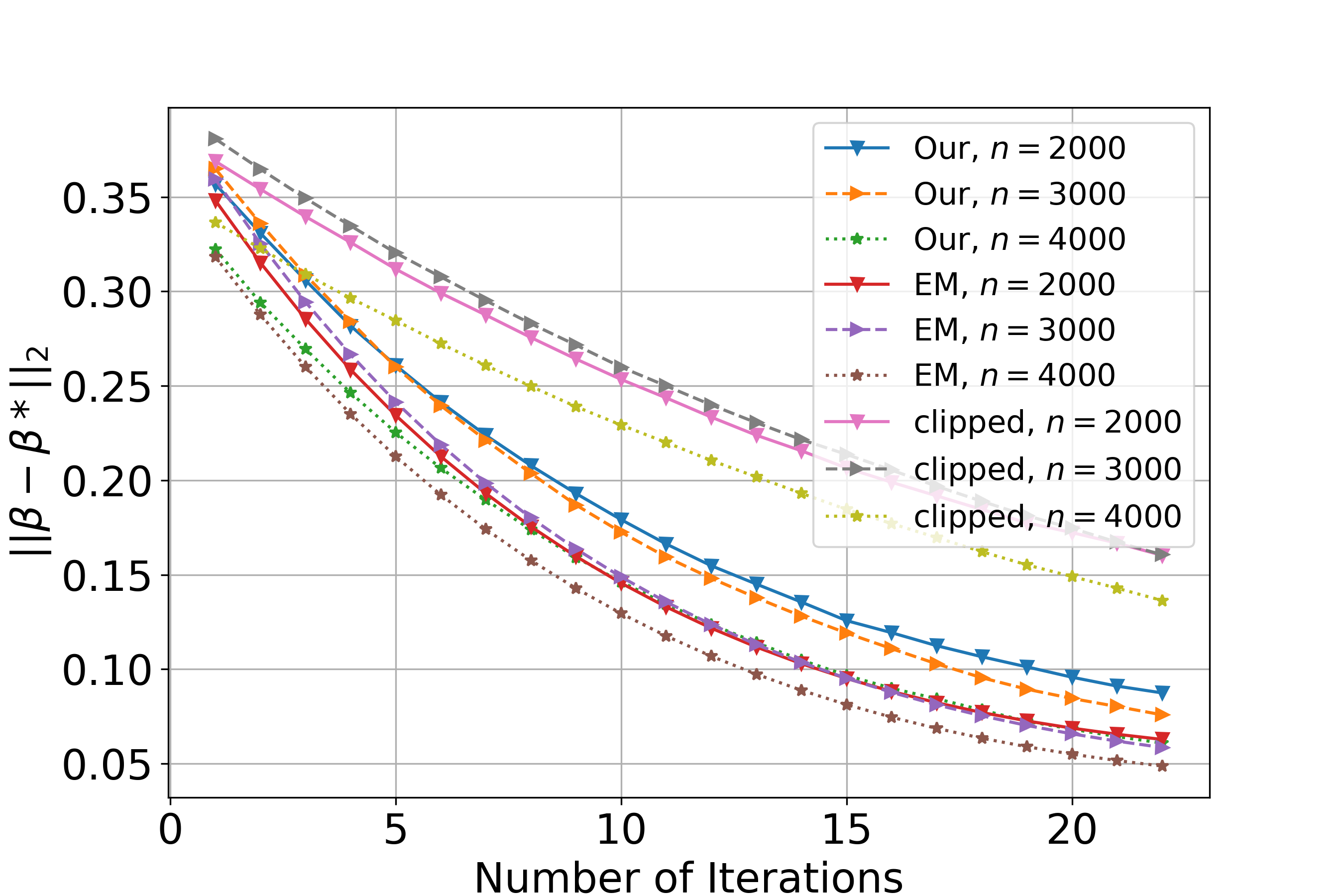}} \vspace{-0.1in}
 \caption{ Estimation error of GMM w.r.t privacy budget $\epsilon$, data dimension $d$, data size $n$ and iteration $t$} \label{fig:comp_gmm}\vspace{-0.1in}
\end{figure*}

\begin{figure*}[!htbp]\centering
 \subfigure[$n=2000,d = 10$\label{mrm_budget}]
 {\includegraphics[trim=0.05in 0 0.8in 0.7in,clip,width=2.2in]{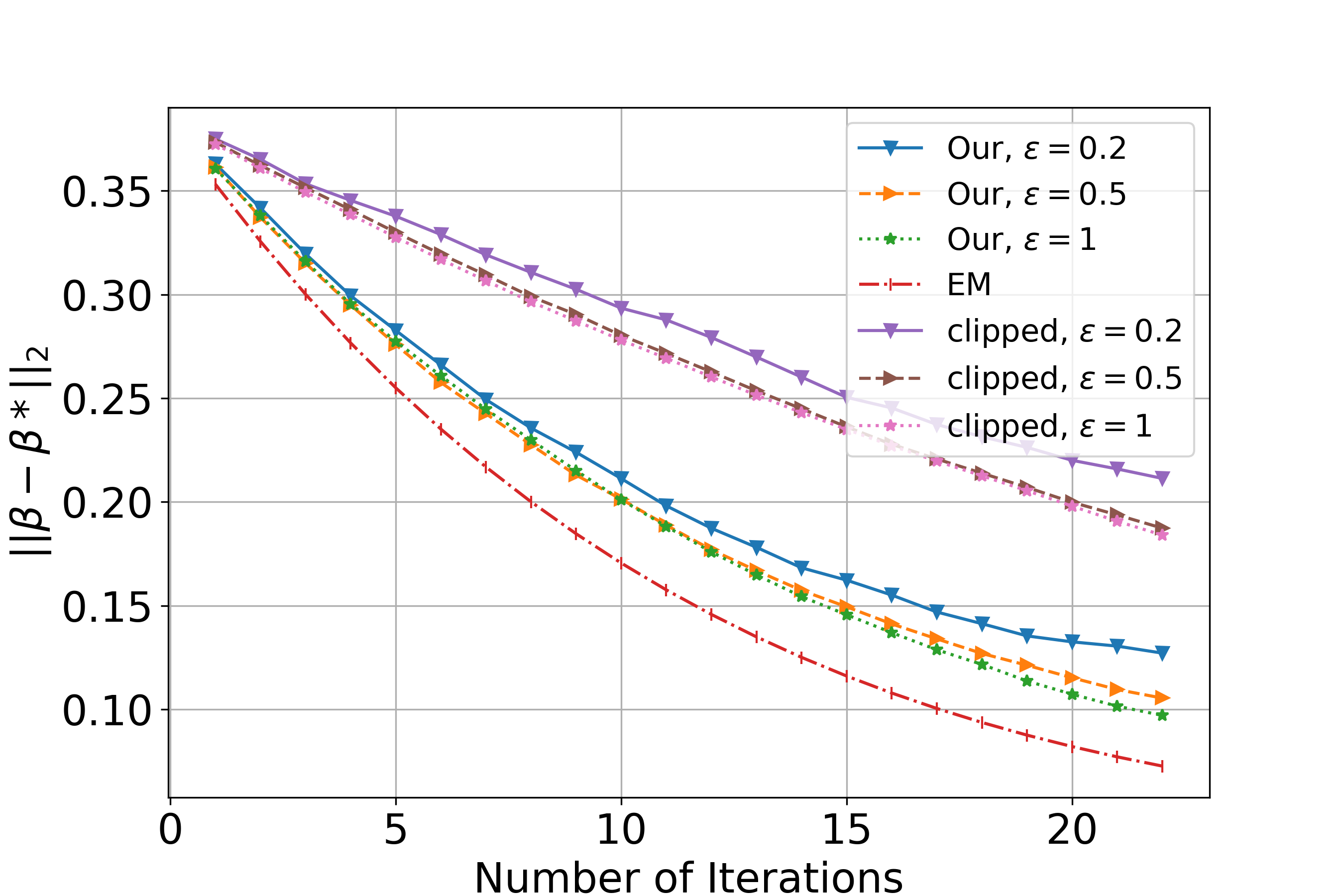}}
\subfigure[$n=2000, \epsilon = 0.5$ \label{mrm_dim}]
  {\includegraphics[trim=0.05in 0 0.8in 0.6in,clip,width=2.2in]{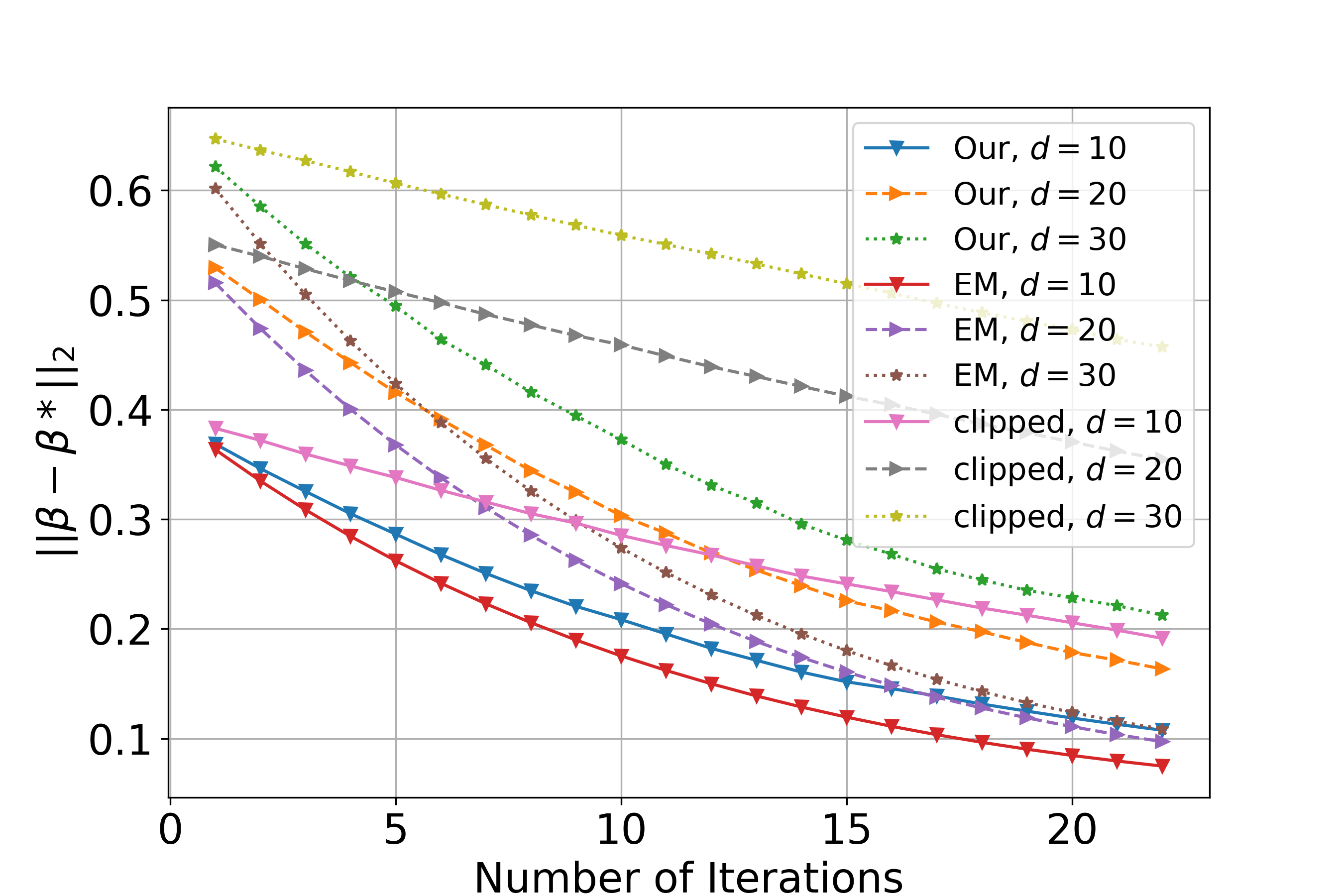}}
     \subfigure[$d=10, \epsilon =0.5$\label{mrm_samples}]
  {\includegraphics[trim=0.05in 0 0.8in 0.6in,clip,width=2.2in]{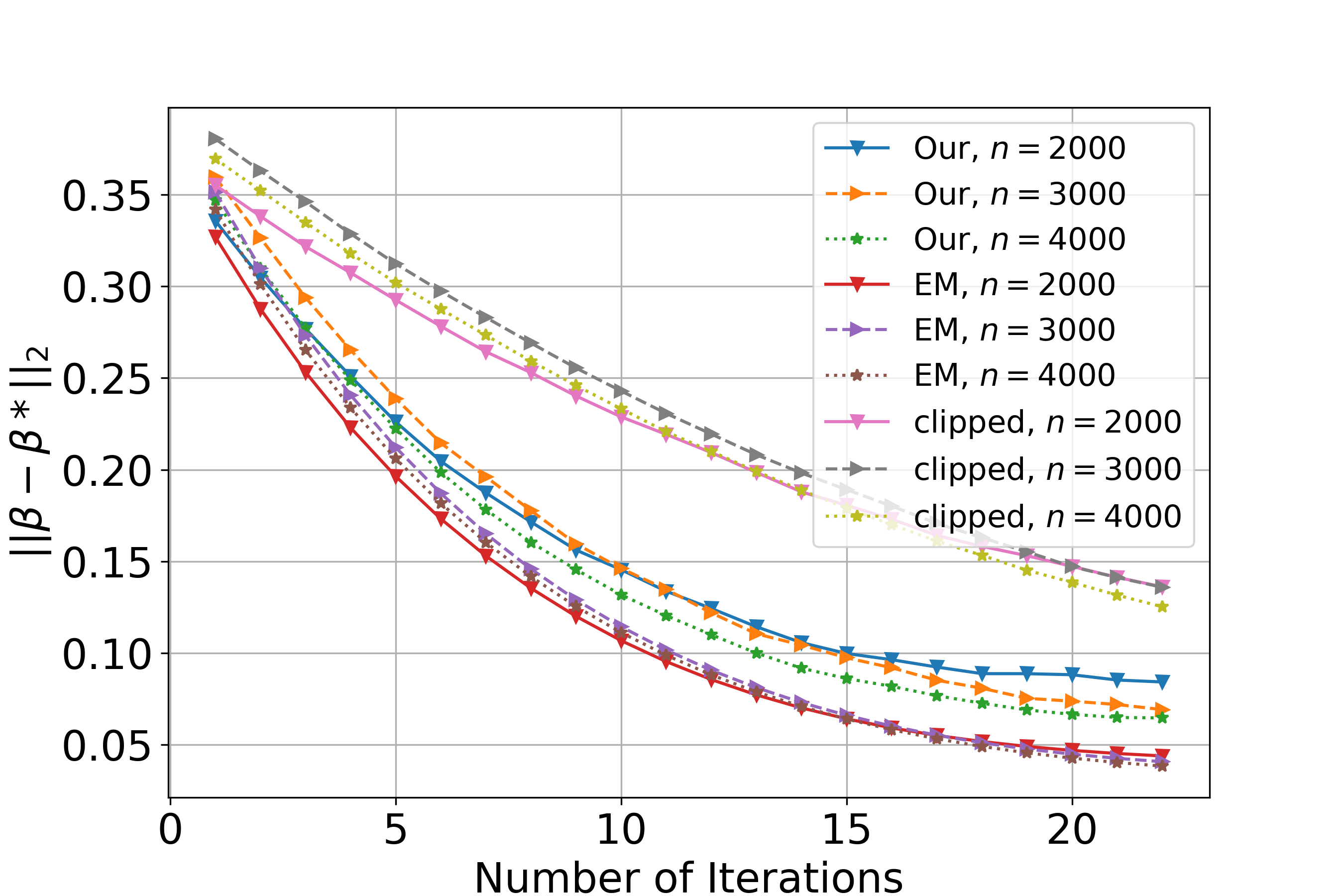}}\vspace{-0.1in}
 \caption{Estimation error of MRM w.r.t privacy budget $\epsilon$, data dimension $d$, data size $n$ and iteration $t$} \label{fig:comp_mrm}\vspace{-0.1in}
\end{figure*}
\begin{figure*}[!htbp]\centering
 \subfigure[$n=2000,d = 10$\label{rmc_budget}]
 {\includegraphics[trim=0.05in 0 0.8in 0.7in,clip,width=2.2in]{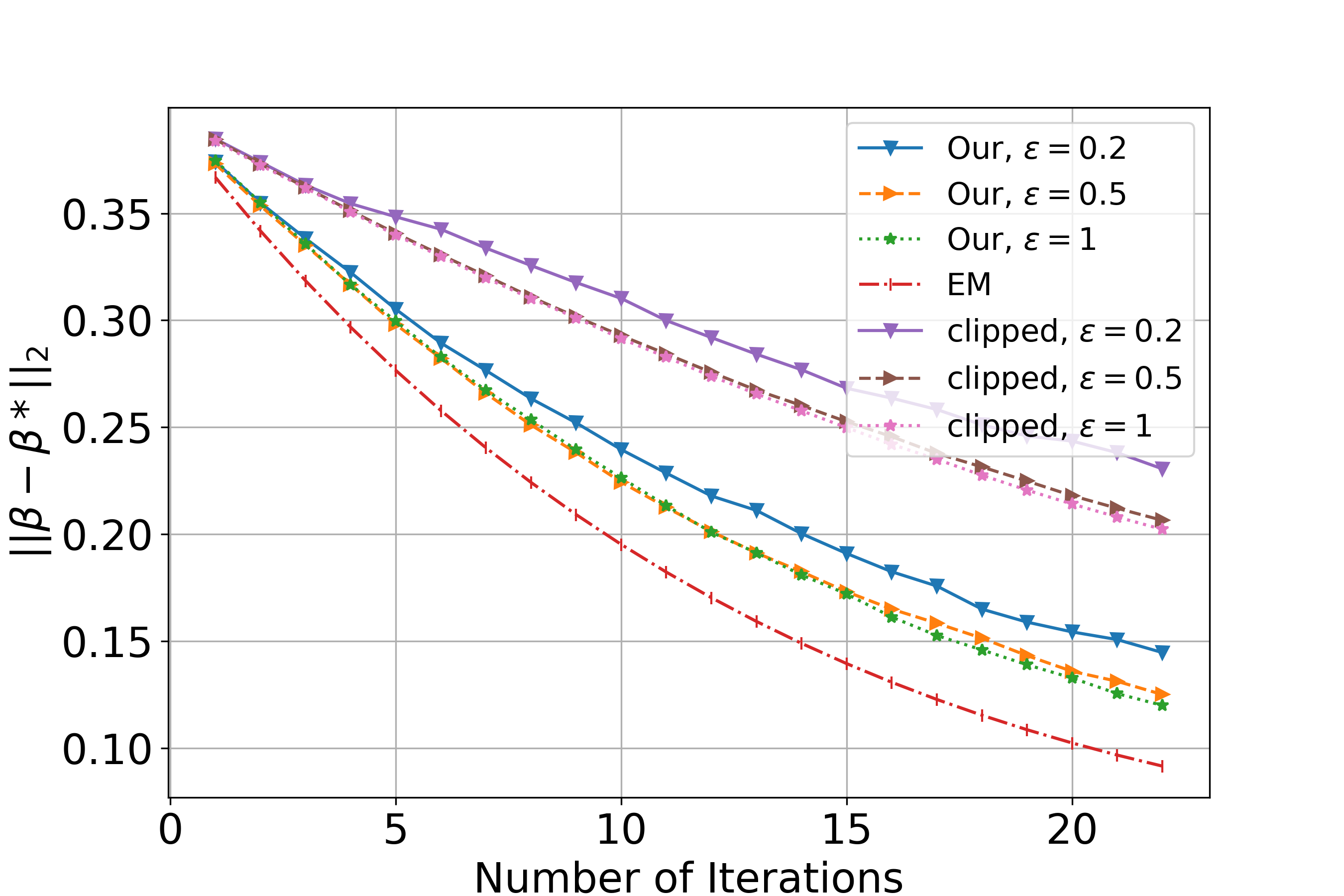}}
\subfigure[$n=2000, \epsilon = 0.5$ \label{rmc_dim}]
  {\includegraphics[trim=0.05in 0 0.8in 0.6in,clip,width=2.2in]{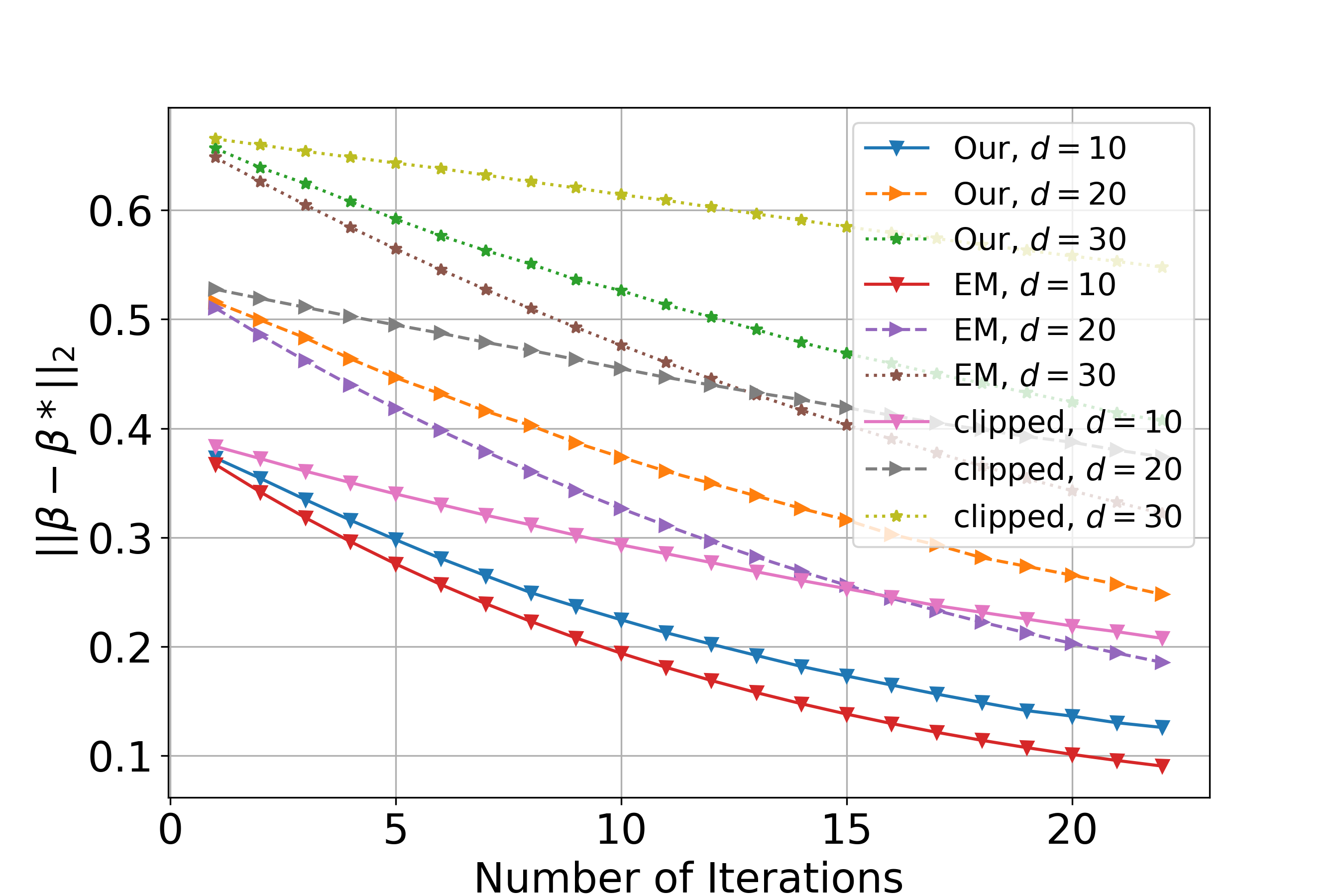}}
     \subfigure[$d=10, \epsilon =0.5$\label{rmc_samples}]
  {\includegraphics[trim=0.05in 0 0.8in 0.7in,clip,width=2.2in]{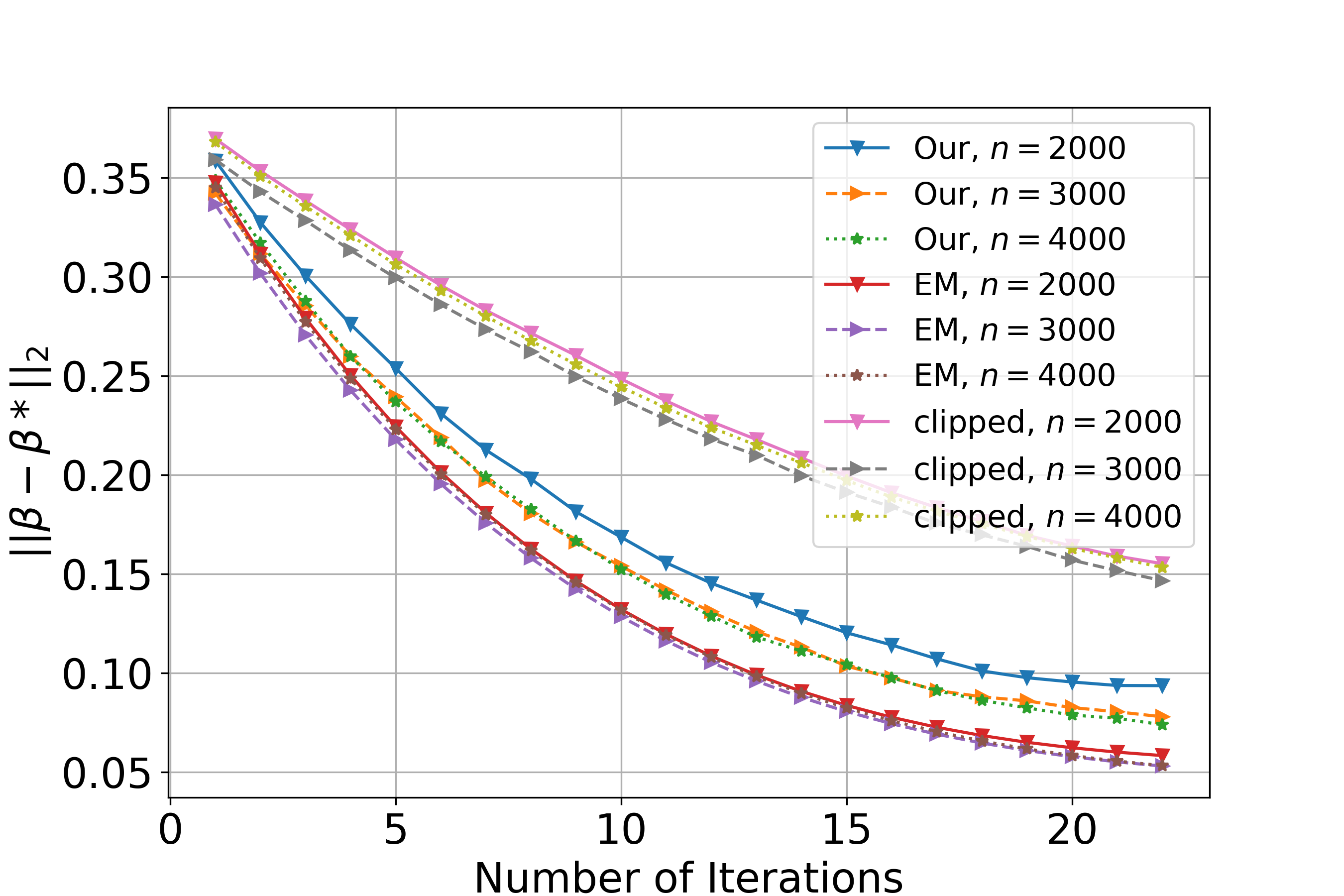}}\vspace{-0.1in}
 \caption{Estimation error of RMC w.r.t privacy budget $\epsilon$, data dimension $d$, data size $n$ and iteration $t$} \label{fig:comp_rmc}\vspace{-0.1in}
\end{figure*}

\begin{figure*}[!htbp]\centering
 \subfigure[$n=2000,d = 10$\label{Gmm_budget2}]
 {\includegraphics[trim=0.05in 0 0.8in 0.7in,clip,width=2.2in]{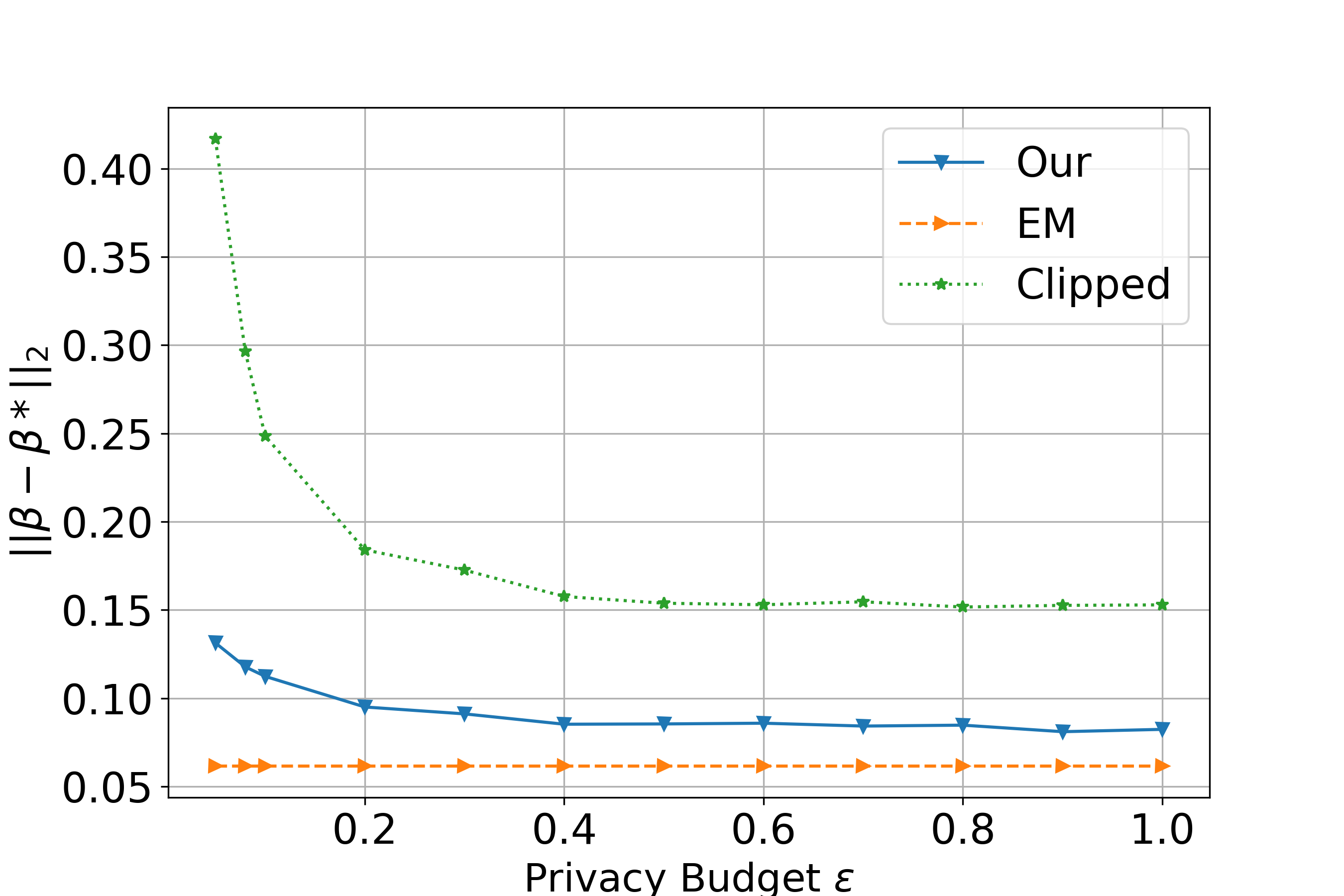}}
\subfigure[$n=2000, \epsilon = 0.5$ \label{Gmm_dim2}]
  {\includegraphics[trim=0.05in 0 0.8in 0.7in,clip,width=2.2in]{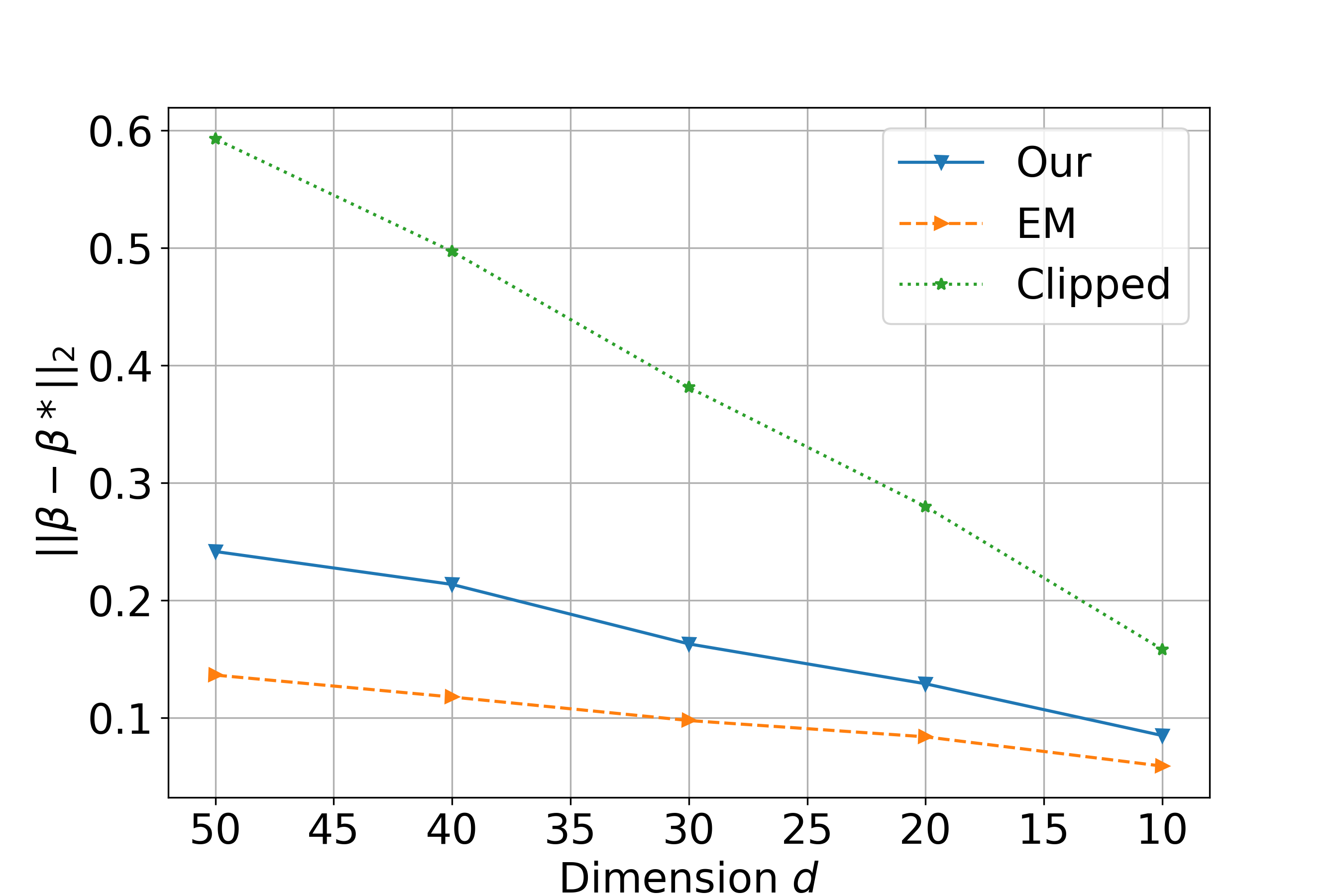}}
     \subfigure[$d=10, \epsilon =0.5$\label{Gmm_samples2}]
  {\includegraphics[trim=0.05in 0 0.8in 0.7in,clip,width=2.2in]{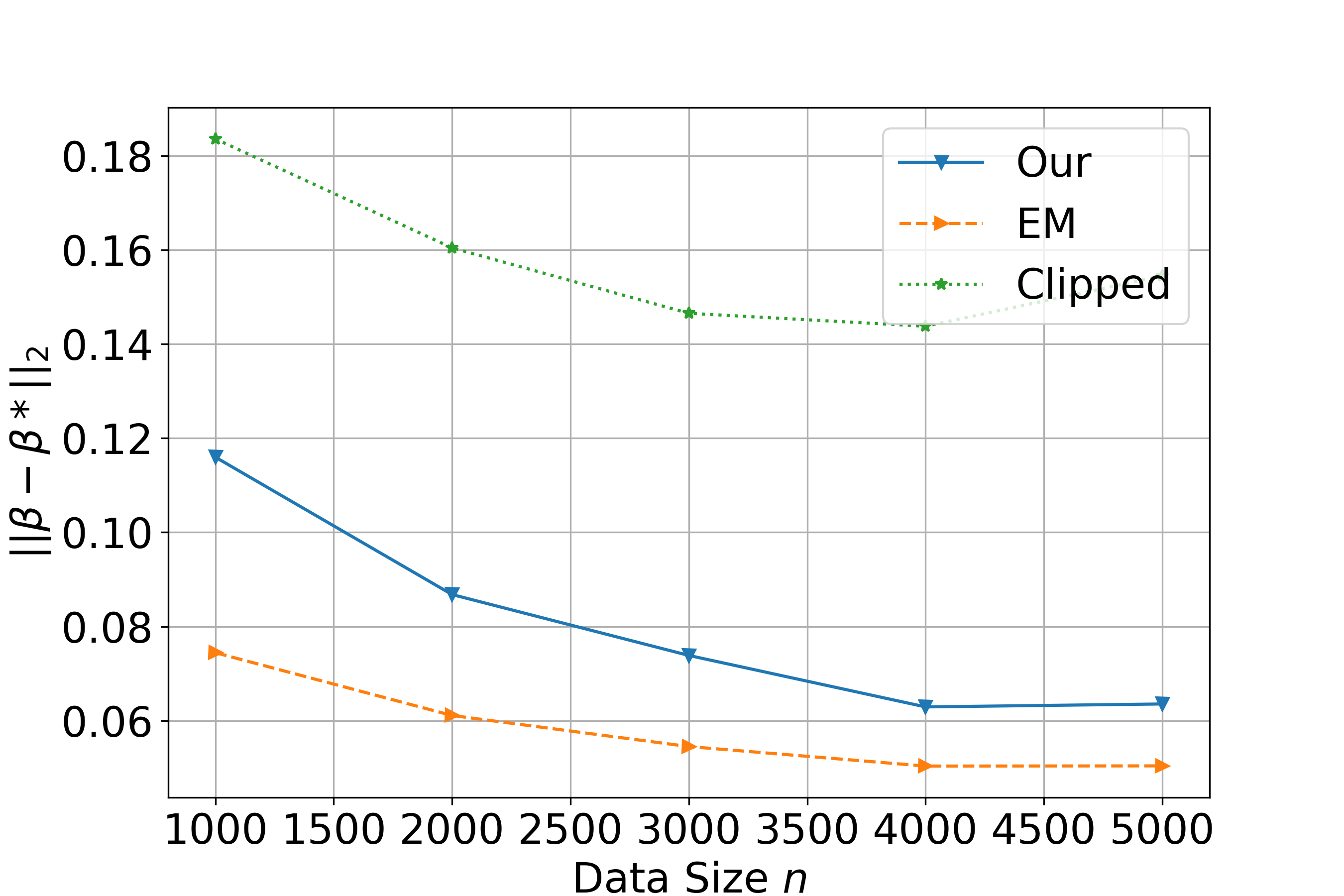}}
 \caption{Estimation error of GMM w.r.t privacy budget $\epsilon$, data dimension $d$ and data size $n$} \label{fig:comp_gmm2}
\end{figure*}
\begin{figure*}[!htbp]\centering
 \subfigure[$n=2000,d = 10$\label{mrm_budget2}]
 {\includegraphics[trim=0.05in 0 0.8in 0.7in,clip,width=2.2in]{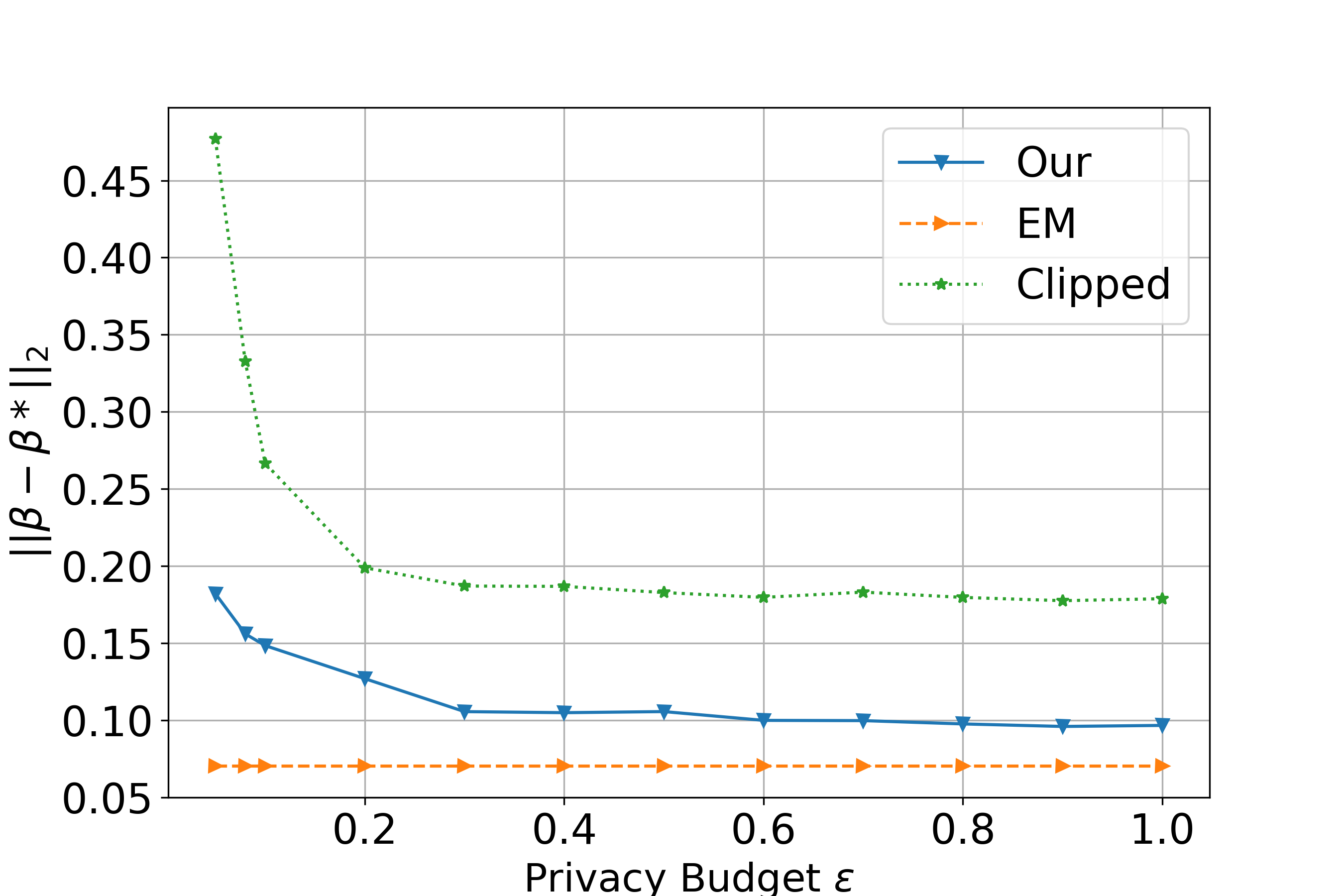}}
\subfigure[$n=2000, \epsilon = 0.5$ \label{mrm_dim2}]
  {\includegraphics[trim=0.05in 0 0.8in 0.7in,clip,width=2.2in]{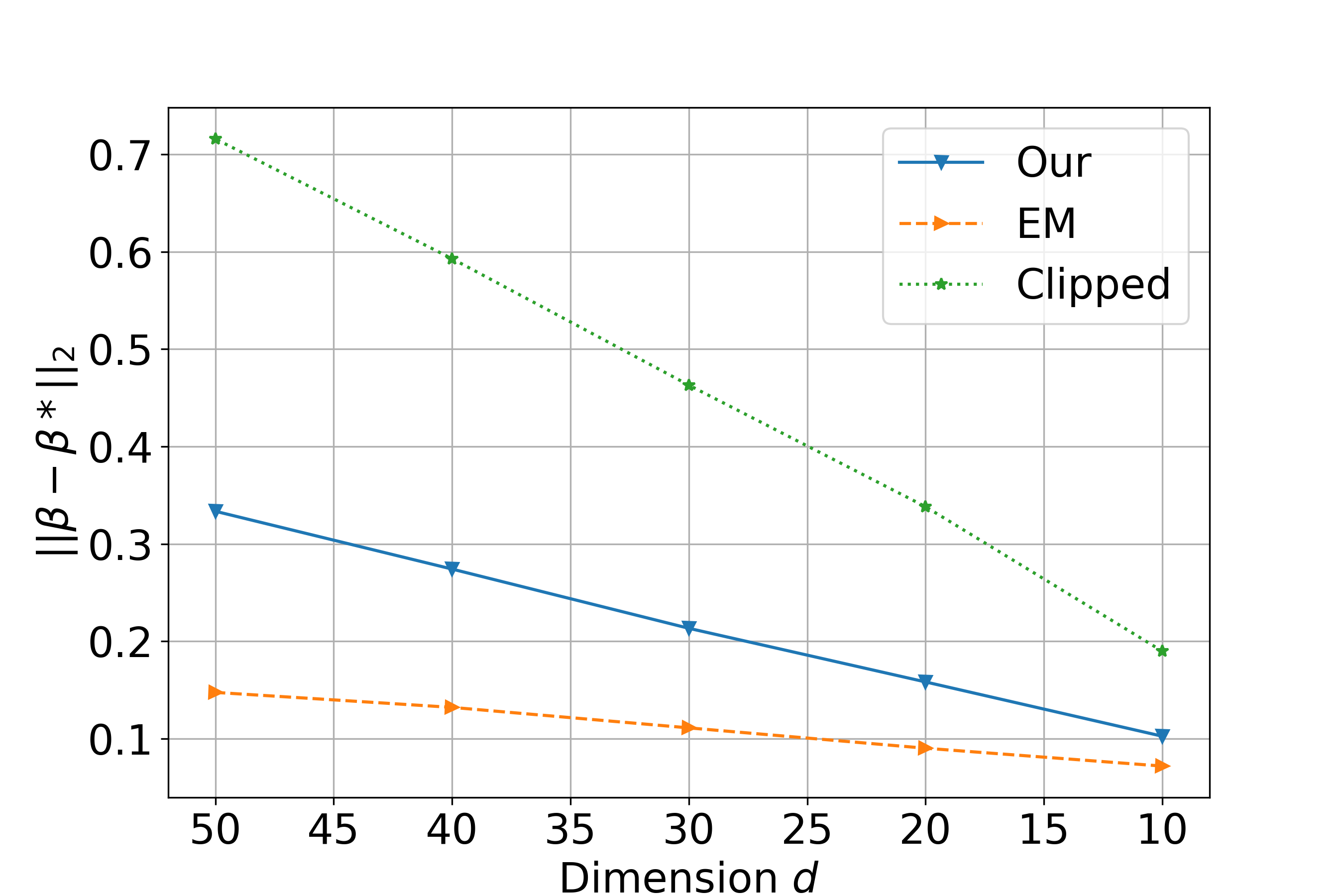}}
     \subfigure[$d=10, \epsilon =0.5$\label{mrm_samples2}]
  {\includegraphics[trim=0.05in 0 0.8in 0.7in,clip,width=2.2in]{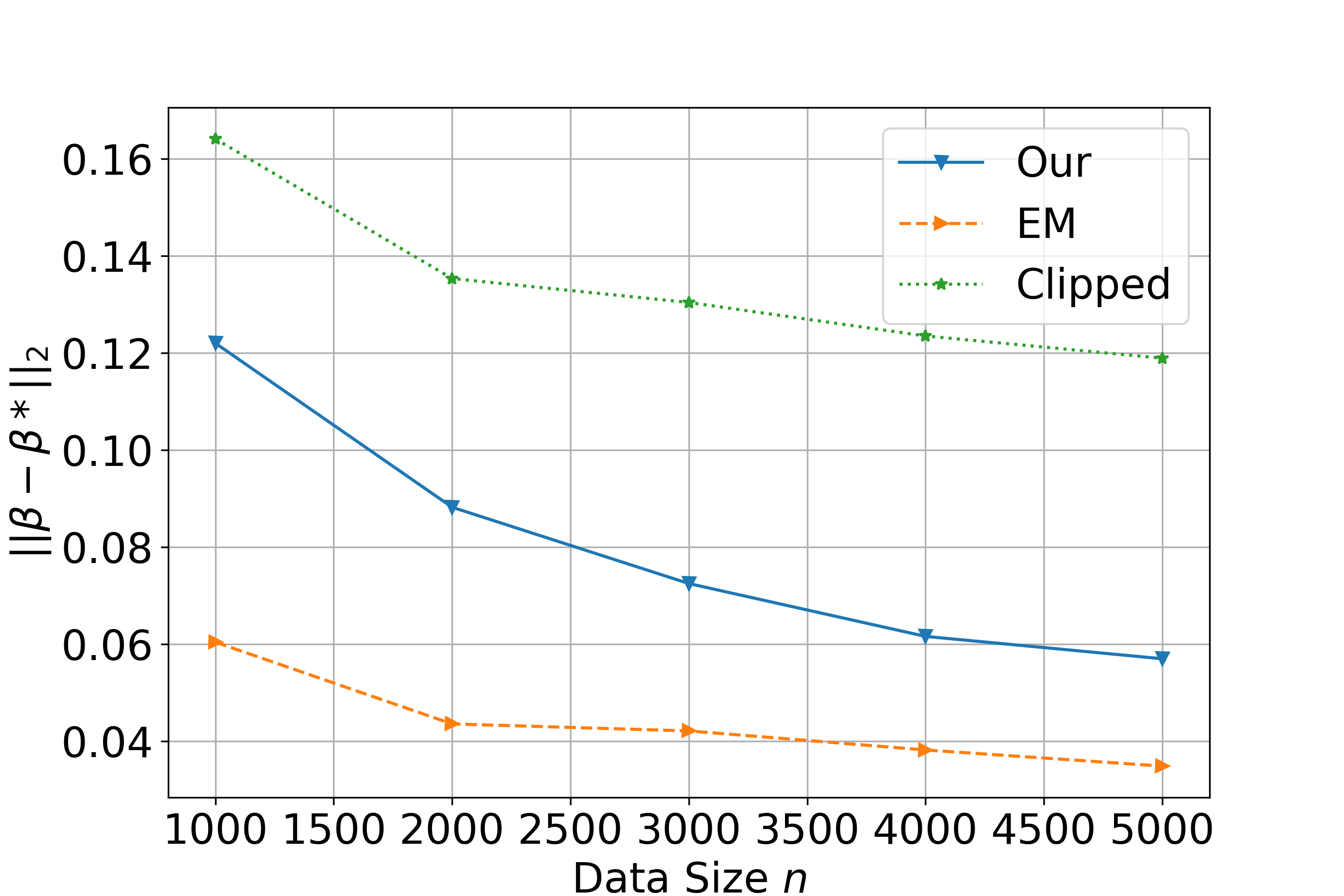}}
 \caption{Estimation error of MRM w.r.t privacy budget $\epsilon$, data dimension $d$ and data size $n$} \label{fig:comp_mrm2}
\end{figure*}

\begin{figure*} [!htbp]\centering
 \subfigure[$n=2000,d = 10$\label{rmc_budget2}]
 {\includegraphics[trim=0.05in 0 0.8in 0.7in,clip,width=2.2in]{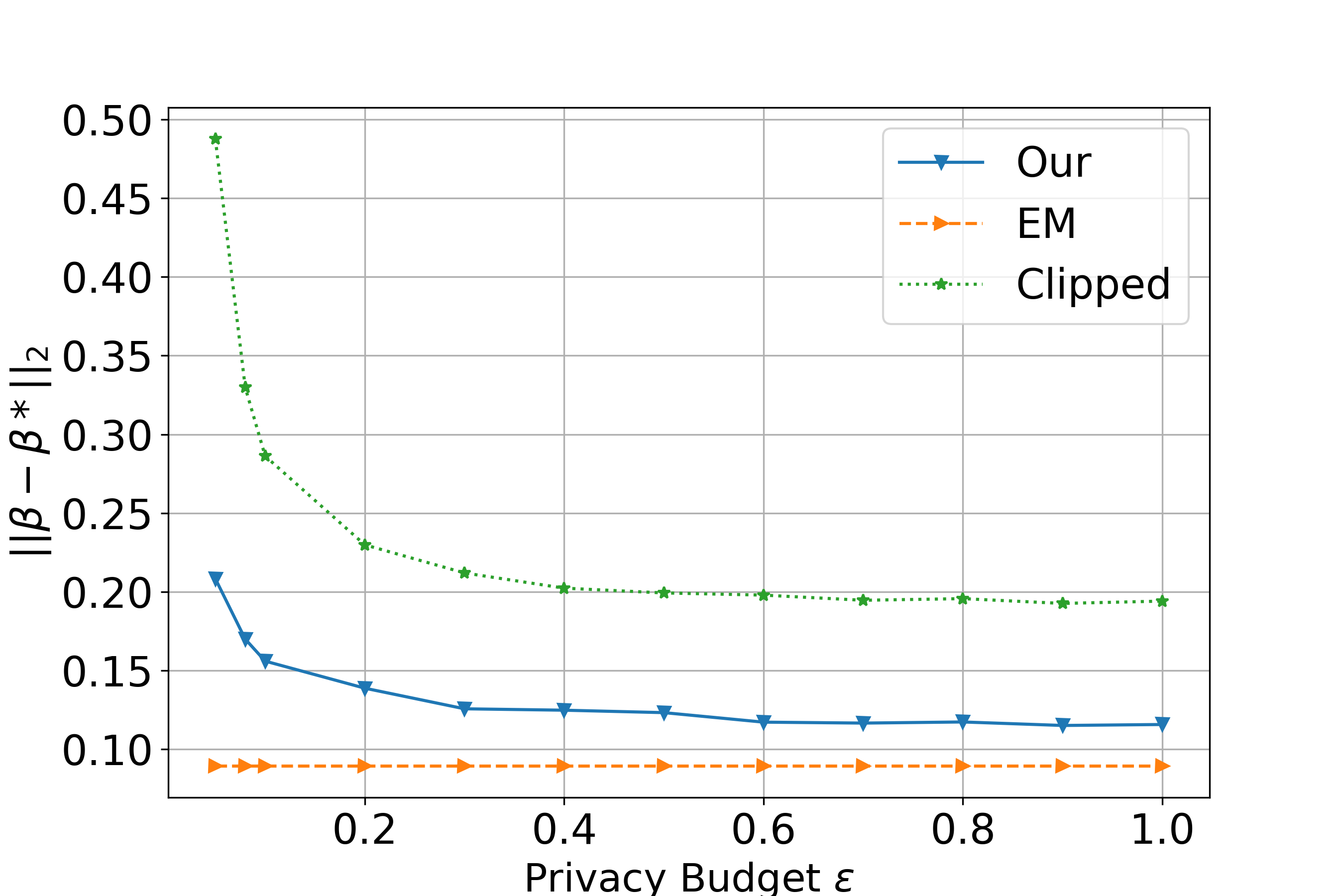}}
\subfigure[$n=2000, \epsilon = 0.5$ \label{rmc_dim2}]
  {\includegraphics[trim=0.05in 0 0.8in 0.7in,clip,width=2.2in]{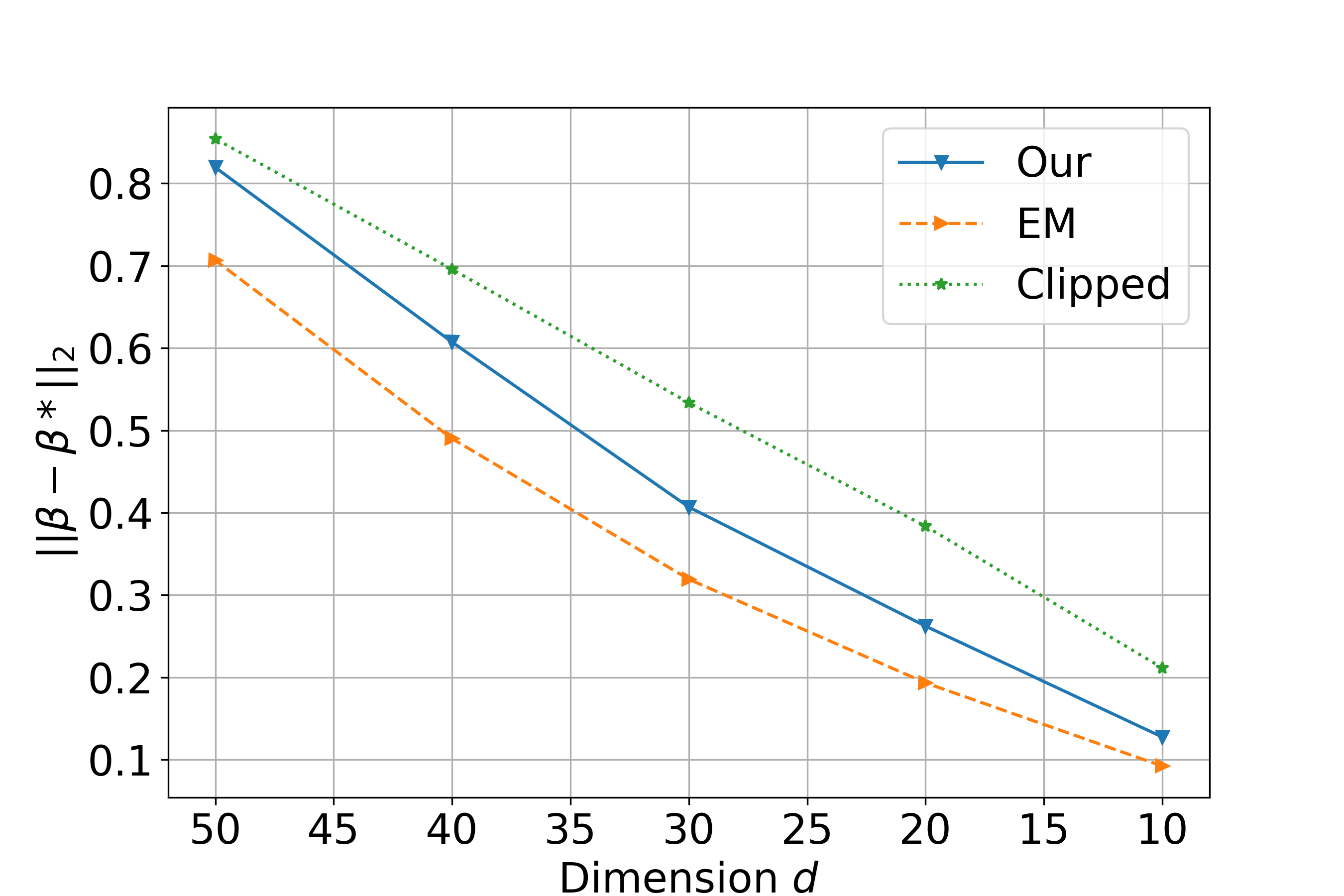}}
     \subfigure[$d=10, \epsilon =0.5$\label{rmc_samples2}]
  {\includegraphics[trim=0.05in 0 0.8in 0.7in,clip,width=2.2in]{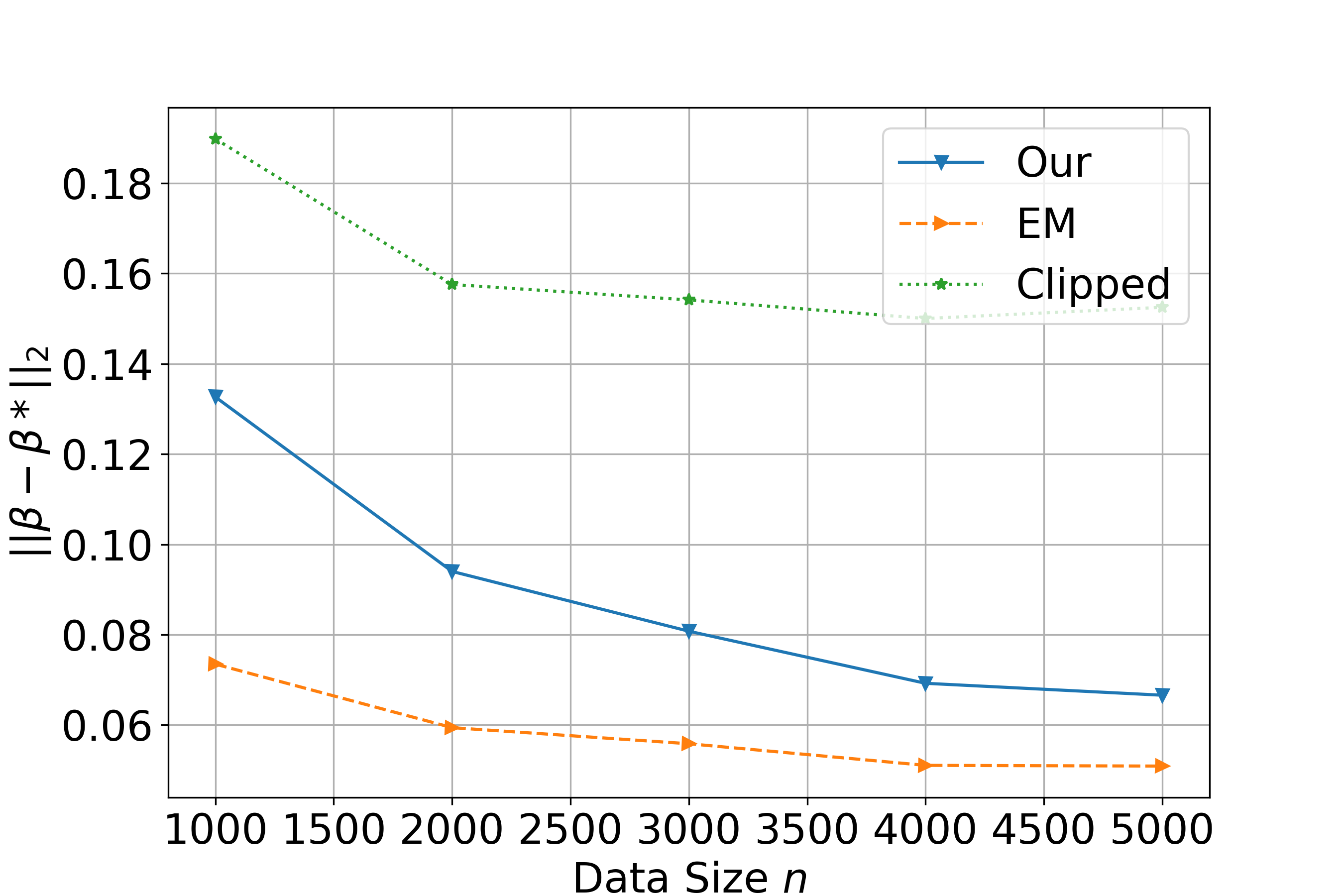}}
 \caption{Estimation error of RMC w.r.t privacy budget $\epsilon$, data dimension $d$ and data size $n$} \label{fig:comp_rmc2}
\end{figure*}

\begin{figure*}[!htbp]\centering
 \subfigure[ADULT]
 {\includegraphics[trim=0.05in 0 0.8in 0.7in,clip,width=2.2in]{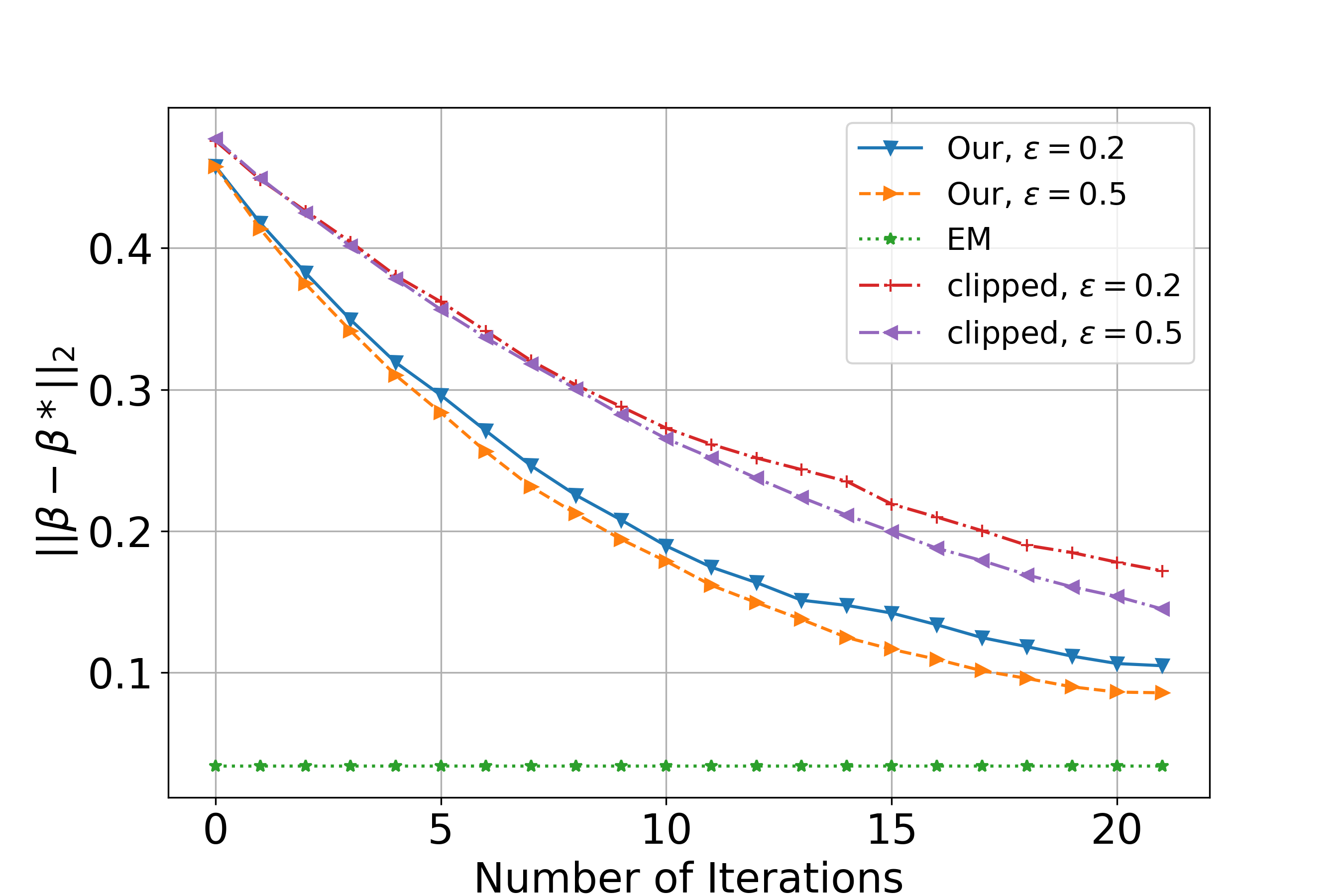}}
\subfigure[IPUMS-US ]
  {\includegraphics[trim=0.05in 0 0.8in 0.7in,clip,width=2.2in]{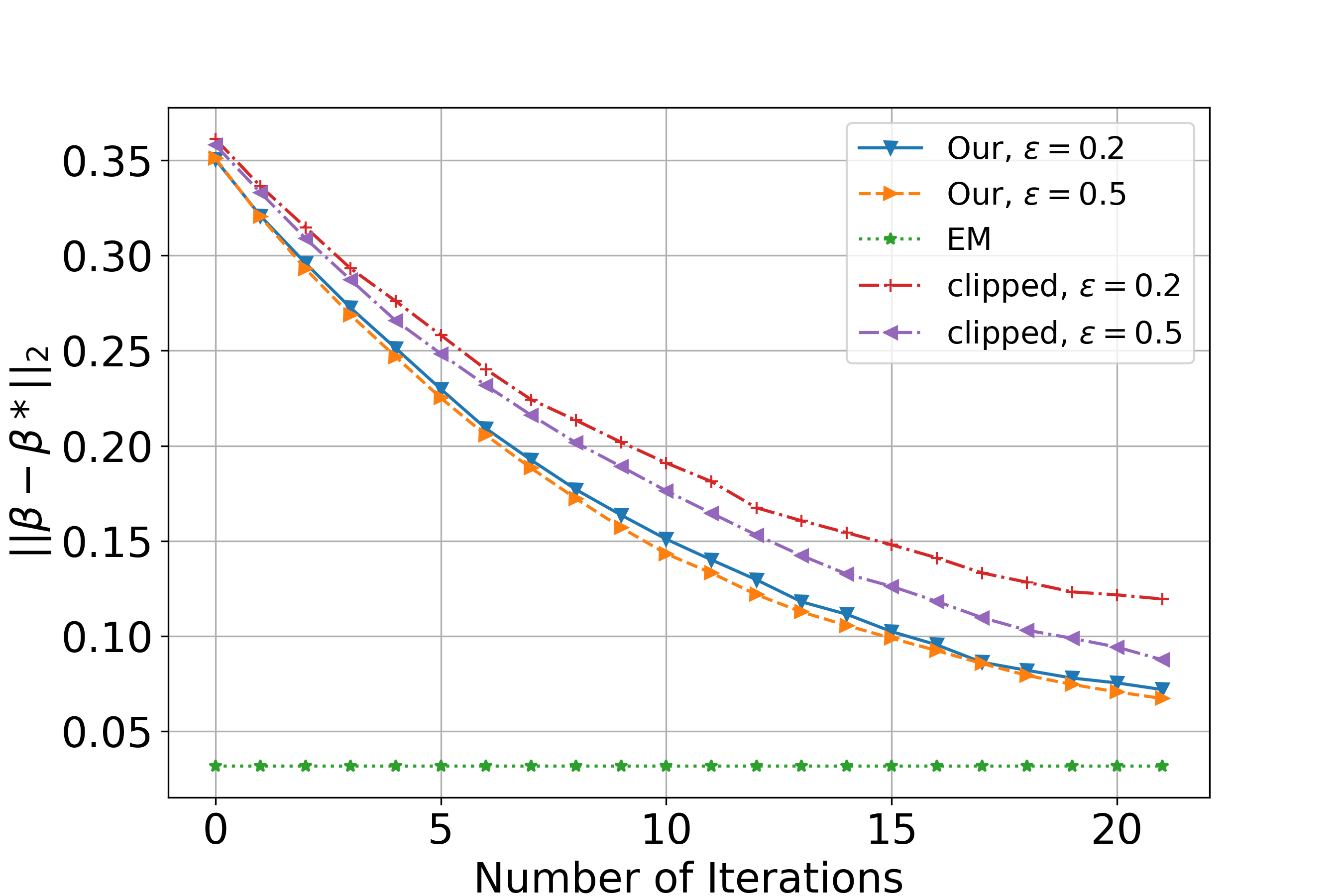}}
     \subfigure[IPUMS-BR]
  {\includegraphics[trim=0.05in 0 0.8in 0.7in,clip,width=2.2in]{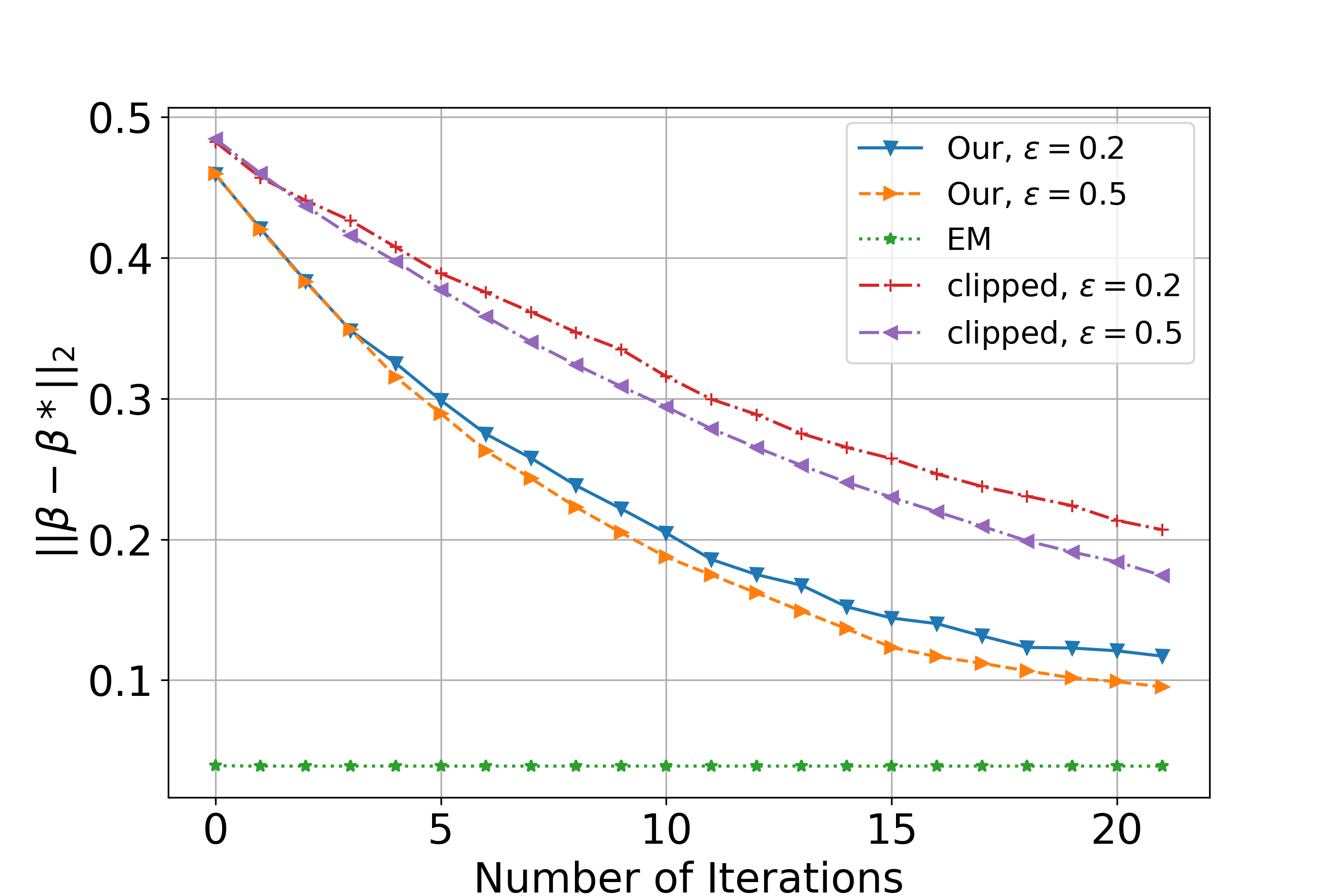}}\vspace{-0.1in}
 \caption{Estimation error of GMM over three real datasets: ADULT, IPUMS-US and IPUMS-BR } \label{fig:real}\vspace{-0.1in}
\end{figure*}


\section{Conclusion}
In this paper, we provide the first study on the finite sample statistical guarantees of (Gradient) EM algorithm in Differential Privacy (DP) model. Due to the high sensitivity, previous DP Gradient Descent based methods cannot be directly extended to the Gradient EM algorithm. In order to address this issue,  we propose a new and improved private algorithm for estimating the mean of heavy-tailed distributions, which could also be extended to the local DP model and can be used to other machine learning problems, such as the Heavy-tailed Stochastic Convex Optimization. We also implement our algorithms to several canonical latent variable models, and some of these models has not been studied before. Finally, we conduct extensive experiments on both of the synthetic and real-world data, and these results outperform previous heuristic methods and  show the effectiveness of our algorithm. 

There are still several open problems for future research. Firstly, in Theorem \ref{theorem:3} we need to assume the initial parameter is close enough to the optimal one. We note that since the objective function could be non-convex, such assumption is unavoidable in general. However, it is still unknown how to find  such a parameter for general problems, even in the non-private case. Thus, our question is how to find such initial vectors for some specific models? Secondly, in this paper, we implement our algorithms on GMM and MRM with known variance, weights and only two components. It is still unknown whether we can use the same idea to other more complicated models, such as GMM with K components and/or unknown variance and weights. Finally, in this paper we provide the first theoretical results on MRM and RMC, it is still known unknown whether we can future improve these bounds and what is the lower bounds of these two models in DP model. 
\appendix

\section{Preliminaries}
First, we will  we recall some definitions and lemmas on the sub-exponential and sub-Gaussian random variables. See \citep{vershynin2010introduction} for details. 
\begin{definition}\label{adef:1}
For a sub-exponential random vector $X$, 
its  sub-exponential norm $\|X\|_{\psi_1}$ is defined as  \begin{equation*}
    \|X\|_{\psi_1} = \sup_{p\geq 1} p^{-1}(\mathbb{E}|X|^p)^{\frac{1}{p}}.
\end{equation*}
\end{definition}
\begin{definition}[$\xi$-sub-exponential] \label{adef:2}
A random variable $X$ with mean $\mathbb{E}(X)$ is $\xi$-sub-exponential for $\xi>0$ if 
for all $|t|<\frac{1}{\xi}$,
    $\mathbb{E}\{\exp(t[X-\mathbb{E}(X)])\}\leq \exp(\frac{\xi^2t^2}{2}). $
\end{definition} 
\begin{lemma}\label{alemma:1}
Let $X$ be a sub-exponential random variable, then there are absolute constants $C, c>0$, such that when $|t|\leq \frac{c}{\|X\|_{\psi_1}}$ ,
\begin{equation*}
    \mathbb{E}[\exp(tX)]\leq \exp(Ct^2\|X\|_{\psi_1}^2).
\end{equation*}
\end{lemma}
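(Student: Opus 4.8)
The plan is to prove this via the standard equivalence between the sub-exponential norm and moment growth: I would first turn the bound on $\|X\|_{\psi_1}$ into a bound on every moment $\mathbb{E}|X|^p$, and then sum the power-series expansion of the moment generating function term by term. Write $K = \|X\|_{\psi_1}$. By Definition \ref{adef:1}, $\sup_{p\ge 1} p^{-1}(\mathbb{E}|X|^p)^{1/p} = K$ gives immediately, for every integer $p \ge 1$, the moment bound $(\mathbb{E}|X|^p)^{1/p} \le pK$, i.e. $\mathbb{E}|X|^p \le (pK)^p$. This is the only property of the sub-exponential norm the argument needs.

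First I would expand $\mathbb{E}[\exp(tX)] = 1 + t\,\mathbb{E}X + \sum_{p \ge 2} \frac{t^p \mathbb{E}[X^p]}{p!}$ and bound the tail using $|\mathbb{E}[X^p]| \le \mathbb{E}|X|^p \le (pK)^p$ together with the Stirling-type inequality $p! \ge (p/e)^p$. This yields $\frac{(pK)^p}{p!} \le (eK)^p$, so each term is controlled by $(e|t|K)^p$ and the tail becomes geometric: $\sum_{p\ge 2}(e|t|K)^p$. Restricting to $|t| \le c/K$ with $c \le \tfrac{1}{2e}$ forces the ratio $e|t|K \le \tfrac12$, so the geometric tail starting at $p=2$ is at most $\frac{(e|t|K)^2}{1 - e|t|K} \le 2(e|t|K)^2 = 2e^2 t^2 K^2$. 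Finally I would absorb this into an exponential via $1 + u \le e^u$, obtaining $\mathbb{E}[\exp(tX)] \le 1 + 2e^2 t^2 K^2 \le \exp(2e^2 t^2 K^2)$, so the claimed constants may be taken as $C = 2e^2$ and $c = \tfrac{1}{2e}$ (any $c < 1/e$ works after adjusting $C$).

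The one genuine subtlety, and the step I expect to need care, is the $p = 1$ term $t\,\mathbb{E}X$: a purely linear-in-$t$ contribution cannot be dominated by a quadratic $Ct^2K^2$ near $t = 0$ (a nonzero constant random variable already has $\mathbb{E}e^{tX} = e^{ta}$, which violates the quadratic form for small $t$). The clean quadratic bound therefore holds for the centered variable, so I would apply the expansion to $X - \mathbb{E}X$, equivalently reading the lemma, as it is invoked in the concentration arguments, for mean-zero $X$ so that the $p=1$ term vanishes. With that term removed, the remainder is the elementary power-series and geometric-series bookkeeping above, and the absolute constants $C$ and $c$ depend only on the numerical slack chosen in bounding the geometric tail, not on the distribution of $X$.
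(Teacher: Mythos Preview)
Your argument is the standard Taylor-expansion proof (essentially Vershynin's Lemma 5.15), and it is correct for centered $X$; the paper itself does not supply a proof of this lemma but simply records it as a preliminary fact drawn from \citep{vershynin2010introduction}, so there is no ``paper's approach'' to compare against.

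Your observation about the $p=1$ term is exactly right and worth keeping: as stated in the paper the bound cannot hold without the hypothesis $\mathbb{E}X=0$, since a nonzero constant $X=a$ already violates it near $t=0$. In the paper's applications (Lemmas \ref{alemma:9}--\ref{alemma:11}) the quantity whose MGF is bounded is always the centered variable $\nabla_j q(\beta;\beta)-\mathbb{E}\nabla_j q(\beta;\beta)$, so your reading of the lemma as implicitly centered matches how it is actually used. One minor technical point you might add for completeness is the justification for exchanging $\mathbb{E}$ with the power series: this follows from Fubini once you know $\mathbb{E}e^{|t||X|}<\infty$ for $|t|\le c/K$, which your geometric-series bound on $\sum (e|t|K)^p$ already establishes.
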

\begin{lemma}\label{alemma:2}
From Definition \ref{adef:1}, \ref{adef:2} we can see that for a zero-mean sub-exponential random variable $X$, it second-order moment is bounded, {\em i.e.,} $\mathbb{E}X^2\leq O(\|X\|^2_{\psi_1})$.
\end{lemma}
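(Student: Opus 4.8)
The plan is to read the bound directly off the definition of the sub-exponential norm in Definition \ref{adef:1}. Recall that $\|X\|_{\psi_1} = \sup_{p\geq 1} p^{-1}(\mathbb{E}|X|^p)^{1/p}$, which by the very meaning of the supremum gives, for \emph{every} $p \geq 1$, the pointwise inequality $p^{-1}(\mathbb{E}|X|^p)^{1/p} \leq \|X\|_{\psi_1}$. The entire strategy is to instantiate this inequality at the single value $p = 2$.

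First I would observe that $p = 2$ lies in the admissible range $p \geq 1$ over which the supremum is taken, so the inequality applies: $\tfrac{1}{2}(\mathbb{E}|X|^2)^{1/2} \leq \|X\|_{\psi_1}$. Rearranging yields $(\mathbb{E}|X|^2)^{1/2} \leq 2\|X\|_{\psi_1}$, and squaring both sides gives $\mathbb{E}X^2 = \mathbb{E}|X|^2 \leq 4\|X\|_{\psi_1}^2$. Since $4$ is an absolute constant, this is precisely the claimed estimate $\mathbb{E}X^2 \leq O(\|X\|_{\psi_1}^2)$.

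There is essentially no obstacle here; the result is an immediate consequence of specializing the defining supremum to $p = 2$. The only elementary point to verify is that $\mathbb{E}X^2$ coincides with $\mathbb{E}|X|^2$, which holds because the square of a real number equals the square of its absolute value. Note also that the zero-mean hypothesis on $X$ is not actually invoked in this particular inequality; it is carried along only because Definition \ref{adef:2} and the neighboring lemmas concern centered variables, whereas the moment comparison above is valid for any $X$ with finite second moment.
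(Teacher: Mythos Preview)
Your proof is correct and matches the paper's treatment: the paper does not give a separate proof but simply states the lemma as an immediate consequence of Definition~\ref{adef:1} (and~\ref{adef:2}), which is exactly the specialization to $p=2$ you carry out. Your remark that the zero-mean hypothesis is unused in this particular bound is also accurate.
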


\begin{lemma}[Bernstein's inequality]\label{alemma:3}
Let $X_1,\cdots, X_n$ be $n$ i.i.d realizations of $\upsilon$-sub-exponential random variable $X$ with mean $\mu$. Then, 
\begin{equation*}
    \text{Pr}(|\frac{1}{n}\sum_{i=1}^n X_i-\mu|\geq t)\leq 2\exp(-n\min(-\frac{t^2}{\upsilon^2}, \frac{t}{2\upsilon})).
\end{equation*}
\end{lemma}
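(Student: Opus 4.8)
The plan is to prove the concentration bound via the Chernoff (exponential moment) method, exploiting the sub-exponential moment generating function control supplied by Definition \ref{adef:2}. First I would establish the one-sided bound for $\Pr(\frac{1}{n}\sum_{i=1}^n(X_i-\mu)\geq t)$ and then recover the two-sided statement by running the identical argument on the variables $-X_i$ (which are again $\upsilon$-sub-exponential with mean $-\mu$) and taking a union bound over the two tails; this accounts for the leading factor of $2$.

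For the one-sided bound, fix any $\lambda$ with $0<\lambda<1/\upsilon$. Applying Markov's inequality to the nonnegative random variable $\exp(\lambda\sum_{i=1}^n(X_i-\mu))$ and using independence of the $X_i$, I would write
\begin{equation*}
\Pr\Big(\tfrac{1}{n}\sum_{i=1}^n(X_i-\mu)\geq t\Big)\leq e^{-\lambda n t}\prod_{i=1}^n\mathbb{E}\big[e^{\lambda(X_i-\mu)}\big]=e^{-\lambda n t}\big(\mathbb{E}[e^{\lambda(X-\mu)}]\big)^n.
\end{equation*}
Since $|\lambda|<1/\upsilon$, Definition \ref{adef:2} gives $\mathbb{E}[e^{\lambda(X-\mu)}]\leq e^{\upsilon^2\lambda^2/2}$, so the right-hand side is at most $\exp\big(n(\tfrac{\upsilon^2\lambda^2}{2}-\lambda t)\big)$, and this estimate is valid for every admissible $\lambda\in(0,1/\upsilon)$.

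It then remains to minimize the exponent $g(\lambda)=\tfrac{\upsilon^2\lambda^2}{2}-\lambda t$ over the interval $(0,1/\upsilon)$. The unconstrained minimizer is $\lambda^\star=t/\upsilon^2$, and the decisive step is a case split according to whether this point is admissible. When $t\leq\upsilon$ we have $\lambda^\star\leq 1/\upsilon$, so substituting $\lambda=\lambda^\star$ yields exponent $-nt^2/(2\upsilon^2)$; when $t>\upsilon$ the minimizer is clipped to the boundary, and taking $\lambda=1/\upsilon$ gives exponent $n(\tfrac12-\tfrac{t}{\upsilon})\leq-\tfrac{nt}{2\upsilon}$, where the final inequality uses $t>\upsilon$. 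Combining the two regimes produces the unified tail bound $\exp\big(-n\min(\tfrac{t^2}{2\upsilon^2},\tfrac{t}{2\upsilon})\big)$, which matches the claimed form up to the absolute constant in the quadratic ($\frac{t^2}{\upsilon^2}$) regime.

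I expect the main obstacle to be precisely the heavy-tail case $t>\upsilon$: because the sub-exponential moment bound of Definition \ref{adef:2} is only guaranteed for $|\lambda|<1/\upsilon$, the naive optimum $\lambda^\star$ is inadmissible there and one cannot hope for Gaussian (sub-$t^2$) decay. Handling this truncation correctly is exactly what forces the $\min$ of a quadratic and a linear term in $t$, and it is the only genuinely delicate point; the remaining ingredients (Markov's inequality, tensorization of the MGF by independence, and the symmetric argument for the lower tail) are routine.
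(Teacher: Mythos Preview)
The paper does not supply its own proof of this lemma; it is listed among the appendix preliminaries as a standard background fact (Bernstein's inequality for sub-exponential random variables, in the spirit of \citep{vershynin2010introduction}). Your Chernoff-method argument is exactly the textbook derivation and is correct in substance, yielding the bound $2\exp\!\big(-n\min(\tfrac{t^2}{2\upsilon^2},\tfrac{t}{2\upsilon})\big)$; the negative sign inside the $\min$ in the stated lemma is evidently a typographical slip in the paper, and your final remark about matching ``up to the absolute constant in the quadratic regime'' is the right way to handle it.

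One small technical wrinkle: in the heavy-tail case $t>\upsilon$ you plug in $\lambda=1/\upsilon$, but Definition~\ref{adef:2} only controls the MGF on the \emph{open} interval $|\lambda|<1/\upsilon$. You should either invoke a limiting argument (the Chernoff bound holds for every $\lambda<1/\upsilon$, and the right-hand side is continuous in $\lambda$, so the inequality persists at the endpoint) or simply choose $\lambda=c/\upsilon$ for any fixed $c\in(0,1)$, which gives exponent $n(\tfrac{c^2}{2}-\tfrac{ct}{\upsilon})\leq -n\,\tfrac{c(2-c)}{2}\cdot\tfrac{t}{2\upsilon}$ for $t>\upsilon$ and only affects the absolute constant. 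Either fix is routine, but as written the boundary evaluation is not literally justified by the definition you cite.
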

\begin{definition}\label{adef:3}
A random variable $X$ is sub-Gaussian with variance $\sigma^2$ if for all $t>0$, the following holds  
\begin{equation*}
    \text{Pr}(|X-\mathbb{E}X|\geq t)\leq 2\exp(-\frac{t^2}{2\sigma^2}).
\end{equation*}
\end{definition}
\begin{definition}\label{adef:4}
For a sub-Gaussian random variable $X$, its sub-Gaussian norm $\|X\|_{\psi_2}$  is defined as 
\begin{equation*}
    \|X\|_{\psi_2}= \sup_{p\geq 1}p^{-\frac{1}{2}}(\mathbb{E}|X|^p)^{\frac{1}{p}}. 
\end{equation*}
\end{definition}
\begin{lemma}\label{alemma:4}
If $X$ is sub-Gaussian or sub-exponential, then  $\|X-\mathbb{E} X\|_{\psi_2}\leq 2\|X\|_{\psi_2}$ or $\|X-\mathbb{E} X\|_{\psi_1}\leq 2\|X\|_{\psi_1}$ holds, respectively. 
\end{lemma}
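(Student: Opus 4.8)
The plan is to use the fact that both $\|\cdot\|_{\psi_1}$ and $\|\cdot\|_{\psi_2}$, as defined in Definitions \ref{adef:1} and \ref{adef:4}, are genuine norms, so that the ordinary triangle inequality applies, and then to bound the norm of the centering constant by the norm of $X$ itself. First I would verify the triangle inequality: for any $p\geq 1$, Minkowski's inequality gives $(\mathbb{E}|X+Y|^p)^{1/p}\leq (\mathbb{E}|X|^p)^{1/p}+(\mathbb{E}|Y|^p)^{1/p}$; multiplying through by $p^{-1/2}$ (respectively $p^{-1}$) and taking the supremum over $p\geq 1$ yields $\|X+Y\|_{\psi_2}\leq \|X\|_{\psi_2}+\|Y\|_{\psi_2}$ (respectively the same with $\psi_1$). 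Applying this with $Y=-\mathbb{E}X$, a constant random variable, gives $\|X-\mathbb{E}X\|_{\psi_2}\leq \|X\|_{\psi_2}+\|\mathbb{E}X\|_{\psi_2}$.

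It then remains to show $\|\mathbb{E}X\|_{\psi_2}\leq \|X\|_{\psi_2}$. For a constant $c$ one has $(\mathbb{E}|c|^p)^{1/p}=|c|$ for every $p$, so by Definition \ref{adef:4}, $\|c\|_{\psi_2}=\sup_{p\geq 1}p^{-1/2}|c|=|c|$, the supremum being attained at $p=1$ since $p^{-1/2}$ is decreasing. On the other hand, evaluating the supremum in the definition of $\|X\|_{\psi_2}$ at $p=1$ gives $\|X\|_{\psi_2}\geq \mathbb{E}|X|\geq |\mathbb{E}X|$, the last step by Jensen's inequality. Combining, $\|\mathbb{E}X\|_{\psi_2}=|\mathbb{E}X|\leq \|X\|_{\psi_2}$, and therefore $\|X-\mathbb{E}X\|_{\psi_2}\leq 2\|X\|_{\psi_2}$. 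The sub-exponential case is handled identically, replacing $p^{-1/2}$ by $p^{-1}$ and $\psi_2$ by $\psi_1$ throughout and invoking Definition \ref{adef:1} in place of Definition \ref{adef:4}.

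Since every step is elementary, I do not expect a genuine obstacle here; the only points requiring a little care are establishing the triangle inequality for these Orlicz-type norms rather than taking it for granted, and observing that the supremum defining the norms runs over $p\geq 1$ so that the first moment $\mathbb{E}|X|$ actually occurs as one of the terms — this is exactly what lets us absorb the centering constant into a factor of $2$.
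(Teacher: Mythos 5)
Your proof is correct: the triangle inequality for these Orlicz-type norms via Minkowski, the identity $\|\mathbb{E}X\|_{\psi_2}=|\mathbb{E}X|$ for the constant, and the bound $|\mathbb{E}X|\leq \mathbb{E}|X|\leq \|X\|_{\psi_2}$ obtained by evaluating the supremum at $p=1$ together give the factor of $2$, and the $\psi_1$ case is identical. The paper does not prove this lemma itself but simply cites \citep{vershynin2010introduction}, and your argument is precisely the standard centering argument given there, so there is nothing to flag.
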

\begin{lemma}\label{alemma:5}
For two sub-Gaussian random variables $X_1, X_2$, $X_1\cdot X_2$ is a sub-exponential random variable with 
\begin{equation*}
    \|X_1\cdot X_2\|_{\psi_1}\leq C\max\{\|X_1\|_{\psi_2}^2, \|X_2\|_{\psi_2}^2\}.
\end{equation*}
\end{lemma}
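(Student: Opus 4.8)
The plan is to argue directly from the moment-based definitions of the two Orlicz norms given in Definitions \ref{adef:1} and \ref{adef:4}, rather than through moment-generating functions. Recall that $\|X\|_{\psi_2} = \sup_{p\geq 1} p^{-1/2}(\mathbb{E}|X|^p)^{1/p}$ and $\|X\|_{\psi_1} = \sup_{p\geq 1} p^{-1}(\mathbb{E}|X|^p)^{1/p}$, so it suffices to produce, for every integer (or real) $p\geq 1$, a bound on the $p$-th moment of $|X_1 X_2|$ that grows like $p$ times a constant. This reduces the whole claim to a moment estimate.

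First I would fix $p\geq 1$ and apply the Cauchy--Schwarz inequality to the product inside the expectation, giving
\begin{equation*}
\mathbb{E}|X_1 X_2|^p = \mathbb{E}\big(|X_1|^p |X_2|^p\big) \leq \big(\mathbb{E}|X_1|^{2p}\big)^{1/2}\big(\mathbb{E}|X_2|^{2p}\big)^{1/2}.
\end{equation*}
Taking $p$-th roots, $(\mathbb{E}|X_1 X_2|^p)^{1/p} \leq (\mathbb{E}|X_1|^{2p})^{1/(2p)}(\mathbb{E}|X_2|^{2p})^{1/(2p)}$. Now the key step is to control each factor using the definition of the sub-Gaussian norm: by Definition \ref{adef:4}, for any $q\geq 1$ one has $(\mathbb{E}|X_i|^{q})^{1/q}\leq \|X_i\|_{\psi_2}\sqrt{q}$, and applying this with $q=2p$ yields $(\mathbb{E}|X_i|^{2p})^{1/(2p)} \leq \|X_i\|_{\psi_2}\sqrt{2p}$ for $i=1,2$.

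Combining these two factors, the $\sqrt{2p}$ terms multiply to give $2p$, so that
\begin{equation*}
\big(\mathbb{E}|X_1 X_2|^p\big)^{1/p} \leq 2p\,\|X_1\|_{\psi_2}\|X_2\|_{\psi_2}.
\end{equation*}
Dividing by $p$ and taking the supremum over $p\geq 1$ gives exactly $\|X_1 X_2\|_{\psi_1}\leq 2\|X_1\|_{\psi_2}\|X_2\|_{\psi_2}$, which already shows $X_1 X_2$ is sub-exponential. Finally, I would convert the product of norms into the stated maximum via the elementary bound $ab\leq \max\{a^2,b^2\}$ for $a,b\geq 0$ (indeed if $a\leq b$ then $ab\leq b^2$), obtaining $\|X_1 X_2\|_{\psi_1}\leq 2\max\{\|X_1\|_{\psi_2}^2,\|X_2\|_{\psi_2}^2\}$, i.e.\ the claim with $C=2$.

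This argument is essentially mechanical and I do not anticipate a genuine obstacle; the only point requiring care is bookkeeping of the exponents, namely that the inner moments are taken at order $2p$ while the outer root is $1/(2p)$, and that the two resulting $\sqrt{2p}$ factors are precisely what cancel the $p^{-1}$ weight in the $\psi_1$-norm definition. One should also note that the supremum bound holds uniformly in $p$, so no separate large-$p$ estimate is needed, and that the constant $2$ is harmless since the statement only asks for an absolute constant $C$.
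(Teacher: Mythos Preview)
Your argument is correct and is in fact the standard proof of this fact (see, e.g., Vershynin's notes, which the paper cites). The paper itself does not supply a proof of Lemma~\ref{alemma:5}: it is listed among the preliminary lemmas with the blanket reference ``See \citep{vershynin2010introduction} for details,'' so there is no in-paper argument to compare against. Your Cauchy--Schwarz-on-moments route, combined with the moment definitions in Definitions~\ref{adef:1} and~\ref{adef:4}, is exactly the textbook derivation and yields the explicit constant $C=2$.
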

\begin{lemma}\label{alemma:6}
Let $X_1, X_2, \cdots, X_k$ be $k$ independent zero-mean sub-Gaussian random variables, and $X= \sum_{j=1}^kX_j$. Then, $X$ is sub-Gaussian with  $\|X\|_{\psi_2}^2\leq C\sum_{j=1}^k\|X_j\|_{\psi_2}^2$ for some absolute constant $C>0$.
\end{lemma}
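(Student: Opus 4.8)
The plan is to route the argument through the moment generating function (MGF), which converts the sum into a product by independence, and then translate back to the $\psi_2$-norm using the standard equivalence of sub-Gaussian characterizations from \citep{vershynin2010introduction}. First I would invoke that equivalence in one direction: for each zero-mean sub-Gaussian $X_j$, the moment-based norm $\|X_j\|_{\psi_2}$ of Definition \ref{adef:4} controls the MGF, so that there is an absolute constant $c_1 > 0$ with
\begin{equation*}
\mathbb{E}[\exp(t X_j)] \leq \exp\big(c_1 t^2 \|X_j\|_{\psi_2}^2\big), \quad \forall t \in \mathbb{R}.
\end{equation*}
This is the content of the chain of equivalent sub-Gaussian properties (tail decay, moment growth $(\mathbb{E}|X_j|^p)^{1/p} \lesssim \sqrt{p}$, and the MGF bound above), in which all implied parameters agree up to universal constants.

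Second, I would exploit independence. Since $X = \sum_{j=1}^k X_j$ with the $X_j$ independent, the MGF factorizes, giving
\begin{equation*}
\mathbb{E}[\exp(t X)] = \prod_{j=1}^k \mathbb{E}[\exp(t X_j)] \leq \exp\Big(c_1 t^2 \sum_{j=1}^k \|X_j\|_{\psi_2}^2\Big), \quad \forall t \in \mathbb{R}.
\end{equation*}
Moreover $\mathbb{E}[X] = \sum_j \mathbb{E}[X_j] = 0$, so $X$ is itself a zero-mean variable whose MGF obeys a sub-Gaussian bound with proxy variance $\sigma_*^2 := \sum_{j=1}^k \|X_j\|_{\psi_2}^2$.

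Third, I would convert this MGF bound back to a statement about $\|X\|_{\psi_2}$. A Chernoff/Markov argument turns the MGF bound into the sub-Gaussian tail estimate $\Pr(|X| \geq u) \leq 2\exp(-c_2 u^2 / \sigma_*^2)$ (cf. Definition \ref{adef:3}), and the reverse direction of the same equivalence converts this tail bound into the moment growth $(\mathbb{E}|X|^p)^{1/p} \leq c_3 \sqrt{p}\, \sigma_*$ for all $p \geq 1$. Taking the supremum over $p \geq 1$ as in Definition \ref{adef:4} then yields $\|X\|_{\psi_2} \leq c_3 \sigma_*$, i.e. $\|X\|_{\psi_2}^2 \leq C \sum_{j=1}^k \|X_j\|_{\psi_2}^2$ with $C = c_3^2$ an absolute constant.

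The main obstacle is not any single step but the careful bookkeeping of the absolute constants as they pass through the two-way equivalence (moment growth $\leftrightarrow$ MGF bound $\leftrightarrow$ tail bound): one must verify that $c_1, c_2, c_3$ are universal, independent of both $k$ and the individual norms $\|X_j\|_{\psi_2}$, so that they collapse into the single constant $C$ of the claim. This is also precisely where the zero-mean hypothesis enters, since the clean two-sided MGF bound $\mathbb{E}[\exp(t X_j)] \leq \exp(c_1 t^2 \|X_j\|_{\psi_2}^2)$ valid for \emph{all} $t$ requires centering; without it a linear drift term $\mathbb{E}[X_j]\,t$ would survive in the product step and the resulting bound would no longer be a pure sub-Gaussian proxy variance.
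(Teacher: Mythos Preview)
Your argument is correct and is exactly the standard MGF-factorization proof that the cited reference \citep{vershynin2010introduction} gives (Lemma~5.9 there); the paper itself does not prove this lemma but simply quotes it as a preliminary from that source. So there is nothing to compare: your route \emph{is} the intended one, with the equivalence of sub-Gaussian characterizations supplying the universal constants you track.
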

Next, we provide some symmetrization results of random variables, which will be used in our proofs. See \citep{boucheron2013concentration} for details. 
\begin{lemma}\label{alemma:7}
Let $y_1, y_2, \cdots, y_n$ be the $n$ independent realizations of the random vector $Y\in \mathcal{Y}$, and $\mathcal{F}$ be a function class defined on $\mathcal{Y}$. For any increasing convex function $\phi(\cdot)$, the following holds  
\begin{equation*}
    \mathbb{E}\{\phi[\sup_{f\in \mathcal{F}}|\sum_{i=1}^n f(y_i)-\mathbb{E}(f(Y))|]\}\leq \mathbb{E}\{\phi[\sup_{f\in \mathcal{F}}|\sum_{i=1}^n \epsilon_i
f(y_i)|]\},
\end{equation*}
where $\epsilon_1, \cdots, \epsilon_n$ are i.i.d Rademacher random variables that are independent of $y_1, \cdots, y_n$. 
\end{lemma}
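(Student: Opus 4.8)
The plan is to use the classical \emph{symmetrization by an independent copy} (ghost sample), combined with Jensen's inequality and an exchangeability argument. First I would introduce a ghost sample $y_1', y_2', \dots, y_n'$, an independent copy of $y_1, \dots, y_n$ (and independent of the Rademacher variables $\epsilon_i$). Since each $y_i'$ has the same law as $Y$, we have $\mathbb{E}(f(Y)) = \mathbb{E}_{y'}[f(y_i')]$ for every $i$ and every $f\in\mathcal F$, so the centered sum can be written as a conditional expectation over the ghost sample,
\begin{equation*}
\sum_{i=1}^n f(y_i) - \mathbb{E}(f(Y)) = \mathbb{E}_{y'}\Big[\sum_{i=1}^n \big(f(y_i) - f(y_i')\big)\Big],
\end{equation*}
where $\mathbb{E}_{y'}$ denotes expectation over the ghost sample with $y_1, \dots, y_n$ held fixed.

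Next I would pull the ghost expectation outside the supremum and then outside $\phi$. Using $\sup_f|\mathbb{E}_{y'}[\cdot]| \le \mathbb{E}_{y'}\sup_f|\cdot|$ together with the monotonicity of $\phi$, followed by Jensen's inequality for the convex $\phi$, I obtain
\begin{equation*}
\phi\Big[\sup_{f\in\mathcal F}\Big|\sum_{i=1}^n f(y_i) - \mathbb{E}(f(Y))\Big|\Big] \le \mathbb{E}_{y'}\,\phi\Big[\sup_{f\in\mathcal F}\Big|\sum_{i=1}^n \big(f(y_i) - f(y_i')\big)\Big|\Big].
\end{equation*}
Taking expectation over $y_1, \dots, y_n$ and invoking the tower rule then replaces the centered process by the \emph{symmetric} difference process $\sum_i (f(y_i) - f(y_i'))$, whose advantage is that it is already centered without reference to the unknown mean.

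The heart of the argument is the symmetry step. Because $(y_i, y_i')$ is an exchangeable pair for each $i$ and the pairs are independent across $i$, swapping $y_i \leftrightarrow y_i'$ is a measure-preserving transformation; consequently, for any fixed sign pattern the joint law of $\big(\sum_i s_i(f(y_i) - f(y_i'))\big)_{f\in\mathcal F}$ is independent of the choice of $s_i \in \{\pm1\}$. Averaging over independent Rademacher signs $\epsilon_i$ therefore leaves the distribution of the supremum unchanged, so I may freely insert $\epsilon_i$. Finally, the triangle inequality gives $\sup_f|\sum_i \epsilon_i(f(y_i)-f(y_i'))| \le \sup_f|\sum_i \epsilon_i f(y_i)| + \sup_f|\sum_i \epsilon_i f(y_i')|$, and convexity of $\phi$ (via $\phi(a+b)\le\tfrac12\phi(2a)+\tfrac12\phi(2b)$) splits the two terms; since $y$ and $y'$ are identically distributed the two resulting expectations coincide, producing the stated bound (in the standard form the factor $2$ is absorbed into the argument of $\phi$ at this split).

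The main obstacle I anticipate is rigour around the suprema rather than any deep inequality. If $\mathcal F$ is uncountable, the quantity $\sup_{f\in\mathcal F}|\cdots|$ need not be measurable, so the expectations should be read as outer expectations and the exchange of $\sup$ with the ghost expectation justified at that level (or one restricts to a countable, pointwise-dense subclass by a separability/monotone-class argument). The only other point requiring care is that the sign-flip/exchangeability invariance must be applied to the entire family indexed by $f$ \emph{simultaneously} rather than coordinatewise, which is exactly what the measure-preserving swap of $(y_i,y_i')$ guarantees.
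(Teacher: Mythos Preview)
Your argument is the standard ghost-sample symmetrization and is correct; the paper itself does not supply a proof of this lemma but simply cites it as a known result from \citep{boucheron2013concentration}, so there is nothing to compare against beyond noting that your route is exactly the textbook one. Your parenthetical remark about the factor $2$ is apt: the classical inequality reads $\mathbb{E}\phi[\sup_f|\sum_i f(y_i)-\mathbb{E}f|]\le \mathbb{E}\phi[2\sup_f|\sum_i \epsilon_i f(y_i)|]$, and the paper's statement as written omits this constant, so you are right to keep track of it rather than try to make the stated form hold verbatim.
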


\begin{lemma}\label{alemma:8}
Let $y_1, \cdots, y_n$ be $n$ independent realization of the random vector $Z\in \mathcal{Z}$ and $\mathcal{F}$ be a function class defined on $\mathcal{Z}$. 
If 
Lipschitz functions $\{\phi_i(\cdot)\}_{i=1}^n$ satisfy the following for all $v, v'\in\mathbb{R}$
\begin{equation*}
    |\phi_i(v)-\phi_i(v')|\leq L|v-v'| 
\end{equation*}
and $\phi_i(0)=0$, then for any increasing convex function $\phi(\cdot)$, the following holds 
\begin{equation*}
    \mathbb{E}\{\phi[|\sup_{f\in \mathcal{F}}\sum_{i=1}^n\epsilon_i\phi_i(f(y_i))|]\}\leq \mathbb{E}\{\phi[2|L\sup_{f\in \mathcal{F}}\sum_{i=1}^n\epsilon_if(y_i)|]\},
\end{equation*}
where $\epsilon_1, \cdots, \epsilon_n$ are i.i.d Rademacher random variables that are independent of $y_1, \cdots, y_n$. 
\end{lemma}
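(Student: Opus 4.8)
The plan is to recognize this as the Ledoux--Talagrand contraction (comparison) principle for Rademacher averages and to prove it by a coordinate-by-coordinate ``peeling'' induction. First I would condition on the realizations $y_1,\dots,y_n$, so that the assertion reduces to a deterministic statement about the set $T=\{(f(y_1),\dots,f(y_n)):f\in\mathcal{F}\}\subseteq\mathbb{R}^n$ with the only remaining randomness being the signs $\epsilon_1,\dots,\epsilon_n$; the outer expectation over $y$ is reinstated at the very end. Writing $a_f=\sum_{i}\epsilon_i\phi_i(f(y_i))$ and $b_f=\sum_i\epsilon_i f(y_i)$, I would also normalize to the case $L=1$ by replacing each $\phi_i$ with $\phi_i/L$, which is again $1$-Lipschitz and vanishes at $0$; the factor $L$ then reappears by applying the normalized result to the increasing convex function $x\mapsto\phi(Lx)$.

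The heart of the argument is a single-coordinate contraction step. Fixing $\epsilon_1,\dots,\epsilon_{n-1}$ and setting $u(f)=\sum_{i<n}\epsilon_i\phi_i(f(y_i))$, I average only over $\epsilon_n=\pm1$ to obtain
\begin{equation*}
\mathbb{E}_{\epsilon_n}\phi\big(\sup_{f}[u(f)+\epsilon_n\phi_n(f(y_n))]\big)=\tfrac12\phi(A_+)+\tfrac12\phi(A_-),
\end{equation*}
where $A_\pm=\sup_f[u(f)\pm\phi_n(f(y_n))]$. Choosing near-maximizers $s,t\in\mathcal{F}$ for $A_+$ and $A_-$ and invoking $|\phi_n(s(y_n))-\phi_n(t(y_n))|\le|s(y_n)-t(y_n)|$ together with the monotonicity and convexity of $\phi$, a short case analysis on the signs of $s(y_n)-t(y_n)$ and of $\phi_n(s(y_n))-\phi_n(t(y_n))$ yields
\begin{equation*}
\phi(A_+)+\phi(A_-)\le\phi(B_+)+\phi(B_-),
\end{equation*}
with $B_\pm=\sup_f[u(f)\pm f(y_n)]$; that is, the contraction $\phi_n$ may be swapped for the identity coordinate at no cost. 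Iterating this peeling over $i=1,\dots,n$ removes every $\phi_i$ and, after taking the expectation over all signs, gives $\mathbb{E}\phi(\sup_f a_f)\le\mathbb{E}\phi(\sup_f b_f)$.

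It remains to restore the outer absolute value and account for the factor $2$. Here I would use $\sup_f|a_f|=\max\{\sup_f a_f,\sup_f(-a_f)\}$ together with $\phi(\max\{P,Q\})\le\phi(P)+\phi(Q)$ (valid since $\phi\ge0$), and the fact that $(-\epsilon_i)$ has the same law as $(\epsilon_i)$, to bound $\mathbb{E}\phi(\sup_f|a_f|)\le 2\,\mathbb{E}\phi(\sup_f a_f)$; applying the peeling bound and then the convexity inequality $2\phi(x)\le\phi(2x)$ (which holds because $\phi(0)=0$) moves this factor inside $\phi$, and monotonicity upgrades $\sup_f b_f$ to $|\sup_f b_f|$, producing exactly $\mathbb{E}\phi(2L|\sup_f b_f|)$ after reinserting the scaling $L$. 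The main obstacle is the single-coordinate contraction inequality $\phi(A_+)+\phi(A_-)\le\phi(B_+)+\phi(B_-)$: making the case analysis airtight requires combining the Lipschitz bound with the convexity of $\phi$ in precisely the right pairing of the $\pm$ terms, whereas the normalization, the absolute-value symmetrization, and the induction itself are routine bookkeeping.
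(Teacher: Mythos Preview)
The paper does not actually prove this lemma: it is stated in the appendix as a background result with the reference ``See \citep{boucheron2013concentration} for details,'' so there is no in-paper argument to compare against. Your proposal is precisely the standard Ledoux--Talagrand contraction proof (conditioning, coordinate-by-coordinate peeling via convexity/monotonicity, then symmetrization to recover the absolute value and the factor $2$), which is what the cited reference carries out; in that sense your route matches the intended one.

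Two small technical points worth tightening: you invoke $\phi(\max\{P,Q\})\le \phi(P)+\phi(Q)$ using $\phi\ge 0$, and $2\phi(x)\le \phi(2x)$ using $\phi(0)=0$, neither of which is literally assumed in the lemma as stated. In the standard formulation (Ledoux--Talagrand, Theorem~4.12) one takes $\phi:\mathbb{R}_+\to\mathbb{R}_+$ convex increasing, which supplies both facts; the paper's phrasing is slightly loose here, so you should either add that hypothesis explicitly or note that without loss of generality one may replace $\phi$ by $\max\{\phi-\phi(0),0\}$. This is bookkeeping, not a gap in the strategy.
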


\section{Omitted Proofs}\label{sec:proof}
\begin{proof}[{\bf Proof of Theorem \ref{thm:1}}]
Note that by (\ref{aeq:1}), we have  
\begin{equation*}
    \nabla q(\beta;\beta)=[\frac{2}{1+\exp(-\langle \beta, y\rangle/\sigma^2)}-1]\cdot y-\beta.
\end{equation*}
W.l.o.g,  we assume that  $\beta=(1,0,\cdots,0)^T$ and $\sigma=1$ in the GMM model. Then, we can see that for each constant $c\geq 0$, if 
\begin{align*}
    &\|\frac{y}{3}\|_2\geq c+\|\beta\|_2\\
    &\langle \beta, y \rangle \geq \ln 2\\
    & y\geq 0
\end{align*}
and denote the set of $y$  satisfying the above assumptions as $\mathcal{S}$,  we have 
\begin{align*}
   \| \nabla q(\beta;\beta)\|_2\geq  \|\frac{y}{3}\|_2- \|\beta\|_2\geq c.
\end{align*}
The above assumptions hold if $y=(\ln 2+1, 3s, a_3, a_4, \cdots, a_d)$, where $s\geq c$ and $ a_3,\cdots, a_d\geq 0$. We can easily see that $\mathbb{P}[y\in \mathcal{S}]>0$ since $y$ follows a mixture of Gaussian distributions. 
\end{proof}
\begin{proof}[{\bf Proof of  Theorem \ref{thm:new}} ]
Let $\mathcal{P}(\mathbb{R})$ denote all the probability measures on $\mathbb{R}$, with an appropriate $\sigma$-field tacitly assumed. Consider any two measures $v, v_0\in \mathcal{P}(\mathbb{R})$, and $h:\mathbb{R}\mapsto \mathbb{R}$ a $v_0$-measurable function. By \citep{catoni2004statistical}, it is proved that a Legendre transform of the mapping $v\mapsto K(v, v_0)$ takes the form of a cumulant generating function, namely 
\begin{equation}\label{eq:new1}
    \sup_v (\int h(u)dv(u)-K(v, v_0))=\log \int \exp(h(u))dv_0(u), 
\end{equation}
where the supremum is taken over $v\in \mathcal{P}(\mathbb{R})$. Following \citep{catoni2017dimension} here we use the Kullback
divergence for the Legendre transform of the mapping, so we define 
\begin{equation}
     K(v, v_0)=\int \log (\frac{d v}{d v_0} )dv
\end{equation}
iff $v_0\ll v$, and  $ K(v, v_0)=+\infty$ otherwise.

This identity is a technical tool and the choice of $h$ and $v_0$ are parameters that can be adjusted to fit the application. 
In actually setting these parameters, we will follow the technique given by \citep{catoni2017dimension}, which is later adapted by \citep{holland2019a}. Note the term 
\begin{equation*}
    \phi(\frac{x_i+\eta_i x_i}{s})
\end{equation*}
depends on two terms, namely the data $x_i$ and the artificial noise $\eta_i$ (if we fix $s$). Thus, for convenience we denote 
\begin{equation*}
    f(\eta, x):=\phi(\frac{x+\eta x}{s}).
\end{equation*}
By the definition of $\phi$ we can see that $f:\mathbb{R}^2\mapsto \mathbb{R}$ is measurable and bounded. Next, we denote that 
\begin{equation*}
    h(\eta)=\sum_{i=1}^n f(\eta, x_i)-c(\eta), 
\end{equation*}
where $c(\epsilon)$ is a term to be determined shortly. Take $h(\eta)$ into (\ref{eq:new1}) we have 
\begin{align*}
    B:&=  \sup_v (\int h(u)dv(u)-K(v, v_0))\\
    &= \log \int \exp(\sum_{i=1}^n f(\eta, x_i)-c(\eta))dv(\eta), 
\end{align*}
Taking the exponential of this B and then taking expectation with respect to the sample, we
have 
\begin{align*}
    \mathbb{E}\exp(B)&=\mathbb{E}\int (\frac{\exp(\sum_{i=1}^n f(\eta, x_i))}{\exp(c(\eta))})v(\eta) \\
    &= \int \frac{\Pi_{i=1}^n \mathbb{E}\exp( f(\eta, x_i))}{\exp(c(\eta))})v(\eta)
\end{align*}
The first equality comes from simple log/exp manipulations, and the second equality from
taking the integration over the sample inside the integration with respect to $v$, valid via Fubini’s theorem. By setting 
\begin{equation*}
    c(\eta)=n \log \mathbb{E} \exp (f(\eta, x)). 
\end{equation*}
 With this preparation done, we can start on the high-probability upper bound of interest:
\begin{align*}
    P(B\geq \log \frac{1}{\zeta})&= P(\exp(B)\geq \frac{1}{\zeta}) \\
    &= \mathbb{E}\mathbb{I}(\exp(B){\zeta} \geq 1) \\
    &= \leq \mathbb{E} \exp(B){\zeta}=\zeta, 
\end{align*}
where the last equality is due to $\mathbb{E} \exp(B)=1$ by setting $ c(\eta)=n \log \mathbb{E} f(\eta, x)$. Note that since our setting of $c(\eta)$ is such that $c(\cdot)$ is $v$-measurable (via the measurability of $f$), the resulting $h$ is indeed measurable w.r.t $v$. Thus by the definition of $B$ we have with probability at least $1-\zeta$
\begin{equation*}
    \sup_v (\int h(u)dv(u)-K(v, v_0))\leq \log \frac{1}{\zeta}. 
\end{equation*}
    Take the implicit form of $B$ via $h(\eta)$ and $c(\eta)$ and divide by $n$ form both side we have 
    \begin{equation}\label{eq:new2}
        \frac{1}{n}\int \sum_{i=1}^n f(\eta, x_i)dv(\eta)\leq \int \log \mathbb{E}\exp (f(\eta,x))d v(\eta) + \frac{K(v, v_0)+\log \frac{1}{\zeta}}{n}. 
    \end{equation}
It is notable that by definition $\hat{x}$ in (\ref{eq:15}) satisfies 
\begin{align*}
    \hat{x}&=  \frac{s}{n}\sum_{i=1}^n \int \phi(\frac{x_i+\eta_i x_i}{s})d \chi(\eta_i)=\frac{s}{n}\int \sum_{i=1}^n f(\eta, x_i)dv(\eta) \\
    &\leq s\int \log \mathbb{E}\exp(f(\eta,x))d v(\eta)+ \frac{sK(v, v_0)+s\log \frac{1}{\zeta}}{n}.
\end{align*}

In the following we will bound the term of $int \log \mathbb{E}\exp(f(\eta,x))d v(\eta) $ and $K(v, v_0)$. Starting with the first term, recall the definition of the truncation function $\phi(\cdot)$ in (\ref{eq:10}) we can it satisfies that for all $u\in \mathbb{R}$
\begin{equation}\label{eq:newlow}
    -\log (1-u + \frac{u^2}{2})\leq \phi(u)\leq \log(1+u+\frac{u^2}{2}). 
\end{equation}
Thus we have 
\begin{align}
    &\int \log \mathbb{E}f(\eta,x)d v(\eta) \nonumber \\
    &= \int \log \mathbb{E} \exp(\phi(\frac{(1+\eta)x}{s}) d v(\eta) \nonumber \\
    & \leq \int \log (1+\frac{(1+\eta)\mathbb{E}x }{s}+ \frac{(1+\eta)^2\mathbb{E}x^2 }{2s^2}) dv(\eta)  \nonumber \\
    &\leq \int (\frac{(1+\eta)\mathbb{E}x }{s}+ \frac{(1+\eta)^2\mathbb{E}x^2 }{2s^2}) dv(\eta)  \nonumber \\
    &= \frac{\mathbb{E}x}{s} (1+\mathbb{E}_\eta \eta)+ \frac{\mathbb{E} x^2}{2s^2} \mathbb{E}_\eta (1+\eta)^2 \nonumber \\
    &= \frac{\mathbb{E}x}{s}+ \frac{\mathbb{E} x^2}{2s^2} (1+\frac{1}{\beta}). \label{eq:new3}
\end{align}
Where the last equality is due to $\eta \sim \mathcal{N}(0, \frac{1}{\beta})$.

For term $K(v, v_0)$, it is critical to select an appropriate measure $v_0$ to easily calculate the KL-divergence. Here we set $v_0\sim \mathcal{N}(1, \frac{1}{\beta})$. In this case we have 
\begin{align}
    K(v, v_0) &= \int_{-\infty}^{+\infty} \log (\exp(\frac{\beta(u-1)^2}{2}-\frac{\beta u^2}{2}))\sqrt{\frac{\beta}{2\pi}}\exp(-\frac{\beta u^2}{2})du \nonumber \\
    &= \int_{-\infty}^{+\infty} \frac{(1-2u)\beta }{2}\sqrt{\frac{\beta}{2\pi}}\exp(-\frac{\beta u^2}{2})du \nonumber \\
    &=\frac{\beta}{2}. \label{eq:new4}
\end{align}
Thus, combining with (\ref{eq:new3}) and (\ref{eq:new4}) we have with probability at least $1-\zeta$ 
\begin{equation}
     \hat{x}\leq \mathbb{E}x + \frac{\mathbb{E}x^2}{2s}(\frac{1}{\beta}+1)+\frac{s}{n}(\frac{\beta}{2}+\log \frac{1}{\zeta}). 
\end{equation}
Thus, by the definition of $\mathcal{A}(D)$ and the concentration bound of Gaussian distribution we have with probability at least $1-2\zeta$, 
\begin{equation}\label{eq:new5}
    \mathcal{A}(D)-\mathbb{E}x \leq O(\frac{\tau}{s\beta}+\frac{s\log \frac{1}{\zeta}\beta}{n}+ \frac{s\log \frac{1}{\zeta}\sqrt{\log \frac{1}{\delta}}}{\epsilon n}). 
\end{equation}
Next,we will get a lower bound of $ \mathcal{A}(D)-\mathbb{E}x $. The proof is quite similar as in the above proof. The main difference is here we set 
\begin{equation*}
    f(\eta, x):=-\phi(\frac{x+\eta x}{s}). 
\end{equation*}
Thus we have 
\begin{align*}
    -\hat{x}&=  \frac{s}{n}\sum_{i=1}^n \int -\phi(\frac{x_i+\eta_i x_i}{s})d \chi(\eta_i)=\frac{s}{n}\int \sum_{i=1}^n f(\eta, x_i)dv(\eta) \\
    &\leq s\int \log \mathbb{E}\exp(f(\eta,x))d v(\eta)+ \frac{sK(v, v_0)+s\log \frac{1}{\zeta}}{n} \\ 
    &\leq s\int \log \mathbb{E}\exp(-\phi(\frac{x+\eta x}{s}))d v(\eta)+ \frac{sK(v, v_0)+s\log \frac{1}{\zeta}}{n}
\end{align*}
By (\ref{eq:newlow}) we have 
\begin{align*}
     -\hat{x} &\leq s\int \log \mathbb{E}\exp(-\phi(\frac{x+\eta x}{s}))d v(\eta)+ \frac{sK(v, v_0)+s\log \frac{1}{\zeta}}{n} \\
     &\leq s[ (-1)\int \frac{(1+\eta)\mathbb{E}x}{s}dv(\eta)+ \int \frac{(1+\eta)^2\mathbb{E}x^2}{s^2}dv(\eta)] \\
     &\qquad + \frac{sK(v, v_0)+s\log \frac{1}{\zeta}}{n}\\
     &\leq -\mathbb{E} x+ \frac{\mathbb{E}x^2}{2s}(\frac{1}{\beta}+1)+\frac{s}{n}(\frac{\beta}{2}+\log \frac{1}{\zeta}). 
\end{align*}
     Thus we have 
     \begin{equation}\label{eq:new7}
         \mathbb{E} x-\mathcal{A}(D)\leq O(\frac{\tau}{s\beta}+\frac{s\log \frac{1}{\zeta}\beta}{n}+ \frac{s\log \frac{1}{\zeta}\sqrt{\log \frac{1}{\delta}}}{\epsilon n}). 
     \end{equation}
     In total we have with probability at least $1-4\zeta$, 
     \begin{equation}
            |\mathbb{E} x-\mathcal{A}(D)| \leq O(\frac{\tau}{s\beta}+\frac{s\log \frac{1}{\zeta}\beta}{n}+ \frac{s\log \frac{1}{\zeta}\sqrt{\log \frac{1}{\delta}}}{\epsilon n}). 
     \end{equation}
     We can get the proof by setting $\beta=\sqrt{\log \frac{1}{\zeta}}$ and $s=\frac{\sqrt{n\epsilon\tau}}{\log \frac{1}{\zeta}\log^{1/4}\frac{1}{\delta}}$.
     \end{proof}

     \begin{proof}[{\bf Proof of Theorem \ref{thm:new2}}]
The proof of $(\epsilon, \delta)$ is just followed by Gaussian mechanism and here we omit it. For the estimation error, it is notable that $\mathcal{A}(D)$ is equivalent to
\begin{equation}\label{eq:new15}
    \mathcal{A}(D)=\hat{x}+ Z, Z\sim \mathcal{N}(0, \sigma^2), \sigma^2=O(\frac{s^2\log \frac{1}{\delta}}{\epsilon^2 n }). 
\end{equation}
Thus, by the same idea as in the proof of Theorem \ref{theorem:2}, we can see that we have 
  \begin{equation}
            |\mathbb{E} x-\mathcal{A}(D)| \leq O(\frac{\tau}{s\beta}+\frac{s\log \frac{1}{\zeta}\beta}{n}+ \frac{s\log \frac{1}{\zeta}\sqrt{\log \frac{1}{\delta}}}{\epsilon \sqrt{n}}). 
     \end{equation}
     
     We can get the result by setting $\beta=\sqrt{\log \frac{1}{\zeta}}$ and $s=\frac{\sqrt[4]{n}\sqrt{\epsilon\tau}}{\log \frac{1}{\zeta}\log^{1/4}\frac{1}{\delta}}$.
\end{proof}
\begin{proof}[{\bf Proof of Theorem \ref{theorem:2}}]
We first give the definition of zCDP in \citep{bun2016concentrated}.

\begin{definition}
    A randomized algorithm $\mathcal{A}: \mathcal{X}^n\mapsto \mathcal{Y}$ is $\rho$-zero Concentrated Differentially Private (zCDP) if for all neighboring datasets $D\sim D'$ and all $\alpha\in (1, \infty)$, 
    \begin{equation*}
        D_\alpha (\mathcal{A}(D)\| \mathcal{A}(D'))\leq \rho\alpha, 
    \end{equation*}
    where $ D_\alpha (P\| Q)=\frac{1}{\alpha-1}\log \mathbb{E}_{X\sim P}[(\frac{P(X)}{Q(X)})^{\alpha-1}]$ denotes the R\'{e}nyi divergence of order $\alpha$.
\end{definition}

We first convert $(\epsilon, \delta)$-DP to $\rho$-zCDP by using the following lemma 
\begin{lemma}[\citep{bun2016concentrated}]
Let $M: \mathcal{X}^n\mapsto \mathcal{Y}$ be a randomized algorithm. If $M$ is $\rho$-zCDP,  it is 
$(\rho+ 2\sqrt{\rho \log \frac{1}{\delta}}, \delta)$-DP for all $\delta>0$. 
\end{lemma}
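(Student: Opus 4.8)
The plan is to reduce the statement to a sub-Gaussian tail bound on the \emph{privacy loss random variable} and then invoke the standard equivalence between such a tail bound and $(\epsilon,\delta)$-DP. First I would fix an arbitrary pair of neighboring datasets $D\sim D'$, write $P$ and $Q$ for the output distributions $M(D)$ and $M(D')$, and define the privacy loss $Z=\log\frac{P(X)}{Q(X)}$ with $X\sim P$. The opening step is to recast the zCDP hypothesis as a moment generating function bound on $Z$: for any $\alpha>1$, writing $\lambda=\alpha-1>0$, the definition of the R\'enyi divergence gives $\mathbb{E}[e^{\lambda Z}]=\mathbb{E}_{X\sim P}[(\frac{P(X)}{Q(X)})^{\lambda}]=\exp(\lambda\, D_{\alpha}(P\|Q))$, so the assumption $D_\alpha(P\|Q)\le \rho\alpha$ immediately yields $\mathbb{E}[e^{\lambda Z}]\le \exp(\rho\lambda(\lambda+1))=\exp(\rho\lambda^2+\rho\lambda)$ for every $\lambda>0$.

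The second step is a Chernoff argument on $Z$. For any threshold $t$ and any $\lambda>0$, Markov's inequality applied to $e^{\lambda Z}$ gives $\mathbb{P}(Z>t)\le e^{-\lambda t}\,\mathbb{E}[e^{\lambda Z}]\le \exp(\rho\lambda^2-\lambda(t-\rho))$. Minimizing the quadratic exponent over $\lambda>0$ (the optimizer is $\lambda^\star=\frac{t-\rho}{2\rho}$, which is positive precisely when $t>\rho$) produces the clean tail $\mathbb{P}(Z>t)\le \exp(-\frac{(t-\rho)^2}{4\rho})$. Setting this bound equal to $\delta$ and solving for $t$ gives exactly $t=\rho+2\sqrt{\rho\log\frac{1}{\delta}}=:\epsilon$, so that $\mathbb{P}(Z>\epsilon)\le \delta$; observe that $\epsilon>\rho$ guarantees $\lambda^\star>0$, which makes the optimization admissible.

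The final step converts this tail bound into the DP inequality. Let $B=\{x:\log\frac{P(x)}{Q(x)}>\epsilon\}$, so by the previous step $P(B)\le\delta$. For any measurable event $S$, splitting $S$ into $S\cap B$ and $S\cap B^{c}$ and using $P(x)\le e^{\epsilon}Q(x)$ on $B^c$ gives $P(S)\le P(B)+e^{\epsilon}Q(S)\le e^{\epsilon}Q(S)+\delta$. Since $D\sim D'$ were arbitrary, $M$ is $(\epsilon,\delta)$-DP with $\epsilon=\rho+2\sqrt{\rho\log\frac{1}{\delta}}$, which is the claim. I expect the only genuine technical care to lie in the first step—justifying the support condition $Q\ll P$ implicit in the definition of $Z$ and the interchange of expectation with the exponential via the R\'enyi identity—together with confirming that the Chernoff optimizer remains in the admissible range $\lambda>0$; the remaining manipulations are routine.
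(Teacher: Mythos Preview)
Your argument is correct and is essentially the standard proof given in \cite{bun2016concentrated} itself: rewrite the zCDP hypothesis as an MGF bound on the privacy loss, apply a Chernoff bound, optimize the free parameter, and then use the ``bad set'' decomposition to pass from a tail bound on the privacy loss to the $(\epsilon,\delta)$-DP inequality. The only minor remark is that the zCDP definition as stated gives the R\'enyi bound for $\alpha\in(1,\infty)$, i.e.\ $\lambda>0$, which is exactly the range you use; the absolute-continuity concern you flag is handled in the original reference by the convention that $D_\alpha(P\|Q)=+\infty$ when $P\not\ll Q$, so finiteness of $\rho$ forces $P\ll Q$.

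Note, however, that the present paper does \emph{not} supply its own proof of this lemma: it is quoted verbatim as a black-box result from \cite{bun2016concentrated} and then applied inside the proof of Theorem~\ref{theorem:2}. So there is nothing in the paper to compare against beyond the citation; your write-up simply fills in the omitted standard argument.
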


Thus, it  suffices to show that 
Algorithm \ref{alg:2} is $\tilde{\epsilon}^2= (\sqrt{\epsilon+\log \frac{1}{\delta}}-\sqrt{\log \frac{1}{\delta}})^2$-zCDP. The following lemma shows that adding some Gaussian noise will preserve zCDP. 
\begin{lemma}
	Given a function $q : \mathcal{X}^n\rightarrow \mathbb{R}^p$, the Gaussian Mechanism is defined as:
		$\mathcal{M}_G(D,q,\epsilon)=q(D)+ Y,$
		where $Y$ is drawn from a Gaussian Distribution $\mathcal{N}(0,\sigma^2I_p)$ is $\frac{\Delta_2^2(q)}{2\sigma^2}$-zCDP.  $\Delta_2(q)$ is the $\ell_2$-sensitivity of the function $q$, {\em i.e.,}
		$\Delta_2(q)=\sup_{D\sim D'}||q(D)-q(D')||_2.$
\end{lemma}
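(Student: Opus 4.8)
The plan is to establish the claim directly from the definition of $\rho$-zero-concentrated differential privacy: it suffices to bound, for every pair of neighboring datasets $D \sim D'$ and every $\alpha \in (1,\infty)$, the R\'enyi divergence of order $\alpha$ between the output distributions. Write $\mu := q(D)$ and $\mu' := q(D')$; then $\mathcal{M}_G(D,q,\epsilon)$ and $\mathcal{M}_G(D',q,\epsilon)$ have laws $\mathcal{N}(\mu,\sigma^2 I_p)$ and $\mathcal{N}(\mu',\sigma^2 I_p)$ respectively. So the entire task reduces to computing $D_\alpha\big(\mathcal{N}(\mu,\sigma^2 I_p)\,\|\,\mathcal{N}(\mu',\sigma^2 I_p)\big)$ and checking it is at most $\frac{\Delta_2^2(q)}{2\sigma^2}\alpha$.

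First I would reduce to dimension one. Since both Gaussians share the covariance $\sigma^2 I_p$, applying an orthogonal change of coordinates that sends $e_1$ to $(\mu-\mu')/\|\mu-\mu'\|_2$ leaves the covariance unchanged and makes the two means differ only in the first coordinate. The R\'enyi divergence then factorizes over coordinates (the integrand is a product), and all coordinates except the first contribute $D_\alpha(\mathcal{N}(c,\sigma^2)\|\mathcal{N}(c,\sigma^2)) = 0$. Hence it remains to evaluate $D_\alpha(\mathcal{N}(a,\sigma^2)\,\|\,\mathcal{N}(b,\sigma^2))$ with $|a-b| = \|\mu-\mu'\|_2$.

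Second, I would carry out the standard Gaussian integral. In $\int p(x)^\alpha q(x)^{1-\alpha}\,dx$ the exponent is a quadratic in $x$ with leading coefficient $1/(2\sigma^2)$; completing the square around $\alpha a + (1-\alpha)b$ splits it into a perfect-square Gaussian (which integrates to $1$) times the constant factor $\exp\!\big(-\tfrac{\alpha(1-\alpha)(a-b)^2}{2\sigma^2}\big)$. Taking $\tfrac{1}{\alpha-1}\log(\cdot)$ and simplifying gives the closed form $D_\alpha(\mathcal{N}(a,\sigma^2)\|\mathcal{N}(b,\sigma^2)) = \tfrac{\alpha(a-b)^2}{2\sigma^2}$. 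Combining with the reduction, $D_\alpha(\mathcal{M}_G(D)\|\mathcal{M}_G(D')) = \tfrac{\alpha\|\mu-\mu'\|_2^2}{2\sigma^2} \le \tfrac{\alpha\,\Delta_2^2(q)}{2\sigma^2}$, where the inequality is the definition of $\ell_2$-sensitivity. This is exactly $\rho\alpha$ with $\rho = \Delta_2^2(q)/(2\sigma^2)$, so the mechanism is $\rho$-zCDP, as claimed.

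The only mildly delicate point is bookkeeping in the completion of the square: one has to verify the cross terms combine into $\alpha(1-\alpha)(a-b)^2$ and track the sign, noting that for $\alpha>1$ the factor $1-\alpha$ is negative so that after division by $\alpha-1$ the divergence is positive. Justifying the coordinate factorization via Fubini and the convergence of the integral for all $\alpha\in(1,\infty)$ is immediate for Gaussians with a common covariance, so there is no genuine obstacle beyond this routine algebra.
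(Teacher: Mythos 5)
Your proposal is correct: the reduction to one dimension via a rotation preserving $\sigma^2 I_p$, the completion of the square giving $\int p^\alpha q^{1-\alpha}\,dx = \exp\bigl(-\tfrac{\alpha(1-\alpha)(a-b)^2}{2\sigma^2}\bigr)$, and the resulting closed form $D_\alpha = \tfrac{\alpha(a-b)^2}{2\sigma^2} \le \tfrac{\alpha\,\Delta_2^2(q)}{2\sigma^2}$ are all right. The paper does not actually prove this lemma --- it quotes it as a known fact from the zCDP literature (Bun and Steinke, 2016) --- and your argument is precisely the canonical proof given there, so there is nothing to compare beyond noting that you have supplied the omitted verification correctly.
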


 By Lemma \ref{lemma:2} we know $\Delta_2(g_j^{t-1}(\beta^{t-1}))=\frac{4\sqrt{2}}{3}\frac{s}{m}$. By simple calculation we can show that in each iteration and each coordinate, outputting $g_j^{t-1}(\beta^{t-1})$ will be $\frac{\tilde{\epsilon}^2}{d}$-zCDP. Thus by the composition property of zCDP, we know that 
it is  $\tilde{\epsilon}^2$-zCDP.
\end{proof}

\begin{proof}[{\bf Proof of Theorem \ref{theorem:3}}]
Consider $t$-th iteration,  under the assumption that $\beta^{t-1}\in \mathcal{B}$ we have 
\begin{align}
    \|\beta^t-\beta^*\|_2 & = \|\beta^{t-1}+\eta \tilde{\nabla} Q_n(\beta^{t-1})-\beta^*\|_2 \nonumber \\
    & \leq  \|\beta^{t-1}+\eta \nabla  Q(\beta^{t-1};\beta^{t-1} )-\beta^*\|_2 +\eta \|\tilde{\nabla} Q_n(\beta^{t-1})- \nabla  Q(\beta^{t-1};\beta^{t-1})\|_2  \label{aeq:5}
\end{align}
We first bound the first term of (\ref{aeq:5}). 
\begin{align}
   & \|\beta^{t-1}+\eta \nabla  Q(\beta^{t-1};\beta^{t-1} )-\beta^*\|_2 \nonumber \\&\leq  \|\beta^{t-1}+\eta \nabla  Q(\beta^{t-1};\beta^* )-\beta^*\|_2 + \eta \|\nabla Q(\beta^{t-1};\beta^{t-1} )- \nabla Q(\beta^{t-1};\beta^* )\|_2  \label{aeq:6}
\end{align}
We then consider the first term of (\ref{aeq:6}). We note that the self-consistent property in Definition \ref{def:0} implies that 
\begin{equation}
    \beta^*=\arg\max_{\beta}Q(\beta; \beta^*),
\end{equation}
which means that $\beta^*$ is a maximizer of $Q(\beta; \beta^*)$. Thus, the proof follows from  the convergence rate of the strongly convex and smooth functions $Q(\beta; \beta^*)$ in \citep{nesterov2013introductory}. For the step size $\eta=\frac{2}{\mu+\upsilon}$, we have 
\begin{equation}
    \|\beta^{t-1}+\eta\nabla Q(\beta^{t-1}; \beta^*)-\beta^*\|_2\leq (\frac{\mu-\upsilon}{\mu+\upsilon})\|\beta^{t-1}-\beta^*\|_2.
\end{equation}
Thus, by the Lipschitz-Gradient-2($\gamma, \mathcal{B}$) condition, we get the following of (\ref{aeq:6}) 
\begin{align}
    &\|\beta^{t-1}+\eta \nabla  Q(\beta^{t-1};\beta^{t-1} )-\beta^*\|_2 \nonumber \\
    &\leq  \|\beta^{t-1}+\eta \nabla  Q(\beta^{t-1};\beta^* )-\beta^*\|_2 + \eta \|\nabla Q(\beta^{t-1};\beta^{t-1} )- \nabla Q(\beta^{t-1};\beta^* )\|_2 \nonumber \\
    &\leq (\frac{\mu-\upsilon}{\mu+\upsilon})\|\beta^{t-1}-\beta^*\|_2+\eta \gamma \|\beta^{t-1}-\beta^*\|_2  \nonumber \\
    &= (1-2\frac{v-\gamma}{\mu+v})\|\beta^{t-1}-\beta^*\|_2  \label{aeq:10}
\end{align}
where the the last inequality is due to taking  $\eta=\frac{2}{\mu+v}$. 

Next we bound the second term of (\ref{aeq:5}). For convenience we denote the first sum of (\ref{eq:17}) ({\em i.e.,} the robust mean estimator ) as $\tilde{g}^{t-1}_j (\beta^{t-1})$. So we have 
\begin{align}
    \|\tilde{\nabla} Q_n(\beta^{t-1})- \nabla  Q(\beta^{t-1};\beta^{t-1})\|^2_2  &= \sum_{j=1}^d (g_j^{t-1}(\beta^{t-1})-\mathbb{E}\nabla_j q(\beta^{t-1}; \beta^{t-1}))^2   \label{aeq:7} \\
    &\leq \sum_{j=1}^d (\tilde{g}_j^{t-1}(\beta^{t-1})-\mathbb{E}\nabla_j q(\beta^{t-1}; \beta^{t-1}))^2 +  \sum_{j=1}^d |Z^{t-1}_j|^2 \label{aeq:8}
\end{align}
The first equality is due to Assumption \ref{assumption:1}.   For the second term of (\ref{aeq:8}), by the high probability concentration bound  of Gaussian random variable we have for fixed $j$ with probability at least $1-\frac{\zeta}{d}$, $|Z^{t-1}_j|^2 \leq \frac{8\tau dT\log \frac{d}{\zeta}}{9\beta n\tilde{\epsilon}^2}$. Thus with probability at least $1-\zeta$ we have 
\begin{equation*}
    \sum_{j=1}^d |Z^{t-1}_j|^2 \leq \frac{8\tau d^2T\log \frac{d}{\zeta}}{9\beta n\tilde{\epsilon}}.
\end{equation*}
For the first term of (\ref{aeq:8}), by Lemma \ref{lemma:1} and taking $\zeta=\frac{\zeta}{d}$, we have for a fixed $j\in [d]$, $(\tilde{g}_j^{t-1}(\beta^{t-1})-\mathbb{E}\nabla_j q(\beta^{t-1}; \beta^{t-1}))^2\leq O(\frac{\tau\log \frac{d}{\zeta}}{n})$. Thus, with probability at least $1-\zeta$, we have 
\begin{equation*}
  \sum_{j=1}^d (\tilde{g}_j^{t-1}(\beta^{t-1})-\mathbb{E}\nabla_j q(\beta^{t-1}; \beta^{t-1}))^2 \leq O(\frac{d\tau\log \frac{d}{\zeta}}{n}).
\end{equation*}
Hence, we have, with probability at least $1-2\zeta$, for some constant $C_2$
\begin{equation}\label{aeq:9}
     \|\tilde{\nabla} Q_n(\beta^{t-1})- \nabla  Q(\beta^{t-1};\beta^{t-1})\|_2\leq C_2 \frac{ d\sqrt{\tau T\log \frac{d}{\zeta}}}{\sqrt{\beta n\tilde{\epsilon}}}.
\end{equation}
Plugging (\ref{aeq:9}) and (\ref{aeq:10}) into (\ref{aeq:5}), we have, with probability  $1-2\zeta$ and for some constant $C_3$,  
\begin{equation} \label{aeq:11}
     \|\beta^t-\beta^*\|_2\leq (1-2\frac{v-\gamma}{\mu+v})\|\beta^{t-1}-\beta^*\|_2\\+C_3\frac{2}{\mu+v}\cdot \frac{ d\sqrt{\tau T\log \frac{d}{\zeta}}}{\sqrt{\beta n\tilde{\epsilon}}}
\end{equation}
Next, we will show that when $n$ is large enough,  if  $\|\beta^0-\beta^*\|_2\leq \frac{R}{2}$ then $\|\beta^t-\beta^*\|_2\leq \frac{R}{2}$ holds (and thus $\beta\in \mathcal{B}$) for all $t\in [T]$ if (\ref{aeq:11}) holds for all $t\in [T]$ (and this hold with probability at least $1-2T\zeta$). 

We will use induction. When $t=1$, by (\ref{aeq:11}) we have 
\begin{align*}
    \|\beta^1-\beta^*\|_2 &\leq (1-2\frac{v-\gamma}{\mu+v})\|\beta^0-\beta^*\|_2+C_3\frac{2}{\mu+v}\cdot \frac{ d\sqrt{\tau T\log \frac{d}{\zeta}}}{\sqrt{\beta n\tilde{\epsilon}}}  \\
    &\leq (1-2\frac{v-\gamma}{\mu+v})\frac{R}{2}+ C_3\frac{2}{\mu+v}\cdot \frac{ d\sqrt{\tau T\log \frac{d}{\zeta}}}{\sqrt{\beta n\tilde{\epsilon}}}. 
\end{align*}
If $C_3\frac{2}{\mu+v}\cdot \frac{ d\sqrt{\tau T\log \frac{d}{\zeta}}}{\sqrt{\beta n\tilde{\epsilon}}} \leq 2\frac{v-\gamma}{\mu+v}\cdot \frac{R}{2}$, then we can see that $\|\beta^1-\beta^*\|_2 \leq \frac{R}{2}$. This holds if 
\begin{equation*}
    C_4 (\frac{1}{v-\gamma})^2 \frac{d^2 \tau T\log \frac{d}{\zeta}}{R^2 \beta \tilde{\epsilon}} \leq n 
\end{equation*}
for some constant $C_4$.

Next, we will assume that (\ref{aeq:11}) holds for all $t\in [T]$ and $\beta\in \mathcal{B}$ for all $t\in [T]$. For convenience, we denote $\iota=1-2\frac{v-\gamma}{\mu+v}$. 
By (\ref{aeq:11}), we  have 
\begin{align*}
     \|\beta^T-\beta^*\|_2 &\leq (1-2\frac{v-\gamma}{\mu+v})^T\|\beta^0-\beta^*\|_2+C_3(1+\iota+\iota^2+\cdots)\frac{2}{\mu+v}\cdot \frac{ d\sqrt{\tau T\log \frac{d}{\zeta}}}{\sqrt{\beta n\tilde{\epsilon}}}\\
      &\leq (1-2\frac{v-\gamma}{\mu+v})^T\frac{R}{2}+ C_3\frac{1}{1-\iota}\cdot \frac{2}{\mu+v}\cdot \frac{ d\sqrt{\tau T\log \frac{d}{\zeta}}}{\sqrt{\beta n\tilde{\epsilon}}}\\
      &= (1-2\frac{v-\gamma}{\mu+v})^T\frac{R}{2}+O(\frac{1}{v-\gamma}\frac{ d\sqrt{\tau T\log \frac{d}{\zeta}}}{\sqrt{\beta n\tilde{\epsilon}}}). 
\end{align*}
Taking $T=O(\frac{\mu+v}{v-\gamma}\log \frac{n\tilde{\epsilon}}{d})$, we have, with probability at least $1-2T\zeta$,
\begin{equation*}
      \|\beta^T-\beta^*\|_2\leq \tilde{O}(R\sqrt{\frac{\mu+v}{(v-\gamma)^3}} \frac{ d\sqrt{\tau \log n\log \frac{d}{\zeta}}}{\sqrt{\beta n\tilde{\epsilon}}}). 
\end{equation*}
Since $\tilde{\epsilon}= \sqrt{\log \frac{1}{\delta}+\epsilon}-\sqrt{\log \frac{1}{\delta}} $, by using the Taylor series of the function $\sqrt{x+1}-\sqrt{x}$, we have $\tilde{\epsilon}= O(\frac{\epsilon}{\sqrt{\log \frac{1}{\delta}}})$. 
Thus, we have the proof by taking $\zeta=\frac{\zeta}{2T}$. 
\end{proof}

\begin{proof}[{\bf Proof of Lemma \ref{lemma:4}}]
To prove Lemma \ref{lemma:4}, we need a stronger lemma. 
\begin{lemma}\label{alemma:9}
The $j$-the coordinate of $\nabla q(\beta; \beta)$ is $\xi$-sub-exponential with 
\begin{equation}
    \xi = C_1\sqrt{\|\beta^*\|^2_{\infty}+\sigma^2},
\end{equation}
where $C_1$ is some absolute constant. Also, for fixed $j\in [d]$, each $\nabla_j q_i(\beta;\beta)$, where $i\in [n]$, is independent with others.
\end{lemma}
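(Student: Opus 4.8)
\textbf{Proof proposal for Lemma \ref{alemma:9}.} The plan is to read off the explicit form of the GMM gradient coordinate from (\ref{aeq:1}), namely $\nabla_j q(\beta;\beta) = (2w_\beta(y)-1)\,y_j - \beta_j$, and then feed it into the sub-Gaussian/sub-exponential toolkit recalled in the appendix. Two structural observations drive everything. First, since $w_\beta(y)\in(0,1)$ is a logistic output, the multiplier satisfies $2w_\beta(y)-1\in[-1,1]$, so $|(2w_\beta(y)-1)\,y_j|\le|y_j|$ pointwise. Second, $\beta_j$ is a deterministic shift, and Definition \ref{adef:2} concerns the \emph{centered} moment generating function, so the $-\beta_j$ term cancels upon centering and does not enter the parameter $\xi$. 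Hence it suffices to control the random variable $W:=(2w_\beta(y)-1)\,y_j$ and its centered version $W-\mathbb{E}W = \nabla_j q(\beta;\beta)-\mathbb{E}\nabla_j q(\beta;\beta)$.

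The key estimate is the coordinate-wise sub-Gaussianity of $y_j$. By the model (\ref{eq:4}), $y_j = Z\beta^*_j + V_j$. The part $Z\beta^*_j$ is almost surely bounded in absolute value by $|\beta^*_j|\le\|\beta^*\|_\infty$ and has mean zero, hence is sub-Gaussian with $\|Z\beta^*_j\|_{\psi_2}\le\|\beta^*\|_\infty$ (Definition \ref{adef:4}); the part $V_j\sim\mathcal N(0,\sigma^2)$ is sub-Gaussian with $\|V_j\|_{\psi_2}=O(\sigma)$ and is independent of $Z$. Applying Lemma \ref{alemma:6} to this sum of two independent zero-mean sub-Gaussians gives $\|y_j\|_{\psi_2}^2 \le C\big((\beta^*_j)^2+\sigma^2\big)\le C\big(\|\beta^*\|_\infty^2+\sigma^2\big)$. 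Pointwise domination $|W|\le|y_j|$ transfers this bound, $\|W\|_{\psi_2}\le\|y_j\|_{\psi_2}$, and since $p^{-1}\le p^{-1/2}$ for $p\ge1$ we get from Definitions \ref{adef:1}, \ref{adef:4} that $\|W\|_{\psi_1}\le\|W\|_{\psi_2}\le C\sqrt{\|\beta^*\|_\infty^2+\sigma^2}$, i.e. $W$ is sub-exponential with that norm.

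To finish, by Lemma \ref{alemma:4} the centered variable obeys $\|W-\mathbb{E}W\|_{\psi_1}\le 2\|W\|_{\psi_1}$, and Lemma \ref{alemma:1} then yields, for $|t|\le c/\|W-\mathbb{E}W\|_{\psi_1}$, the bound $\mathbb{E}\exp\!\big(t(W-\mathbb{E}W)\big)\le\exp\!\big(Ct^2\|W-\mathbb{E}W\|_{\psi_1}^2\big)$; absorbing the absolute constants into a single $C_1$ gives precisely the $\xi$-sub-exponential condition of Definition \ref{adef:2} for $\nabla_j q(\beta;\beta)$ with $\xi=C_1\sqrt{\|\beta^*\|_\infty^2+\sigma^2}$. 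For the independence claim, each $\nabla_j q_i(\beta;\beta)=(2w_\beta(y_i)-1)\,y_{i,j}-\beta_j$ is a fixed measurable function of the single sample $y_i$ alone; since $y_1,\dots,y_n$ are i.i.d., the variables $\{\nabla_j q_i(\beta;\beta)\}_{i\in[n]}$ are i.i.d.\ and in particular mutually independent. (Lemma \ref{lemma:4} is then an immediate corollary: a $\xi$-sub-exponential variable has centered second moment $O(\xi^2)$ by Lemma \ref{alemma:2}, which combined with the trivial bound on the mean term gives $\mathbb{E}_y(\nabla_j q(\beta;\beta))^2\le O(\|\beta^*\|_\infty^2+\sigma^2)$.)

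I expect the only genuinely delicate point to be the norm bookkeeping that produces a $\|\beta^*\|_\infty$ dependence rather than $\|\beta^*\|_2$: this is exactly why the decomposition must be carried out coordinate-by-coordinate, so that the Rademacher component contributes $|\beta^*_j|$ and not $\|Z\beta^*\|_2=\|\beta^*\|_2$, and why the boundedness of the logistic multiplier (so that $|W|\le|y_j|$, not just $\|W\|_{\psi_2}\lesssim\|\beta\|_2\|y\|_{\psi_2}$) is essential. The second thing to get right is recognizing that the affine term $-\beta_j$ is harmless for the $\xi$-sub-exponential property precisely because that property is stated for the centered variable; it would have to be tracked only if one needed a bound on the uncentered $\psi_1$-norm of $\nabla_j q(\beta;\beta)$.
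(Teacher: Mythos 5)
Your proof is correct, and it reaches the same bound as the paper by a genuinely different and more elementary route. The paper controls the centered MGF via Rademacher symmetrization (Lemma~\ref{alemma:7}) followed by the contraction principle (Lemma~\ref{alemma:8}), treating $v\mapsto(2w_\beta(y)-1)v$ as a $1$-Lipschitz map acting on $f(y_j)=y_j$, and then invokes the sub-Gaussian MGF bound for $\epsilon y_j$. You instead exploit the pointwise bound $|(2w_\beta(y)-1)y_j|\le|y_j|$ together with the monotonicity of the moment-based Orlicz norms in Definitions~\ref{adef:1} and~\ref{adef:4}, the elementary inequality $\|\cdot\|_{\psi_1}\le\|\cdot\|_{\psi_2}$, centering via Lemma~\ref{alemma:4}, and the MGF bound of Lemma~\ref{alemma:1}. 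Your route has a concrete advantage: the paper's invocation of Lemma~\ref{alemma:8} is not a literal application of that lemma as stated, because the ``Lipschitz function'' $\phi(v)=(2w_\beta(y)-1)v$ depends on the full random vector $y$ and not only on its argument $v=y_j$, so the contraction step requires extra care to justify; pointwise domination sidesteps this entirely since the multiplier is bounded by $1$ in absolute value. The symmetrization-plus-contraction machinery would earn its keep only if the multiplier were unbounded or if uniformity over a function class were needed, neither of which is the case here. Your coordinate-wise treatment of $y_j=Z\beta^*_j+V_j$ correctly produces the $\|\beta^*\|_\infty$ (rather than $\|\beta^*\|_2$) dependence, your handling of the deterministic shift $-\beta_j$ through centering is right, and the independence claim and the deduction of Lemma~\ref{lemma:4} via Lemma~\ref{alemma:2} match the paper.
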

If Lemma \ref{alemma:9} holds, then by Lemma \ref{alemma:2} we can get Lemma \ref{lemma:4}. 
\end{proof}
\begin{proof}[Proof of Lemma \ref{alemma:9}]
From (\ref{aeq:1}) it is oblivious that each $\nabla_j q_i(\beta;\beta)$, where $i\in [n], j\in [d]$, is independent with others. Next, we prove the property of sub-exponential for each coordinate. 

Note that $$\nabla_j q(\beta;\beta))= [2w_\beta(y)-1]y_{j}-\beta_j,$$ and $$\mathbb{E}_{Y}\nabla_j q(\beta;\beta))= \mathbb{E}_{Y}(2w_\beta(Y)Y_j-Y_j)-\beta_j.$$

By the symmetrization lemma in Lemma \ref{alemma:7}, we have the following for any $t>0$
\begin{equation}
        \mathbb{E}\{\exp(t|[\nabla_j q(\beta; \beta)-\mathbb{E}\nabla_j q (\beta; \beta)]|)\}\\ \leq  \mathbb{E}\{\exp(t|\epsilon [2w_\beta(y)-1]y_{j}|)\},
\end{equation}

where $\epsilon$ is a Rademacher random variable. 

Next, we use Lemma \ref{alemma:8} with $f(y_{j})=y_{j}$, $\mathcal{F}=\{f\}$, $\phi(v)=[2w_\beta(y)-1]v$ and $\phi(v)=\exp(u\cdot v)$. It is easy to see that $\phi$ is 1-Lipschitz. Thus,  by Lemma \ref{alemma:8} we have
\begin{equation}
    \mathbb{E}\{\exp(t|\epsilon [2w_\beta(y)-1]y_{j}|)\}\leq \mathbb{E}\{\exp[2t|\epsilon y_{j}|]\}. 
\end{equation}

By the formulation of the model, we have $y_{ j}=z \beta^*_{j}+v_{j}$, where $z$ is a Rademacher random variable and $v_{j}\sim \mathcal{N}(0, \sigma^2)$. It is easy to see that $y_{j}$ is sub-Gaussian and 
\begin{equation}
    \|y_{j}\|_{\psi_2}=\|z\cdot \beta^*_j+v_{j}\|_{\psi_2}\leq C\cdot \sqrt{\|z\cdot \beta_j\|^2_{\psi_2}+\|v_{j}\|^2_{\psi_2}}\\ \leq C'\sqrt{|\beta_j^*|^2+\sigma^2},
\end{equation}
for some absolute constants $C, C'$, where the last inequality is due to the facts that $\|z_j\beta_j^*\|_{\psi_2}\leq |\beta_j^*|$ and $\|v_{i,j}\|_{\psi_2}\leq C''\sigma^2$ for some $C''>0$. 

Since $|\epsilon y_{j}|=|y_{j}|$, $\|\epsilon y_{j}\|_{\psi_2}=\|y_{j}\|_{\psi_2}$ and $\mathbb{E}(\epsilon y_{j})=0$, by Lemma 5.5 in \citep{vershynin2010introduction} we have that  for any $u'$ there exists a constant $C^{(4)}>0$ such that 
\begin{equation}
    \mathbb{E}\{\exp(u' \cdot \epsilon \cdot y_{j})\}\leq \exp(u'^2\cdot C^{(4)} \cdot (|\beta|_j^2+\sigma^2)). 
\end{equation}
Thus, for any $t>0$ we get
\begin{equation}
      \mathbb{E}\{\exp(2t\cdot |\epsilon \cdot y_{j}|)\}\leq 2\exp(t^2\cdot C^{(5)}\cdot (|\beta|_j^2+\sigma^2))
\end{equation}
for some constant $C^{(5)}$. 
Therefore, in total we have the following  for some constant $C^{(6)}>0$
\begin{multline}
         \mathbb{E}\{\exp(t|[\nabla_j q(\beta; \beta)-\mathbb{E}\nabla_j q (\beta; \beta)]|)\}\\ \leq \exp(t^2\cdot C^{(6)}\cdot (|\beta|_j^2+\sigma^2))\leq \exp(t^2\cdot C^{(6)}\cdot (\|\beta^*\|_\infty^2+\sigma^2)).
\end{multline}

Combining this with Lemma \ref{alemma:4} and the definition,  we know that  $\nabla_j q(\beta; \beta)$ is $O(\sqrt{\|\beta^*\|_\infty^2+\sigma^2})$-sub-exponential. 
\end{proof}

\begin{proof}[{\bf Proof of Lemma \ref{lemma:6}}]
Just as in the proof of Lemma \ref{lemma:4}, we will show that  $\nabla_j q(\beta; \beta)$ is sub-exponential instead.  
\begin{lemma}\label{alemma:10}
For each $\beta\in \mathcal{B}$, the $j$-the coordinate of $\nabla q(\beta; \beta)$  is $\xi$-sub-exponential with 
\begin{equation}\label{eq:55}
    \xi = C\max\{\|\beta^*\|^2_2+\sigma^2, 1, \sqrt{d}\|\beta^*\|_2\},
\end{equation}
where $C>0$ is some absolute constant. Also, for fixed $j\in [d]$, each $\nabla_j q_i(\beta;\beta)$, where $i\in [n]$, is independent with others.
\end{lemma}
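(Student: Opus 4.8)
The plan is to mirror the proof of Lemma~\ref{alemma:9}: since $(x_i,y_i)$ are i.i.d.\ the independence claim is immediate, so the work is to control the sub-exponential parameter of a single coordinate. Recalling~(\ref{aeq:2}), write
\begin{equation*}
\nabla_j q(\beta;\beta)=\big(2w_\beta(x,y)-1\big)\cdot y\cdot x_j\;-\;x_j\langle x,\beta\rangle .
\end{equation*}
By Lemma~\ref{alemma:1} together with Lemma~\ref{alemma:4} it suffices to bound the $\psi_1$-norm of $\nabla_j q(\beta;\beta)$, which by the triangle inequality splits into the $\psi_1$-norms of the two summands; equivalently one can run exactly the symmetrization/contraction argument of Lemma~\ref{alemma:9} (apply Lemma~\ref{alemma:7} to center, then Lemma~\ref{alemma:8} to the first summand). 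I would bound the two summands separately, add, and invoke Lemma~\ref{alemma:4} once more to pass to the centered variable.

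For the first summand, $|2w_\beta(x,y)-1|\le 1$ holds pointwise, so $\big\|(2w_\beta-1)\,y\,x_j\big\|_{\psi_1}\le\|y\,x_j\|_{\psi_1}$ (or, following Lemma~\ref{alemma:9}, peel the factor off with the $1$-Lipschitz map $\phi_i(v)=(2w_\beta(x_i,y_i)-1)v$, which has $\phi_i(0)=0$). Since $Y=Z\langle\beta^*,X\rangle+V$ with $Z\langle\beta^*,X\rangle\sim\mathcal N(0,\|\beta^*\|_2^2)$ and $V\sim\mathcal N(0,\sigma^2)$ independent, Lemma~\ref{alemma:6} gives $\|y\|_{\psi_2}^2=O(\|\beta^*\|_2^2+\sigma^2)$, while $x_j\sim\mathcal N(0,1)$ has $\|x_j\|_{\psi_2}=O(1)$. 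Lemma~\ref{alemma:5} then yields $\|y\,x_j\|_{\psi_1}\le C\max\{\|y\|_{\psi_2}^2,\|x_j\|_{\psi_2}^2\}=O(\max\{\|\beta^*\|_2^2+\sigma^2,\,1\})$, which accounts for the first two terms inside the max in~(\ref{eq:55}).

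The remaining summand $x_j\langle x,\beta\rangle$ is the main obstacle: unlike the GMM case it is genuinely quadratic in $x$, and $\langle x,\beta\rangle$ is correlated with $x_j$, so Lemma~\ref{alemma:5} cannot be applied to a product of independent factors. I would instead use Cauchy--Schwarz, $|\langle x,\beta\rangle|\le\|\beta\|_2\,\|x\|_2$, so that $|x_j\langle x,\beta\rangle|\le\|\beta\|_2\,|x_j|\,\|x\|_2$; then, writing $\|x\|_2=\mathbb E\|x\|_2+(\|x\|_2-\mathbb E\|x\|_2)$ with $\mathbb E\|x\|_2\le\sqrt d$ and the fluctuation $O(1)$-sub-Gaussian (Gaussian concentration of the $1$-Lipschitz map $x\mapsto\|x\|_2$), one gets $\big\||x_j|\,\|x\|_2\big\|_{\psi_1}=O(\sqrt d)$ after splitting off the constant mean and applying Lemma~\ref{alemma:5} to the fluctuation part. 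Combined with $\|\beta\|_2\le(1+k)\|\beta^*\|_2$ for $\beta\in\mathcal B$, this produces the $\sqrt d\,\|\beta^*\|_2$ term. Summing the two bounds gives $\|\nabla_j q(\beta;\beta)\|_{\psi_1}=O(\max\{\|\beta^*\|_2^2+\sigma^2,\,1,\,\sqrt d\,\|\beta^*\|_2\})$; Lemma~\ref{alemma:4} and Lemma~\ref{alemma:1} convert this to the claimed $\xi$-sub-exponential bound~(\ref{eq:55}), and Lemma~\ref{alemma:2} then delivers the second-moment bound of Lemma~\ref{lemma:6}. The only delicate point is the $\sqrt d$ loss, which is exactly the price of bounding the quadratic form $x_j\langle x,\beta\rangle$ through $\|x\|_2\asymp\sqrt d$ rather than exploiting its finer coordinate structure.
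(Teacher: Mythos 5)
Your proposal is correct and follows the same overall decomposition as the paper (a bounded-weight term $(2w_\beta-1)\,y\,x_j$ plus the quadratic term $x_j\langle x,\beta\rangle$, each controlled through its $\psi_1$-norm), but it handles the quadratic term by a genuinely different route. The paper expands $[xx^T\beta]_j=\sum_{k=1}^d x_jx_k\beta_k$, applies Lemma~\ref{alemma:5} to each product $x_jx_k$, and sums via the triangle inequality to get $C\|\beta\|_1\leq C\sqrt{d}\|\beta\|_2$; you instead use Cauchy--Schwarz $|\langle x,\beta\rangle|\leq\|\beta\|_2\|x\|_2$ together with Gaussian concentration of $x\mapsto\|x\|_2$ around its mean $\leq\sqrt d$. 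Both arguments are valid and both lose the same $\sqrt d$ factor; the paper's is slightly more elementary (no concentration of the norm needed), while yours isolates more transparently where the $\sqrt d$ comes from. Two small remarks: (i) your worry that Lemma~\ref{alemma:5} ``cannot be applied to a product of correlated factors'' is unfounded --- the moment/Cauchy--Schwarz proof of that lemma never uses independence, which is also why your own application of it to the correlated pair $|x_j|$ and $\|x\|_2-\mathbb{E}\|x\|_2$ is legitimate; in fact applying it directly to $x_j$ and $\langle x,\beta\rangle$ would give the dimension-free bound $C\max\{1,\|\beta\|_2^2\}$, so both your bound and the paper's are loose here, though this only strengthens the stated lemma. (ii) For the first summand, your pointwise domination $|2w_\beta-1|\leq 1$ is a clean shortcut around the symmetrization-plus-contraction argument (Lemmas~\ref{alemma:7} and~\ref{alemma:8}) that the paper runs, and is valid because the $\psi_1$-norm is monotone under almost-sure domination of absolute values.
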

\begin{proof}[Proof of Lemma \ref{alemma:10}]
From (\ref{aeq:2}) it is oblivious that for fixed $j\in [d]$, each $\nabla_j q_i(\beta;\beta)$, where $i\in [n]$, is independent with others. Next, we prove the property of sub-exponential. 

Note that $\mathbb{E}\nabla_j q(\beta; \beta) = \mathbb{E} 2w_\beta(x, y)y\cdot x_j-\beta_j$. Thus, we have 
\begin{equation}\label{aeq:56}
       \nabla_j q(\beta; \beta)-\mathbb{E}\nabla_j q(\beta; \beta)= \underbrace{2w_\beta(x, y)y x_{j}- \mathbb{E}[]2w_\beta(x,y)y x_j]}_{A}\\ +\underbrace{[xx^T\beta-\beta]_j}_{B}- \underbrace{yx_{j}}_{C}.
\end{equation}

For term A and any $t>0$, we have 
\begin{equation}\label{aeq:57}
    \mathbb{E}\{\exp(t|A|)\}\leq \mathbb{E}\{\exp[t|2\epsilon w_\beta(x, y)y x_{j}|]\}.
\end{equation}
Using Lemma \ref{alemma:8} on  $f(yx_{j})=yx_{j}$, $\mathcal{F}=f$, $\phi_i(v)= 2w_\beta(x,y)v$ and $\phi(v)=\exp(uv)$, we have 
\begin{equation}\label{aeq:58}
     \mathbb{E}\{\exp[t|2\epsilon w_\beta(x, y)y x_{j}|]\leq \mathbb{E}\{\exp[4t|\epsilon y x_{j}|]\}.
\end{equation}
Note that since $y=z\langle \beta^*,x\rangle +v$ and  $\|z\langle \beta^*,x\rangle\|_{\psi_2}=\|\langle \beta^*,x\rangle\|_{\psi_2}\leq C\|\beta^*\|_2$ and $\|v\|_{\psi_2}\leq C'\sigma$ for some constants $C, C'>0$, by Lemma \ref{alemma:6} we know that  there exists a constant $C''>0$ such that 
\begin{equation}\label{aeq:59}
    \|y\|_{\psi_2}\leq C''\sqrt{\|\beta^*\|_2^2+\sigma^2}.
\end{equation}
Thus, by Lemma \ref{alemma:5} we have 
\begin{equation}\label{aeq:60}
    \|yx_{j}\|_{\psi_1}\leq \max\{C''^2(\|\beta^*\|_2^2+\sigma^2), C'''\}\\ \leq C_4\max\{\|\beta^*\|_2^2+\sigma^2,1\}. 
\end{equation}
For term B, we have 
\begin{equation}
    \mathbb{E}\{\exp[t |B|]\}=\mathbb{E}\{\exp[t |\sum_{k=1}^d x_{j}x_k\beta_k-\beta_j|]\},
\end{equation}
where $x_j, x_k\sim \mathcal{N}(0, 1)$. Now, by Lemma \ref{alemma:5} we have  $\|x_jx_k\beta_k\|_{\psi_1}\leq |\beta_k|C^{(5)}$ for some constant $C^{(5)}>0$. Thus, we get $\|\sum_{k=1}^d x_jx_k\beta_k\|_{\psi_1}\leq C^{(5)}\|\beta\|_1$. 

Also, we know that $\|\beta\|_1\leq \sqrt{d}\|\beta\|_2$. Furthermore, we have $\|\beta\|_2\leq \|\beta^*\|_2+\|\beta^*-\beta\|_2\leq O(\|\beta^*\|_2)$, since  $\beta\in \mathcal{B}$ (by assumption). From Lemma \ref{alemma:5}, we get $\|B\|_{\psi_1}\leq C^{(6)}\sqrt{d}\|\beta^*\|_2$ with some constant $C^{(6)}>0$.

 Thus, we know that there exist some constants $C^{(7)}>0$ and $C^{(8)}>0$ such that 
 
 \begin{align*}
 	\|\nabla_j q(\beta; \beta)-\mathbb{E}\nabla_j q(\beta; \beta)\|_{\psi_1} &\leq C^{(7)}\max\{\|\beta^*\|_2^2+\sigma^2, 1\}+ C^{(8)}\sqrt{d}\|\beta^*\|_2 \\
 	&\leq C^{(9)}\max\{\|\beta^*\|_2^2+\sigma^2, 1, \sqrt{d}\|\beta^*\|_2\}.
 \end{align*}
This means that $\nabla_j q(\beta; \beta)$ is  $O(\max\{\|\beta^*\|_2^2+\sigma^2, 1, \sqrt{d}\|\beta^*\|_2\})$-sub-exponential. 

\end{proof}

\end{proof}

\begin{proof}[{\bf Proof of Lemma \ref{lemma:8}}]
Just as in the proof of Lemma \ref{lemma:4}, we will show that  $\nabla_j q(\beta; \beta)$ is sub-exponential instead.  
\begin{lemma}\label{alemma:11}
For each $\beta\in \mathcal{B}$ and $j\in [d]$ ,  $\nabla_j q(\beta; \beta)$ is $\xi$-sub-exponential with 
\begin{align}
    \xi&=C[(1+k)(1+kr)^2\sqrt{d}\|\beta^*\|_2 \notag +\max\{(1+kr)^2, \sigma^2+\|\beta^*\|_2^2\}]  \notag \\
    & = O(\sqrt{d}\|\beta^*\|_2+ \sigma^2+\|\beta^*\|_2^2)
\end{align}
for some constant $C>0$.   Also, for fixed $j\in [d]$, each $\nabla_j q_i(\beta;\beta)$, where $i\in [n]$, is independent with others.
\end{lemma}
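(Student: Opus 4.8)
The plan is to follow the same route used for the GMM and MRM cases (Lemmas \ref{alemma:9} and \ref{alemma:10}). The independence of $\{\nabla_j q_i(\beta;\beta)\}_{i\in[n]}$ for a fixed $j$ is immediate: by (\ref{aeq:3}), (\ref{aeq:4}) and (\ref{aeq:100}) the quantity $\nabla_j q_i(\beta;\beta)$ is a fixed measurable function of the triple $(x_i,y_i,z_i)$ alone, and these triples are i.i.d.\ across $i$. The substantive part is to bound the sub-exponential ($\psi_1$) norm of $\nabla_j q(\beta;\beta)$: once $\|\nabla_j q(\beta;\beta)\|_{\psi_1}=O(\sqrt d\,\|\beta^*\|_2+\sigma^2+\|\beta^*\|_2^2)$ is in hand, Lemma \ref{alemma:4} gives $\|\nabla_j q(\beta;\beta)-\mathbb{E}\nabla_j q(\beta;\beta)\|_{\psi_1}\le 2\|\nabla_j q(\beta;\beta)\|_{\psi_1}$, which via Lemma \ref{alemma:1} is exactly the $\xi$-sub-exponential property of Definition \ref{adef:2} with the claimed $\xi$; Lemma \ref{alemma:2} then upgrades this to the second-moment bound of Lemma \ref{lemma:8}.

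First I would record the elementary sub-Gaussian estimates that feed the decomposition. Since $X\sim\mathcal N(0,I_d)$, each $x_k$ has $\|x_k\|_{\psi_2}=O(1)$; since $y=\langle X,\beta^*\rangle+V$, Lemma \ref{alemma:6} gives $\|y\|_{\psi_2}=O(\sqrt{\|\beta^*\|_2^2+\sigma^2})$; and $\beta\in\mathcal B$ forces $\|\beta\|_\infty\le\|\beta\|_2\le(1+k)\|\beta^*\|_2$. Crucially, Lemma \ref{lemma:7} here assumes an \emph{upper} SNR bound $\|\beta^*\|_2\le r\sigma$, which I would use to control the scalar ``conditional-mean'' coefficient $c:=\dfrac{y-\langle z\odot x,\beta\rangle}{\sigma^2+\|(1-z)\odot\beta\|_2^2}$: its numerator is sub-Gaussian of norm $O(\|\beta^*\|_2+\|\beta\|_2+\sigma)=O(\sigma)$ and its denominator is at least $\sigma^2$, so $\|c\|_{\psi_2}=O(1/\sigma)$. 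Hence each coordinate $(m_\beta)_k=z_kx_k+c\,(1-z_k)\beta_k$ is sub-Gaussian with $\|(m_\beta)_k\|_{\psi_2}=O(1)+O(1/\sigma)\cdot(1+k)\|\beta^*\|_2=O(1+kr)$, the second summand being the bounded scalar $(1-z_k)\beta_k$ times $c$.

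Next I would expand $\nabla_j q(\beta;\beta)=y\,(m_\beta)_j-(K_\beta\beta)_j$ with, from (\ref{aeq:100}),
\[
(K_\beta\beta)_j=(1-z_j)\beta_j+(m_\beta)_j\langle m_\beta,\beta\rangle-(1-z_j)(m_\beta)_j\,\langle (1-z)\odot m_\beta,\beta\rangle,
\]
and bound each piece in $\psi_1$-norm using Lemma \ref{alemma:5} and the triangle inequality. The term $y(m_\beta)_j$ is a product of two sub-Gaussians, hence sub-exponential with norm $O(\max\{\|y\|_{\psi_2}^2,\|(m_\beta)_j\|_{\psi_2}^2\})=O(\max\{\sigma^2+\|\beta^*\|_2^2,(1+kr)^2\})$; the scalar $(1-z_j)\beta_j$ is absorbed into $O((1+k)\|\beta^*\|_2)$; and the quadratic-in-$m_\beta$ part is handled coordinatewise, $(m_\beta)_j\langle m_\beta,\beta\rangle=\sum_k (m_\beta)_j(m_\beta)_k\beta_k$, so by Lemma \ref{alemma:5} and the triangle inequality its $\psi_1$-norm is $O\big((1+kr)^2\sum_k|\beta_k|\big)=O\big((1+kr)^2\|\beta\|_1\big)=O\big((1+k)(1+kr)^2\sqrt d\,\|\beta^*\|_2\big)$ after $\|\beta\|_1\le\sqrt d\,\|\beta\|_2$. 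This is precisely where the $\sqrt d$ enters, mirroring the treatment of the $xx^\top\beta$ term in Lemma \ref{alemma:10}; the last (subtracted) term obeys the same estimate since $(1-z)\odot m_\beta$ satisfies the same coordinatewise bounds. Summing the pieces and simplifying with $\|\beta^*\|_2\le r\sigma$ yields $\|\nabla_j q(\beta;\beta)\|_{\psi_1}=O\big((1+k)(1+kr)^2\sqrt d\,\|\beta^*\|_2+\max\{(1+kr)^2,\sigma^2+\|\beta^*\|_2^2\}\big)$, which is $O(\sqrt d\,\|\beta^*\|_2+\sigma^2+\|\beta^*\|_2^2)$.

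The main obstacle is the random missingness pattern $z$: it appears both inside the denominator of $c$ and inside all the products defining $m_\beta$ and $K_\beta$, so $\nabla_j q(\beta;\beta)$ is not a clean polynomial in Gaussians. I would handle this by conditioning on $z$ and checking that every bound above holds \emph{uniformly} over realizations of $z$ (each has $z_k\in\{0,1\}$ and $\|(1-z)\odot\beta\|_2\le\|\beta\|_2$), so the $\psi_1$-bound survives taking expectation over $z$; and I would stress that the upper-SNR hypothesis of Lemma \ref{lemma:7} is genuinely needed here, since without $\|\beta^*\|_2\le r\sigma$ a small $\sigma$ would blow up $c$ and the coefficient $(1+kr)$ would fail to be $O(1)$. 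If one prefers to center first, as in the GMM/MRM proofs, the symmetrization Lemmas \ref{alemma:7}--\ref{alemma:8} apply verbatim to peel off $\mathbb{E}\nabla_j q(\beta;\beta)$, but bounding the raw $\psi_1$-norm and then invoking Lemma \ref{alemma:4} is the cleaner route.
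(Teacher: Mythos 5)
Your proposal is correct and follows essentially the same route as the paper's proof: the same decomposition of $\nabla_j q(\beta;\beta)$ into the $y\,(m_\beta)_j$ term and the three pieces of $(K_\beta\beta)_j$, the same use of Lemma \ref{alemma:5} for products of sub-Gaussians, the same $\|\beta\|_1\le\sqrt d\,\|\beta\|_2$ step that produces the $\sqrt d$ factor, and the same centering via Lemma \ref{alemma:4}. The only difference is that you derive $\|(m_\beta)_j\|_{\psi_2}=O(1+kr)$ directly from the upper-SNR bound, whereas the paper imports this as Lemma \ref{alemma:12} (Lemma B.3 of \citep{wang2015high}); your derivation is a correct, self-contained substitute.
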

\end{proof}
\begin{proof}[Proof of Lemma \ref{alemma:11}]
From (\ref{aeq:3}) it is oblivious that  for fixed $j\in [d]$, each $\nabla_j q_i(\beta;\beta)$, where $i\in [n]$, is independent with others. Next, we prove the property of sub-exponential.

For simplicity, we use notations  $\bar{m}=m_\beta(x^{\text{obs}}, y)$, $\bar{m}=\beta(x^{\text{obs}}, y)$, $\bar{K}=K_\beta(x_i^{\text{obs}}, y)$, and $\bar{K}=K_\beta(x^{\text{obs}}, y)$. Then, we have 
\begin{multline}
    \nabla q(\beta; \beta) -\mathbb{E}\nabla q(\beta; \beta) =\underbrace{ m_\beta(x^{\text{obs}}, y)y-\mathbb{E} [m_\beta(x^{\text{obs}}, y)y]}_{A} \\ +\overbrace{ \big(K_\beta(x^{\text{obs}}, y)-\mathbb{E}{K_\beta(x^{\text{obs}}, y)}\big)\beta}^{B}.
\end{multline}
For the $j$-th coordinate of $A$, we have 
\begin{equation}
    A_j=  \bar{m}_j y-\mathbb{E} [\bar{m}_j y].
\end{equation}
We note that $\bar{m}_j$ is a zero-mean sub-Gaussian random variable with $\|\bar{m}_j\|_{\psi_2}\leq C(1+kr)$ (see Lemma B.3 in \citep{wang2015high})
\begin{lemma}\label{alemma:12}
Under the assumption of Lemma 6, for each $j\in [d]$, $\bar{m}_j$ is sub-Gaussian with mean zero and $\|\bar{m}_j\|_{\psi_2}\leq C(1+kr)$.
\end{lemma}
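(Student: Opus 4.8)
The plan is to condition on the missingness pattern and thereby reduce the claim to a uniform Gaussian variance bound. Write $z\in\{0,1\}^d$ for the random indicator of observed coordinates ($z_j=1$ iff $x_j$ is observed); since coordinates are dropped independently of the data, $z$ is independent of $(X,V)$, hence of $Y$. I would show that, conditionally on \emph{every} fixed value of $z$, the scalar $\bar m_j=[m_\beta(x^{\mathrm{obs}},y)]_j$ is a centered Gaussian with $\mathrm{Var}(\bar m_j\mid z)\le\Sigma^2$ for a constant $\Sigma^2=C(1+kr)^2$ independent of $z$ and $j$. Granting this, for every $t>0$ we get $\mathbb{P}(|\bar m_j|\ge t)=\mathbb E_z[\mathbb{P}(|\bar m_j|\ge t\mid z)]\le 2e^{-t^2/(2\Sigma^2)}$, so $\bar m_j$ is sub-Gaussian with $\|\bar m_j\|_{\psi_2}\le C(1+kr)$ by the standard equivalence between such a tail bound and the $\psi_2$-norm (Definitions \ref{adef:3}--\ref{adef:4}; see \citep{vershynin2010introduction}), and $\mathbb E\bar m_j=\mathbb E_z\mathbb E[\bar m_j\mid z]=0$.

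Two cases arise from (\ref{aeq:4}). If $z_j=1$ then $(1-z)_j=0$, so $\bar m_j=x_j\sim\mathcal N(0,1)$. If $z_j=0$ then $(z\odot x)_j=0$, and
\begin{equation*}
\bar m_j=\frac{\beta_j}{\sigma^2+\|(1-z)\odot\beta\|_2^2}\,\big(Y-\langle\beta,z\odot X\rangle\big),\qquad Y-\langle\beta,z\odot X\rangle=\langle z\odot X,\beta^*-\beta\rangle+\langle(1-z)\odot X,\beta^*\rangle+V.
\end{equation*}
Given $z$, the prefactor is a constant and the bracket is a fixed linear functional of the Gaussian vector $(X,V)$, so $\bar m_j\mid z$ is Gaussian with mean $0$ (because $\mathbb E X=0$, $\mathbb E V=0$); since $z\odot X$, $(1-z)\odot X$ and $V$ are mutually independent given $z$, the variances add:
\begin{equation*}
\mathrm{Var}(\bar m_j\mid z)=\frac{\beta_j^2}{\big(\sigma^2+\|(1-z)\odot\beta\|_2^2\big)^2}\Big(\|z\odot(\beta^*-\beta)\|_2^2+\|(1-z)\odot\beta^*\|_2^2+\sigma^2\Big).
\end{equation*}

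Bounding this quantity uniformly in $z$ is the real content. I would use three elementary inputs: (i) $z_j=0$ forces $\beta_j^2\le\|(1-z)\odot\beta\|_2^2$, hence $\beta_j^2/(\sigma^2+\|(1-z)\odot\beta\|_2^2)\le1$ and $\beta_j^2/(\sigma^2+\|(1-z)\odot\beta\|_2^2)^2\le1/\sigma^2$; (ii) $\beta\in\mathcal B$, i.e. $\|\beta-\beta^*\|_2\le k\|\beta^*\|_2$, which after a triangle inequality gives $\|(1-z)\odot\beta^*\|_2^2\le2\|(1-z)\odot\beta\|_2^2+2k^2\|\beta^*\|_2^2$; (iii) the SNR hypothesis $\|\beta^*\|_2\le r\sigma$. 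Combining these term by term turns the three summands into bounds of order $k^2r^2$, $1+k^2r^2$ and $1$, so $\mathrm{Var}(\bar m_j\mid z)\le C(1+k^2r^2)\le C'(1+kr)^2$ with $C,C'$ absolute; the observed case contributes variance $1$ and is absorbed. The main obstacle is precisely this bookkeeping: controlling $\beta_j^2/(\sigma^2+\|(1-z)\odot\beta\|_2^2)^2$ and the cross term $\|(1-z)\odot\beta^*\|_2^2$ uniformly over patterns with arbitrarily many missing coordinates, which forces one to pass repeatedly between $\beta$ and $\beta^*$ using $\beta\in\mathcal B$.
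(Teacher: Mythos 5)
Your argument is correct, but it is worth noting that the paper itself does not prove this lemma at all: it simply imports it as Lemma B.3 of \citep{wang2015high}, so there is no in-paper proof to compare against. What you supply is a valid self-contained derivation. The conditioning on the missingness pattern $z$ is the right move: given $z$, the coordinate $\bar m_j$ is exactly $x_j\sim\mathcal N(0,1)$ when $z_j=1$, and when $z_j=0$ it is a deterministic multiple of the Gaussian linear functional $\langle z\odot X,\beta^*-\beta\rangle+\langle(1-z)\odot X,\beta^*\rangle+V$, whose three pieces are conditionally independent, so your variance formula is exact. The three bookkeeping inputs all check out: $z_j=0$ does force $\beta_j^2\le\|(1-z)\odot\beta\|_2^2$, the triangle inequality with $\beta\in\mathcal B$ gives the stated bound on $\|(1-z)\odot\beta^*\|_2^2$, and the RMC regime indeed assumes the \emph{upper} SNR bound $\|\beta^*\|_2\le r\sigma$ (Lemma \ref{lemma:7}), which is what makes $\|\beta^*\|_2^2/\sigma^2\le r^2$ available; the resulting uniform conditional variance bound $C(1+k^2r^2)\le C'(1+kr)^2$ is correct, and passing from a mixture of centered Gaussians with uniformly bounded variance to a $\psi_2$-norm bound via the tail characterization (Definitions \ref{adef:3}--\ref{adef:4}) is standard. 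The one thing I would make explicit is the final tail-to-$\psi_2$ step (a mixture over $z$ of Gaussians with variance at most $\Sigma^2$ satisfies the Definition \ref{adef:3} tail with parameter $\Sigma^2$, hence has $\psi_2$-norm $O(\Sigma)$), but this is a one-line appeal to \citep{vershynin2010introduction} and does not constitute a gap.
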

Thus, by Lemma \ref{alemma:5} we have 
\begin{equation}
    \|\bar{m}_j y\|_{\psi_1}\leq C\max\{\|\bar{m}_j\|_{\psi_2}^2, \|y\|_{\psi_2}^2\} \\ 
    \leq C'\max\{(1+kr)^2, \sigma^2+\|\beta^*\|_2^2\},
\end{equation}
where the last inequality is due to the fact that $y=\langle \beta^*, x\rangle+v$. Thus,  $\|y\|_{\psi_2}^2\leq C_3(\|\langle \beta^*, x\rangle\|_{\psi_2}^2+\|v\|_{\psi_2}^2)$ for some $C_3$. 

For term B, we have 
\begin{equation}\label{aeq:65}
    \bar{K}_j= \underbrace{(1-z_{j})\beta_j}_{C}+\underbrace{\sum_{k=1}^d\bar{m}_j\bar{m}_k\beta_k}_{D}-\underbrace{\sum_{k=1}^d[(1-z_{j})\bar{m}_j][(1-z_{k})\bar{m}_k]\beta_k}_{E}. 
\end{equation}
For term C, we have the following (by Example 5.8 in \citep{vershynin2010introduction})
\begin{equation}
  \|  (1-z_{j})\beta_j\|_{\psi_2}\leq |\beta_j|\leq \|\beta\|_{\infty}\leq (1+k)\sqrt{s}\|\beta^*\|_2. 
\end{equation}
For term D, by Lemma \ref{alemma:12} and \ref{alemma:5} we have 
\begin{equation}
    \|\sum_{k=1}^d\bar{m}_j\bar{m}_k\beta_k\|_{\psi_1}  \leq \sum_{k=1}^d|\beta_k|\|\bar{m}_j\bar{m}_k\|_{\psi_1} 
    \leq \sum_{k=1}^d|\beta_k|C^2(1+kr)^2\leq C_4 (1+kr)^2\|\beta\|_1.
\end{equation}
Since $\beta\in \mathcal{B}$, we get $\|\beta\|_1\leq \sqrt{d}\|\beta\|_2\leq (1+k)\sqrt{d}\|\beta^*\|_2$. Thus, we have 
\begin{equation}
     \|\sum_{k=1}^d\bar{m}_j\bar{m}_k\beta_k\|_{\psi_1}\leq C_4\sqrt{s}(1+kr)^2\|\beta^*\|_2. 
\end{equation}
For term E, since $1-z\in [0,1]$, we have $\|(1-z_{j})\bar{m}_j\|_{\psi_2}\leq \|\bar{m}_j\|_{\psi_2}\leq C(1+kr)$. Hence, by Lemma \ref{alemma:5} we get  
\begin{align}
   \| \sum_{k=1}^d[(1-z_{j})\bar{m}_j][(1-z_{k})\bar{m}_k]\beta_k\|_{\psi_1}\notag 
   &\leq \sum_{k=1}^d|\beta_k| \|[(1-z_{j})\bar{m}_j][(1-z_{k})\bar{m}_k]\|_{\psi_1} \nonumber \\
   &\leq \sum_{k=1}^d|\beta_k| C(1+kr)^2\leq C_6(1+kr)^2\sqrt{s}\|\beta^*\|_2.
\end{align}
This gives us  
\begin{equation}
    \|\bar{K}_j\|_{\psi_1}\leq C_7\sqrt{s}(1+k)(1+kr)^2\|\beta^*\|_2.
\end{equation}
By Lemma \ref{alemma:4}, we get
\begin{align*}
        &\|\nabla_j q(\beta; \beta)- \mathbb{E}\nabla_j q(\beta; \beta)\|_{\psi_1} \leq 2\|\nabla_j q(\beta; \beta)\|_{\psi_1}\\ &\leq  C_8[(1+k)(1+kr)^2\sqrt{s}\|\beta^*\|_2+
     +\max\{(1+kr)^2, \sigma^2+\|\beta^*\|_2^2\}]. 
\end{align*}

\end{proof}
\bibliographystyle{spbasic}    
\bibliography{dpem}

\end{document}